\def\UrlAlphabet{%
      \do\a\do\b\do\c\do\d\do\e\do\f\do\g\do\h\do\i\do\j%
      \do\k\do\l\do\m\do\n\do\o\do\p\do\q\do\r\do\s\do\t%
      \do\u\do\v\do\w\do\x\do\y\do\z\do\A\do\B\do\C\do\D%
      \do\E\do\F\do\G\do\H\do\I\do\J\do\K\do\L\do\M\do\N%
      \do\O\do\P\do\Q\do\R\do\S\do\T\do\U\do\V\do\W\do\X%
      \do\Y\do\Z}
\def\UrlDigits{\do\1\do\2\do\3\do\4\do\5\do\6\do\7\do\8\do\9\do\0}
\g@addto@macro{\UrlBreaks}{\UrlOrds}
\g@addto@macro{\UrlBreaks}{\UrlAlphabet}
\g@addto@macro{\UrlBreaks}{\UrlDigits}
\algnewcommand\algorithmicforeach{\textbf{for}}
\newtheorem{theorem}{Theorem}[section]
\newtheorem{assumption}{Assumption}[section]
\renewcommand*{\thefootnote}{\fnsymbol{footnote}}  
\journal{Information Sciences}
\begin{document}

\begin{frontmatter}



\title{On the Discriminability of Self-Supervised Representation Learning}


\author[a,b]{Zeen~Song\textsuperscript{\dag}} 
\author[a,b]{Wenwen~Qiang\textsuperscript{\dag*}} 
\author[a,b]{Changwen~Zheng}
\author[c]{Fuchun~Sun}
\author[d]{Hui~Xiong}

\affiliation[a]{organization={University of Chinese Academy of Sciences},
            city={Beijing},
            postcode={100190},
            country={China}}

\affiliation[b]{organization={National Key Laboratory of Space Integrated Information System, Institute of Software Chinese Academy of Sciences},
            city={Beijing},
            postcode={100190},
            country={China}}

\affiliation[c]{organization={Department of Computer Science and Technology, Tsinghua University},
            city={Beijing},
            postcode={100190},
            country={China}}

\affiliation[d]{organization={Hong Kong University of Science and Technology},
            country={China}}

\begin{abstract}
Self-supervised learning (SSL) has recently shown notable success in various visual tasks. However, in terms of discriminability, SSL is still not on par with supervised learning (SL). This paper identifies a key issue, the ``crowding problem," where features from different classes are not well-separated, and there is high intra-class variance. In contrast, SL ensures clear class separation. Our analysis reveals that SSL objectives do not adequately constrain the relationships between samples and their augmentations, leading to poorer performance in complex tasks. 
We further establish a theoretical framework that connects SSL objectives to cross-entropy risk bounds, explaining how reducing intra-class variance and increasing inter-class separation can improve generalization.
To address this, we propose the Dynamic Semantic Adjuster (DSA), a learnable regulator that enhances feature aggregation and separation while being robust to outliers. Comprehensive experiments conducted on diverse benchmark datasets validate that DSA leads to substantial gains in SSL performance, narrowing the performance gap with SL. The source code is released at: ~\href{https://github.com/ZeenSong/DSA}{https://github.com/ZeenSong/DSA}.
\let\thefootnote\relax
\footnotetext{\textsuperscript{\dag} Equal contribution.}
\footnotetext{\textsuperscript{*} Corresponding author.}
\end{abstract}



\begin{keyword}
self-supervised learning, representation learning, generalization bound.


\end{keyword}

\end{frontmatter}



\section{Introduction}
\label{sec:introduction}

Learning discriminative feature representations in the absence of supervised signals has long been a prominent and widely explored research area in machine learning. Recently, self-supervised learning (SSL) has garnered significant attention due to its remarkable performance on various downstream tasks %
{}%
{{\cite{IS_SSL1}}}, including image classification %
{%
{{\cite{IS_SSL2}}}, object detection \cite{grillBootstrapYourOwn2020}, semantic segmentation \cite{chenExploringSimpleSiamese2021}, and transfer learning %
{}%
{{\cite{qiang2021robust}}}.

Despite the continuous improvement in SSL performance 
{\cite{gui2024survey}}, a notable gap persists between SSL and supervised learning (SL), especially in tasks that require fine-grained discrimination. Understanding the underlying reasons for this gap remains an ongoing challenge. 

To delve deeper into this issue, we conduct  experiments to illustrate the fundamental differences in the data characteristics learned by supervised and self-supervised methods. Specifically, we present a comparative analysis of five well-known SSL methods: SimCLR \cite{chenSimpleFrameworkContrastive2020}, BYOL \cite{grillBootstrapYourOwn2020}, Barlow Twins \cite{zbontarBarlowTwinsSelfSupervised2021}, SwAV \cite{caronUnsupervisedLearningVisual2020}, and MAE \cite{he2022masked}. For comparison, we also visualize the features learned by a supervised method. 

The  {illustration} of features  {produced by} these methods on the ImageNet dataset~\cite{dengImageNetLargescaleHierarchical2009}  {can be seen in} Fig.~\ref{fig:tsne}. From Fig.~\ref{fig:tsne_simclr}  {through} \ref{fig:tsne_supervised}, we observe that features obtained by both SSL and SL methods exhibit clustering characteristics, meaning points of the same class are grouped together. However, SSL methods display a large intra-class variance, which causes points at the edges of different classes to overlap, a phenomenon we call the \textbf{crowding problem}. In contrast, SL methods not only produce fine-grained intra-class features but also show clear separation between different classes. We also provide quantified evidence in Table \ref{tab:quantify}.

\begin{figure*}[t]
     \centering
     \subfigure[{SimCLR}]{\includegraphics[width=0.3\textwidth]{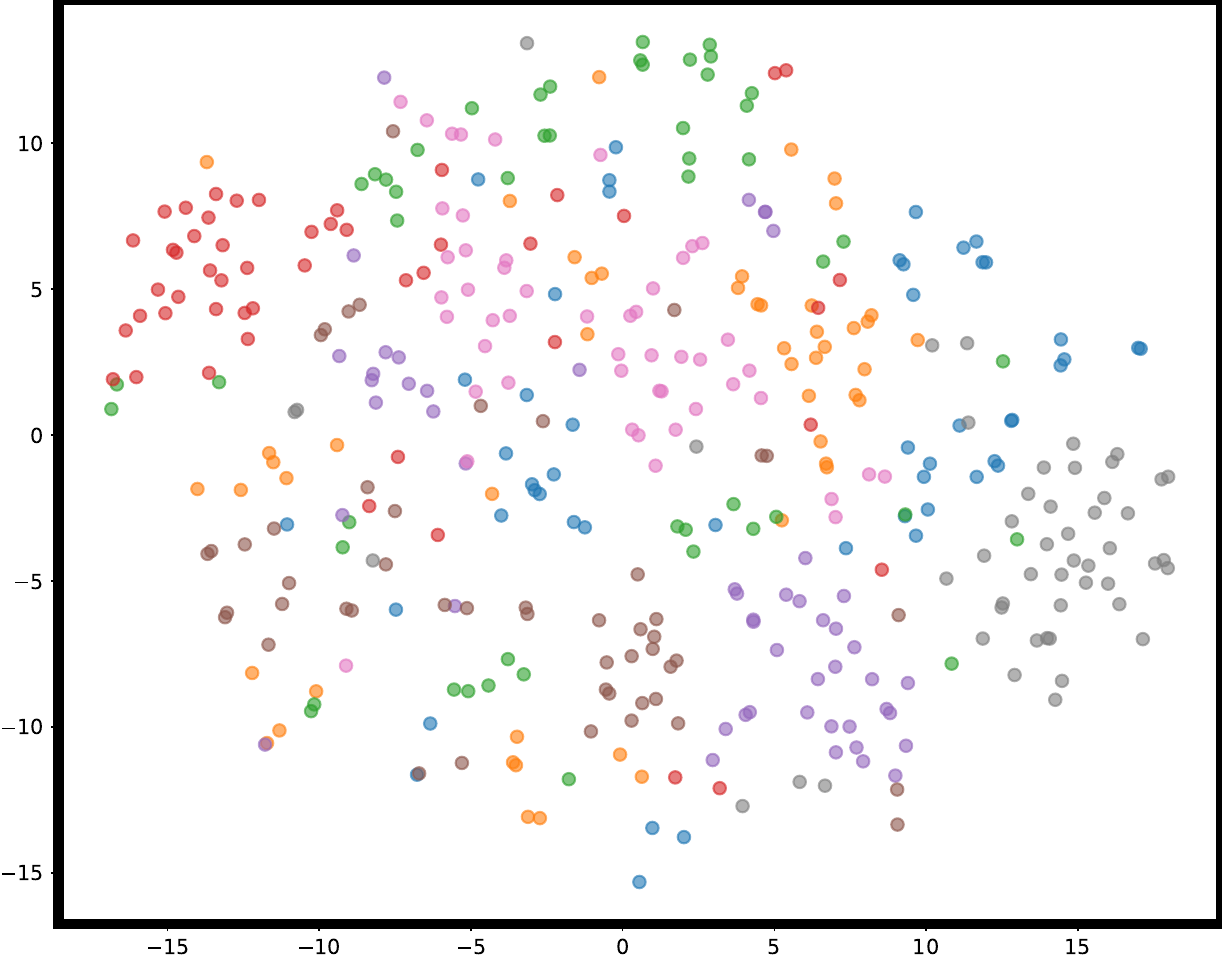}\label{fig:tsne_simclr}}
     \subfigure[{BYOL}]{\includegraphics[width=0.3\textwidth]{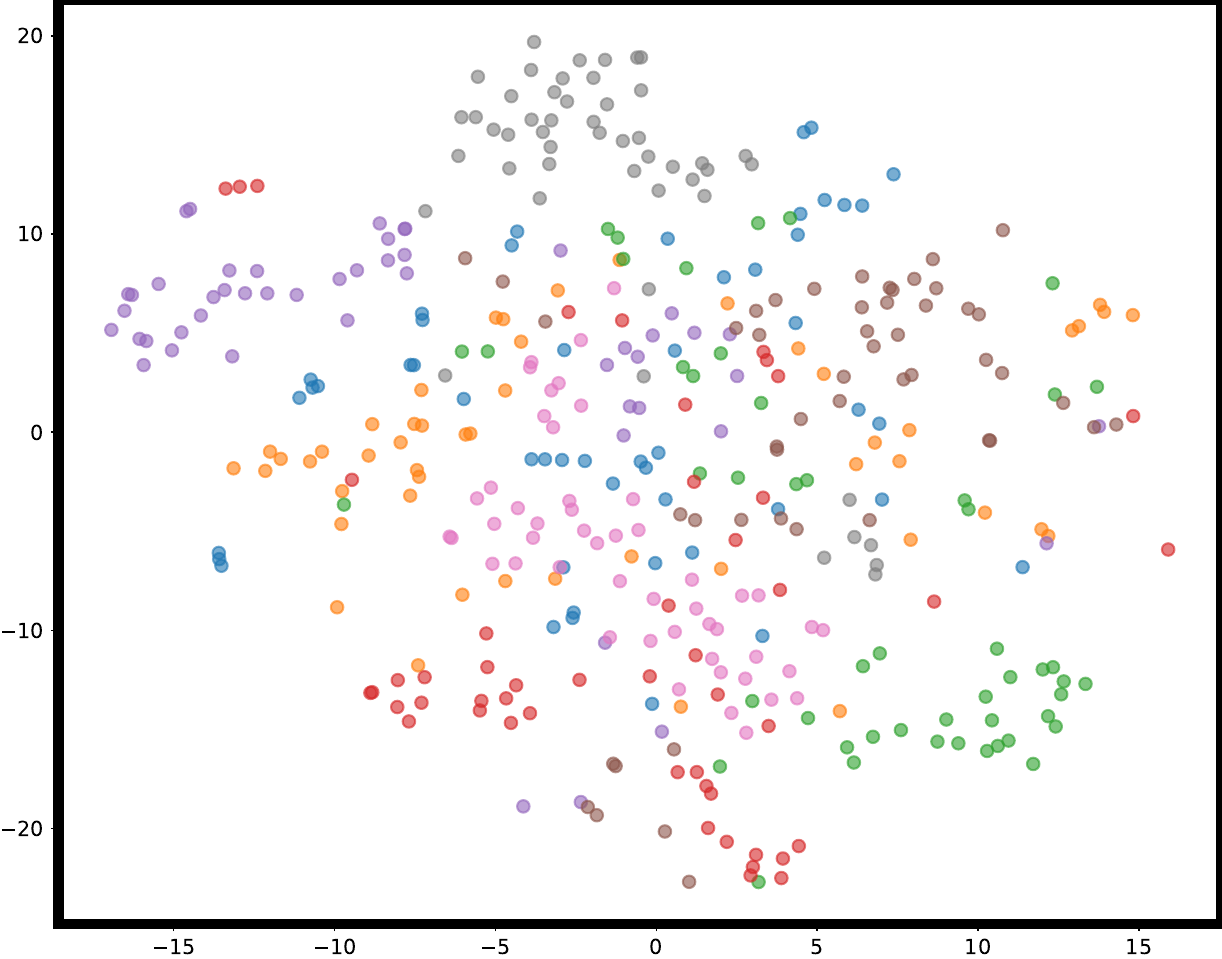}\label{fig:tsne_byol}}
     \subfigure[{Barlow Twins}]{\includegraphics[width=0.3\textwidth]{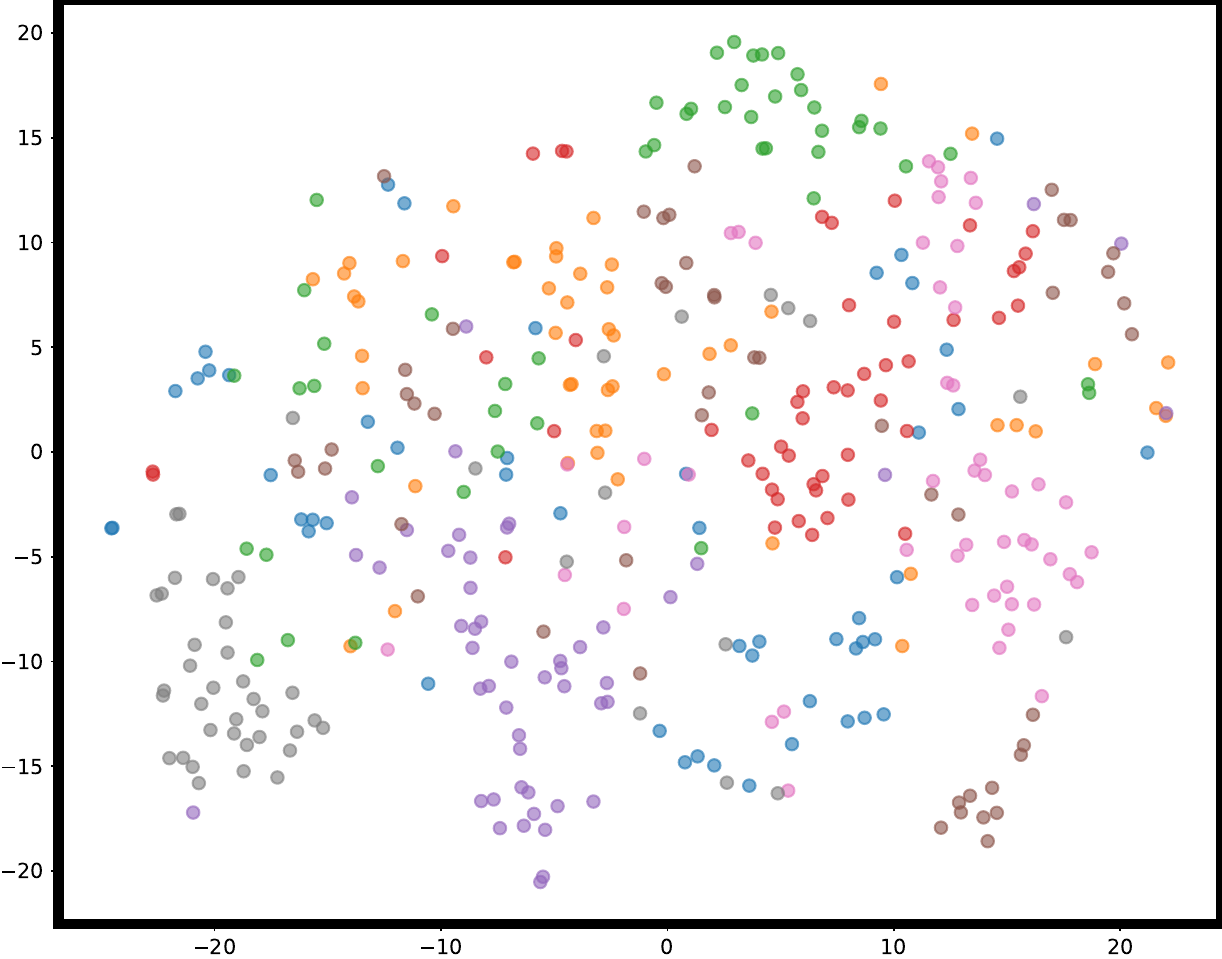}\label{fig:tsne_barlow}}
     \subfigure[{SwAV}]{\includegraphics[width=0.3\textwidth]{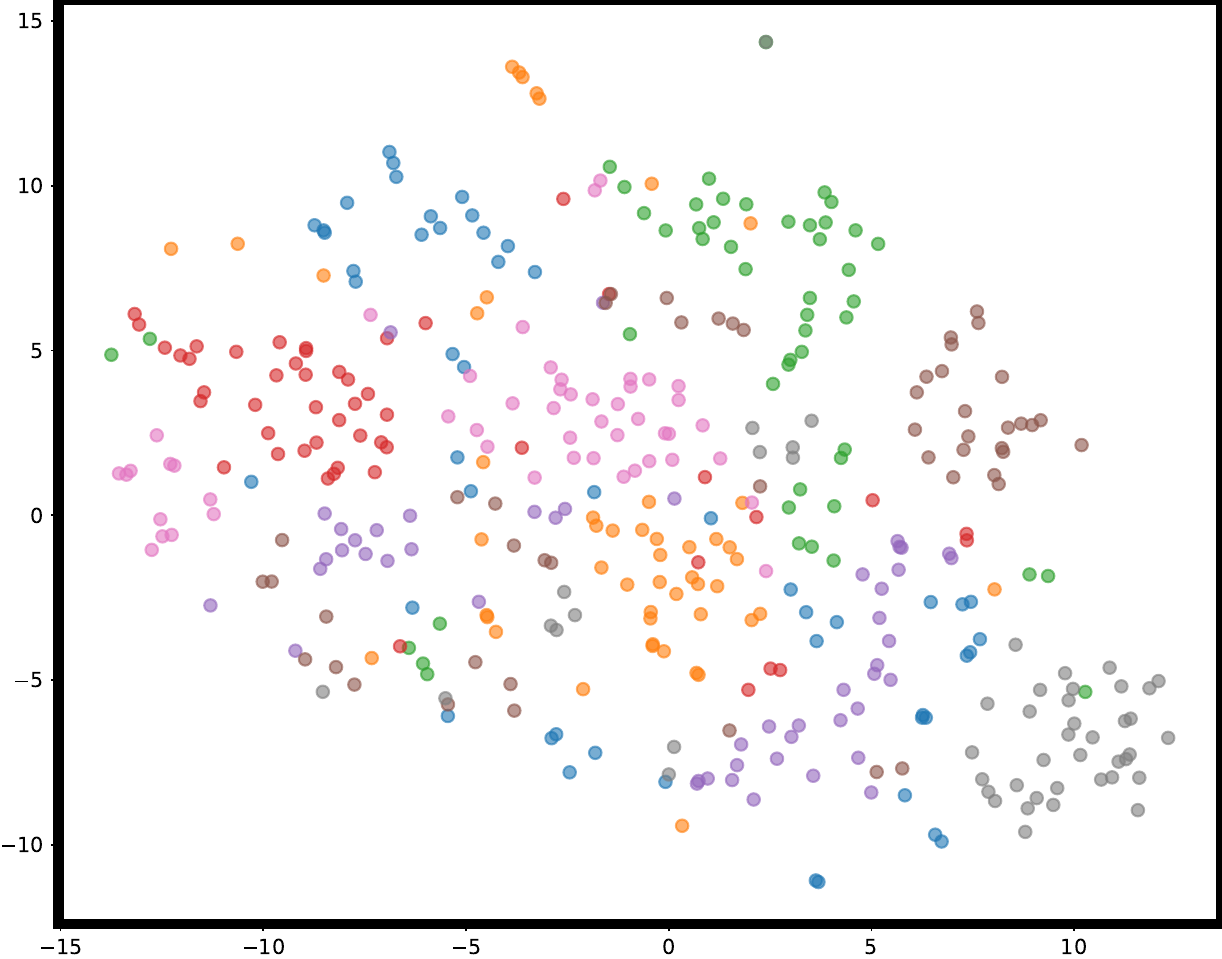}\label{fig:tsne_swav}}
     \subfigure[{MAE}]{\includegraphics[width=0.3\textwidth]{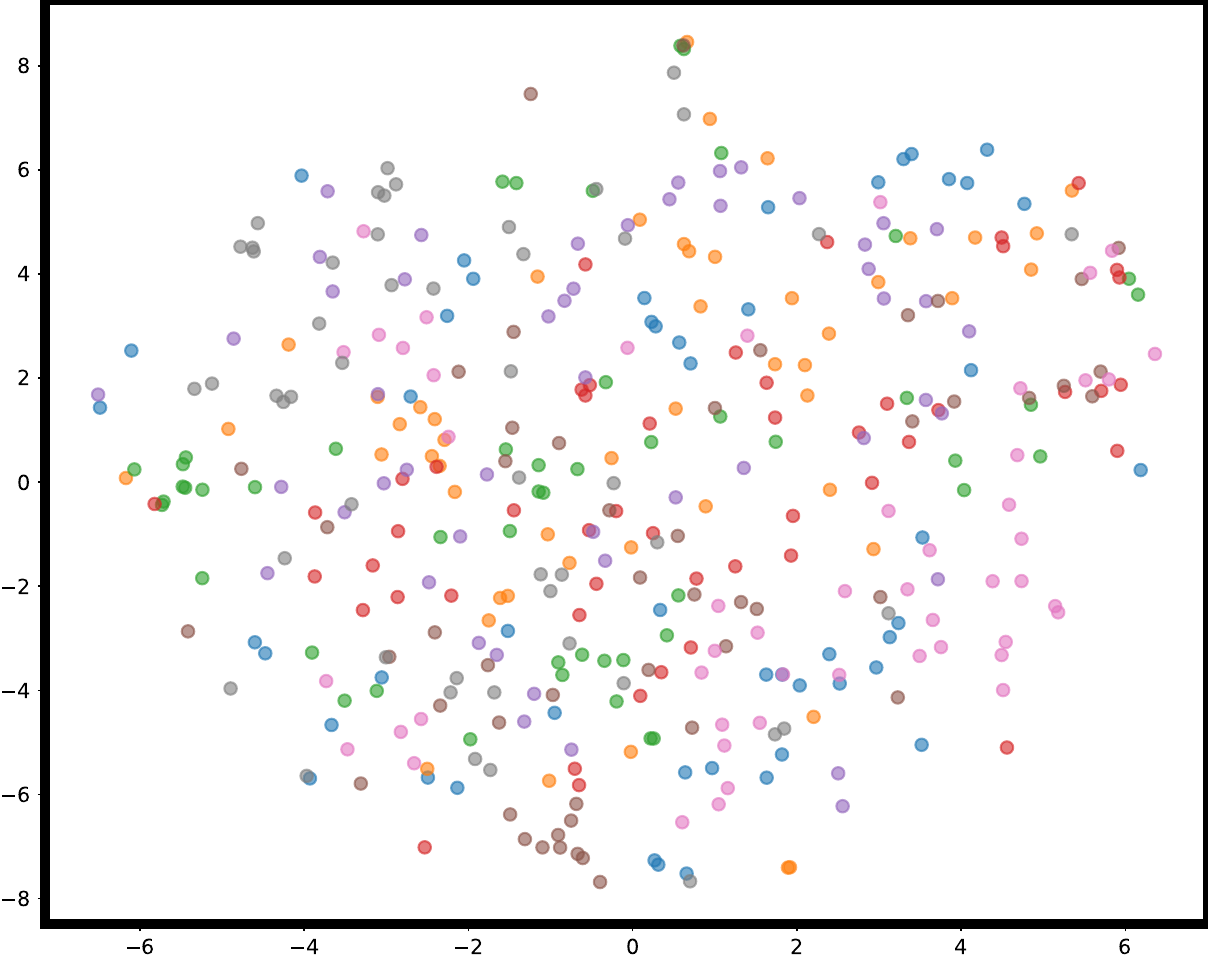}\label{fig:tsne_mae}}
     \subfigure[{Supervised}]{\includegraphics[width=0.3\textwidth]{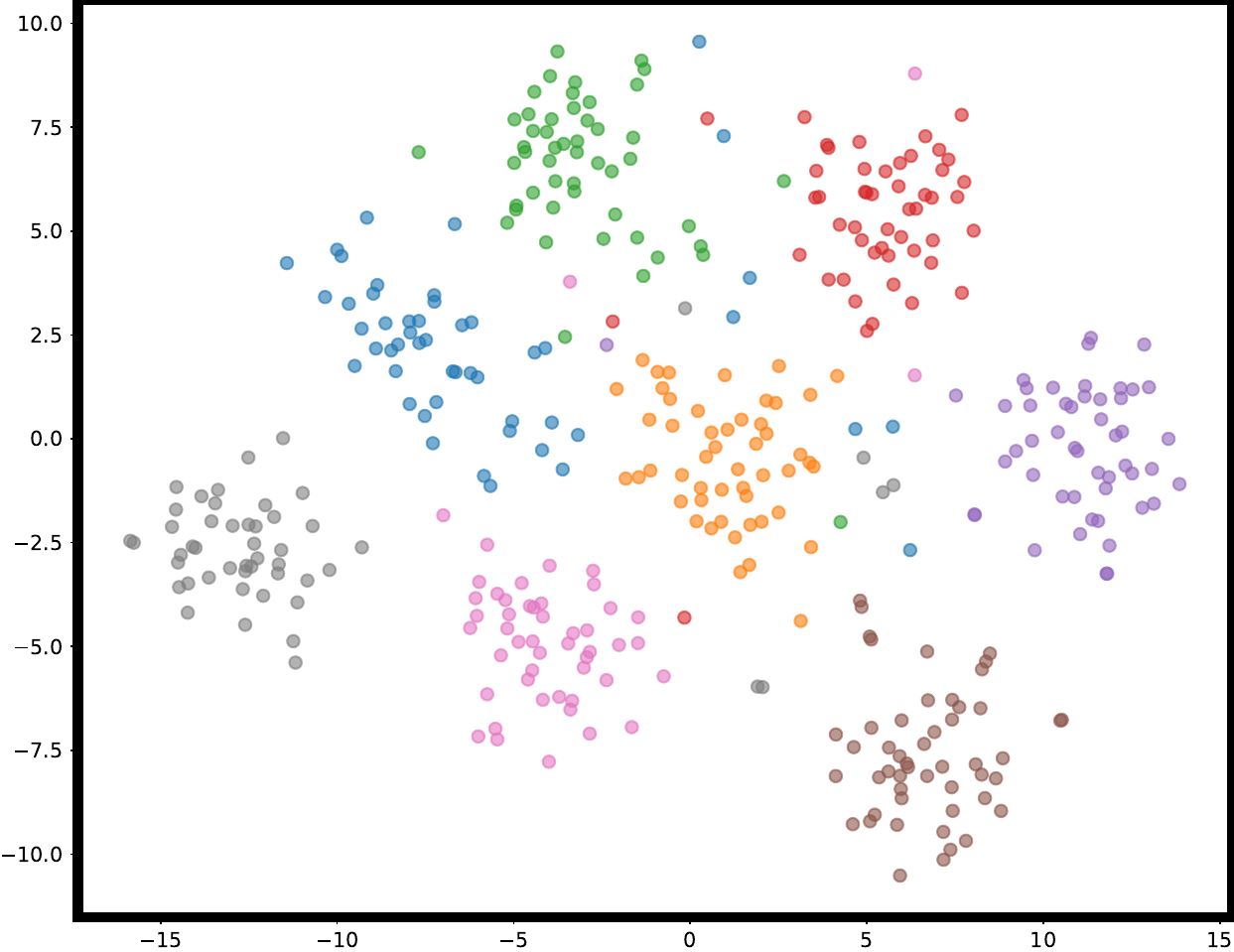}\label{fig:tsne_supervised}}
    \caption{Data distribution visualization based on 8 random classes of the test set of ImageNet in the feature space. (a) - (e) corresponds to the visualization results of the self-supervised method while (f) corresponds to the visualization results of the supervised method. We can observe that the \textbf{crowding problem} is present in SimCLR, BYOL, Barlow Twins, SwAV, and MAE. }
    \label{fig:tsne}
\end{figure*}

We further interpret the observations mentioned earlier from both empirical and theoretical perspectives. For empirical analysis, we find that SSL methods lack a specific component in their objective function that explores the relationships between different samples. In contrast, supervised methods leverage annotation information to  {promote intra-class compactness and inter-class dispersion within the learned representation space}.
{For theoretical analysis, we establish a novel theoretical framework explicitly connecting SSL objectives to the generalization error bound of supervised classification, demonstrating that %
{while}  minimizing intra-class variance {and at the same time} maximizing inter-class distance %
can effectively  {tighten} the upper bound on classification risk. Based on these findings, we conclude that the performance gap between SL and SSL methods may be attributed to the SSL methods' limited exploration of the dynamics between samples. Specifically, the objective function of SSL methods lacks a term that constrains points of similar semantics to be closer and points of dissimilar semantics to be further apart in the feature space. These insights provide valuable guidance for designing more effective regularization methods to close the performance gap between SSL and SL.}

 {Motivated by the above observations,} we propose  ``Dynamic Semantic Adjuster'' (DSA), {a learnable regulator} that can be seamlessly integrated into existing SSL methods . 
DSA  {is composed of two core components: an arranging module and a scoring module.}  
The arranging module aggregates similar samples while effectively distinguishing dissimilar ones in the feature space using a learnable regulator matrix.  
However, due to the poor performance of the feature extractor in the initial stages of training, the regulator matrix may be inaccurate, leading to incorrect aggregation and separation. To address this, we propose the scoring module, which aims to ensure that the regulator matrix preserves the local structure of the input sample space. This enhancement allows the matrix to more effectively explore similarity and dissimilarity in the early stages of training. Additionally, the scoring module ensures that the proposed method remains robust against outliers.

 {The rest of this paper is organized as follows.}  
Section~\ref{sec:relate}  {reviews} recent advances in SSL approaches.  
Section~\ref{sec:Preli} formalizes a unified framework for SSL and introduces the notation used throughout the paper.  
Section~\ref{sec:moti_anal} presents motivating experiments along with empirical and theoretical analyses.  
Section~\ref{sec:method} describes the proposed DSA  {in depth}.  
Section~\ref{sec:exp} reports comprehensive experiments on image and video benchmarks, including ablation studies of DSA.  
Finally, Section~\ref{sec:conclu} concludes the paper, and Section~\ref{sec:limit} discusses current limitations and outlines directions for future research. The main contributions are summarized as:
\begin{itemize}
    \item We show for the first time that existing SSL methods suffer from the crowding problem. We empirically analyze the cause of the crowding problem as the absence of a term in the objective function of SSL that can explicitly explore relationships between augmentations generated from samples of different ancestors.
    \item We provide a theoretical analysis that connects self-supervised learning, representation learning, and generalization bounds, demonstrating that the performance gap between SSL and SL primarily stems from the inability of SSL methods to effectively capture the aggregation and separation between augmentations that are generated by samples of different ancestors.
    \item We propose a novel method called ``Dynamic Semantic Adjuster" (DSA), which can make points of the same class cluster and points of different classes separate from each other. Notably, our method is insensitive to outliers and can be seamlessly integrated into existing SSL models, facilitating its practical applicability.
    \item We provide extensive empirical evaluations to substantiate the efficacy of the proposed DSA in improving the performance of various state-of-the-art SSL methods across diverse tasks and datasets.
\end{itemize}

\section{Related Work}
\label{sec:relate}
In this section, we briefly review the SSL methods, especially those that also aim to analyze the discriminability of SSL. We also review the SSL methods most relevant to our work and highlight the differences between these methods and ours.

{\textbf{Self-Supervised learning}. According to different learning paradigms, the SSL methods can be categorized into two classes, namely the augmentation-based methods and reconstruction-based methods \cite{gui2024survey}.}

\textit{Augmentation based}.
The core idea behind augmentation-based methods is to create different views of a single sample through data augmentation while simultaneously making the representations of these different views more similar \cite{gui2024survey}. One of the most successful augmentation-based methods is SimCLR \cite{chenSimpleFrameworkContrastive2020}, which utilizes the InfoNCE loss to group views from the same sample and repel views from different samples. One of the main drawbacks of SimCLR is the requirement for a large number of negative samples, i.e., views from different samples. Without sufficient negative samples, the problem of collapse can occur, where the shared representation ignores the input and remains constant. To address this limitation, negative-free methods are proposed. Recent methods such as BYOL \cite{grillBootstrapYourOwn2020}, SimSiam \cite{chenExploringSimpleSiamese2021}, and DINO \cite{caron2021emerging} have shown that collapse can be avoided within a knowledge distillation framework. Another line of negative-free methods, such as Barlow Twins \cite{zbontarBarlowTwinsSelfSupervised2021}, W-MSE \cite{ermolov2021whitening}, and VICReg \cite{Vicregl}, propose to prevent collapse by maximizing the information content of the embeddings.

\textit{Reconstruction based}.
Reconstruction-based SSL methods attempt to learn useful representations by reconstructing the original samples. Traditionally, an autoencoder takes the whole sample as input, feeds it into the encoder to obtain the latent representation, and then reconstructs the sample with a decoder.  Recently, inspired by the success of mask modeling in natural language processing, the reconstruction method with mask modeling has gained significant attention. Specifically, BEiT \cite{bao2021beit} and masked autoencoder (MAE) \cite{he2022masked} randomly mask the contents of visual samples and aim to reconstruct the whole sample using only the unmasked parts.

We demonstrate in Section \ref{sec:Preli} that current SSL methods share a unified objective and our proposed DSA is a plug-and-play module that can be integrated with any SSL methods.

\textbf{Discriminability Analysis}.
Discriminability refers to the easiness of separating feature representations from different categories. The concept of learning discriminative representations was first introduced by Linear Discriminant Analysis (LDA) \cite{balakrishnama1998linear}, which aims to find a projection into a lower-dimensional space that maximizes the separation between multiple classes while retaining as much class-discriminatory information as possible. Various studies have investigated discriminability in the context of SSL methods. For example, Saunshi et al. \cite{saunshiTheoreticalAnalysisContrastive2019} connect the SSL objective with supervised downstream error through instance discrimination. Chen et al. \cite{chen2022learning} proposed quantifying discriminability in SSL using the min-max distance ratio. In our research, we theoretically demonstrate that the downstream classification error risk is bounded by intra-class variance and inter-class difference, highlighting that the discriminability of feature representations significantly impacts downstream performance.

\textbf{SSL with Clustering}.
Clustering aims to group semantically similar samples. Traditionally, this unsupervised learning algorithm has been extensively used for small-scale, low-dimensional unlabeled data \cite{bishop2006pattern}. In recent years, several studies have integrated clustering algorithms with  {SSL \cite{caronUnsupervisedLearningVisual2020, liPrototypicalContrastiveLearning2021}}. {These methods typically use a K-means-like paradigm, treating the number of centroids as a hyperparameter.} Our research introduces a learnable regulator designed to aggregate similar samples while distinguishing dissimilar ones, thus positioning our approach as a form of clustering. .  \textbf{Since  {the true number of classes is unknown} during the training phase, samples  {from} different classes may be clustered together,  
and samples  {from the same class may be assigned to different class centroids}.  
Therefore,  {the crowding problem can also arise}.  
In contrast, our method does not require a predefined number of clusters,  {but instead groups} similar samples in an online manner.  
Furthermore, our DSA approach not only clusters similar samples {effectively} but also separates dissimilar ones.}

\section{Preliminaries}
\label{sec:Preli}
In this section, we first present a unified framework for SSL methods and briefly introduce prominent SSL approaches, including SimCLR \cite{chenSimpleFrameworkContrastive2020}, BYOL \cite{grillBootstrapYourOwn2020}, Barlow Twins \cite{zbontarBarlowTwinsSelfSupervised2021}, and Masked Autoencoder \cite{he2022masked}.

Formally,  {consider} a minibatch of  {training samples} denoted,  
where $x_i$  {denotes}  {a specific sample} and $N$  {is} the  {total size of the batch}.  
The  {augmentation-based approaches perform} random data augmentations (e.g., random crop)  
 {to produce a pair of augmented samples from a randomly chosen $x_i$}, namely $x^1_i$ and $x^2_i$.  
Similarly, the reconstruction-based methods  {mask $x_i$ randomly to form views $x_i^1$ and $x^2_i$}.  
The augmented dataset is denoted {as $X_{tr}^{aug} = \{x_i^l\}_{i=1,\dots,N}^{l=1,2}$}.  
The samples in ${X_{tr}}$ are considered {to be the original counterparts of those in $X_{tr}^{aug}$}.  
The augmented dataset $X_{tr}^{aug}$ is then fed  {into a feature encoder $f$ that produces representations $r_i^l = f(x_i^l)$}, where $i \in \{1,\dots,N\}$ and $l \in \{1,2\}$.  
A projection head $f_p$ is applied to $r_i^l$ to get the feature embedding $z_i^l$.  
For simplicity, we analyze the case with only two views, although the following analysis also applies to cases with more than two views.  

The SSL objective consists of two components:  {representation alignment and distribution constraint}.  
The alignment component  {is designed to enhance} the similarity between the feature embeddings of the two views that share the same ancestor sample. The constraint component introduces additional prior knowledge to the learning process, such as the distribution of the embedding space and the parameter update rules. Consequently, SSL methods can be unified under a common framework, as expressed below:
\begin{equation}
    \label{eq:SSL_unify}
    \min_{f,f_p} \mathcal{L}_{\text{align}}(X_{tr}^{aug},f,f_p) + \mathcal{L}_{\text{constrain}}(X_{tr}^{aug},f,f_p),
\end{equation}
where $\mathcal{L}_{\text{align}}$ and $\mathcal{L}_{\text{constrain}}$ denote the objectives of the alignment and constraint losses, which we detail in the following analysis.

SimCLR \cite{chenSimpleFrameworkContrastive2020} randomly selects an anchor sample $x_i^l$ from the augmented training set $X_{^{tr}}^{aug}$. The sample $x_i^{3-l}$ is considered as the positive sample related to $x_i^l$ for they share the same ancestor $x_i$. The remaining samples $X^-=X^{aug}_{tr}\setminus \{x_i^{l},x_i^{3-l}\}$ are considered as the negative samples related to $x_i^l$. 

The objective of SimCLR is defined as follows:
\begin{equation}\label{eq:NCE}
\scalebox{1}{
$
\mathcal{L}_{\mathrm{NCE}} = \sum\limits_{x_i^l\in X^{aug}_{tr}}-\log \frac{\exp(\mathrm{sim}(x_i^l,x_i^{3-l})/\tau)}{\sum\limits_{x_j^k\in X^-\cup \{x_i^{3-l}\}}\exp(\mathrm{sim}(x_i^l,x_j^k)/\tau)},
$
}
\end{equation}
where $\tau$ is the temperature hyperparameter. Denote $z_i^l=f_p(f(x_i^l))$ as the projected feature embedding after the projection head $f_p$ and feature extractor $f$, the similarity function $\mathrm{sim}(x_i^l,x_i^{3-l})={z_i^l}^T z_i^{3-l} / \Vert z_i^l \Vert_2 \Vert z_i^{3-l} \Vert_2 $ calculates the cosine similarity between projected feature embedding of samples. Equation \ref{eq:NCE} can be understood as aligning the embedding of $x_i^l$ and $x_i^{3-l}$ while constraining the feature embedding of all samples in $x_j^k\in X_{tr}^{aug}$ to satisfy a uniform distribution.

The main idea of BYOL \cite{grillBootstrapYourOwn2020} is to {maximize agreement between augmented sample pairs in $X_{^{tr}}^{aug}$ derived from a common origin, without relying on negative counterparts}. BYOL considers  {two distinct components, known as the online and target networks}. The two networks have the same feature extraction module $f$ and projection head module $f_p$. However, the online network has one more regression module $f_r$ than the target network. The objective of BYOL can be viewed as  {minimizing the MSE loss computed between the representations produced by the online and target networks}, which can be presented as:
\begin{equation}\label{eq:BYOL}
    {{\cal L}_{{\rm{BYOL}}}} = \sum\limits_{i = 1}^N {\sum\limits_{l = 1}^2 {\| {{f_r}( {\bar z_i^l} ) - \bar z_i^{3 - l}} \|^2} }
\end{equation}
where $z_i^l = {f_p}( {f( {x_i^l} )} )$ and $\bar z_i^l = z_i^l / \Vert z_i^l \Vert_2$. Note that a stop-gradient technique is applied to the target network in the gradient back-propagation stage.  {To simplify notation,} we  {define the} target network as $f_{target}$, and the part of the online network that is similar in structure to $f_{target}$ is  {referred to as} $f_{online}$. Then, \( f_{\text{target}} \)  {receives the online network's parameters through a moving average update, denoted by}:
\begin{equation}\label{eq:opt_BYOL}
    {f_{{{target}}}} \leftarrow \pi {f_{{{target}}}} + (1 - \pi ){f_{online}}
\end{equation}
where $\pi \in \left[0,1\right]$ represents a target decay rate. Equation \ref{eq:BYOL} can be considered as aligning the feature embedding of different views while Equation \ref{eq:opt_BYOL} constrains the update rule through stop-gradient and moving average of parameters. 

Barlow Twins~\cite{zbontarBarlowTwinsSelfSupervised2021} is an augmentation-based method that  {avoids relying on numerous negative samples or the use of a} stop-gradient technique or asymmetric networks. It first computes the cross-correlation matrix $C$ within a minibatch between $\left\{ {x_i^1} \right\}_{i = 1}^N$ and $\left\{ {x_i^2} \right\}_{i = 1}^N$ in the feature space. Then, we have:
\begin{equation}\label{eq:barlow_cov}
    C_{kj}=\frac{\sum_{i=1}^N z^1_{i,k} \cdot z^2_{i,j}}{\sqrt{\sum_{i=1}^N(z^1_{i,k})^2}\cdot \sqrt{\sum_{i=1}^N(z^2_{i,j})^2}}
\end{equation}
where $z_i^l = {f_p}( {f( {x_i^l} )} )$, $k,j\in\{1,\dots,D\}$, and $D$ is the dimension of $z_i^l$. Then, the objective of Barlow Twins is presented as follows:
\begin{equation}\label{eq:barlow_objective}
    \mathcal{L}_{\mathrm{BT}}=\sum_{k=1}^D (1-C_{kk})^2 +\lambda \sum_{k=1}^D\sum_{j=1,j\neq k}^D C_{kj}^2
\end{equation}
where $\lambda$ is a positive constant trading of the importance of the first and second terms of the objective. Minimizing the first term of Equation \ref{eq:barlow_objective} aligns the embedding of two views of $x_i$ while minimizing the second term of Equation \ref{eq:barlow_objective} constrains different dimensions of these feature embeddings to be decorrelated.

MAE~\cite{he2022masked}  {divides} the ancestor sample into $m$ patches, denoted as $x_i \in \mathbb{R}^{m\times s}$,  
where $s$ represents the patch size (e.g., $16\times 16$ for an image sample).  
A random binary mask $m \in \{0,1\}^m$  {is used on} sample $x_i$  {to obtain two separate views},  
where $x_i^1 = x_i[m]$ and $x_i^2 = x_i[1 - m]$.  
The MAE model  {consists of} an encoder $f$ and a decoder $g$.  
The encoder $f$ takes one view, $x_i^1$, as input and generates the feature representation $z_i^l$.  
The decoder $g$ then takes $z_i^l$ and the masked index $m$ as input and  {reconstructs} the other view, $x_i^2$.
 The objective of MAE can be described with the following mean squared error loss:
\begin{equation}
\label{eq:MAE}
\mathcal{L}_{\mathrm{MAE}} = \sum_{i=1}^N\sum_{l=1}^2 \Vert g(f(x_i^l)) - x_i^{3-l} \Vert_2^2.
\end{equation}
. It is noteworthy that if we treat the masking strategy in reconstruction-based methods as an augmentation technique, Equation \ref{eq:MAE} can be considered as implicitly aligning the feature representation of $x_i^l$ and $x_i^{3-l}$. Unlike augmentation-based methods, MAE enforces no explicit constraints, and this results in a dimensional collapse problem .

\section{Motivating example and analysis}
\label{sec:moti_anal}
In this section, we first describe the crowding problem with experimental observation. Then, we propose an empirical analysis to understand why SSL methods suffer from this problem and derive a way to solve this problem: minimizing the intra-class variance and maximizing the inter-class separation while performing SSL. Finally, we show through theoretical analysis that only by minimizing the SSL loss, intra-class variance, and maximizing inter-class separability at the same time can we better reduce the upper bound of cross-entropy loss.

\subsection{Motivating example} 
\label{sec:moti}
To provide a clear comparison between SSL methods and SL, we conducted a series of experiments. First, we trained feature extractors using different SSL methods: SimCLR, BYOL, Barlow Twins, SwAV, and MAE. Additionally, we trained a feature extractor using supervised cross-entropy loss. All feature extractors except MAE are implemented as the ResNet-50 , while MAE uses ViT-B/16 . They are pre-trained on the ImageNet dataset for 1000 epochs. Next, we evaluated the classification accuracies on the ImageNet validation set following the standard protocol by training a linear classifier on top of the feature extractor with the parameters of the feature extractor frozen. The accuracies are presented in Table \ref{tab:quantify}.
We observed that the accuracies for SSL are comparable but significantly lower than the accuracy of the supervised method. Specifically, MAE has a significantly lower linear-eval accuracy, which is also observed by a series of works. 
{To further analyze the learned encoders, we employed t-SNE \cite{van2008visualizing} to visualize the data distribution of the ImageNet test data. For each method, we selected the optimal perplexity value through a grid search and set the number of iterations to 1000 to ensure convergence. The detailed visualization parameters for all methods are listed in Table~\ref{tab:tsne_config} of \ref{app:config}. In addition, to verify the robustness of our observations, we further provide visualization results using 3D t-SNE, UMAP \cite{mcinnes2018umap}, and PCA \cite{jolliffe2002principal} in \ref{app:more_results}.}

In Figure \ref{fig:tsne}, each point represents an individual sample embedding, with different colors representing different categories. 
We observe that embeddings with the same label cluster tightly together, while those with different labels are clearly separated by distinct borders. This indicates that the supervised method achieves a small intra-class variance and a large inter-class distance. Conversely, the embeddings produced by SSL methods, shown in Figures \ref{fig:tsne_simclr} to \ref{fig:tsne_mae}, do not exhibit clear separation, and the intra-class variance is large. We refer to this phenomenon as \textit{the crowding problem}. Specifically, the cluster-based method SwAV (Figure \ref{fig:tsne_swav}) demonstrates smaller intra-class variance, but the embeddings from different classes still overlap. The reconstruction-based method MAE (Figure \ref{fig:tsne_mae}) shows very large intra-class variance, with embeddings from different classes failing to form distinct clusters. This corresponds to the poor linear evaluation performance seen in Table \ref{tab:quantify}. Other SSL methods, depicted in Figures \ref{fig:tsne_simclr}, \ref{fig:tsne_barlow}, and \ref{fig:tsne_byol}, exhibit some clustering characteristics, but with relatively large intra-class variance and overlapping class borders.

We provide a quantified analysis of the intra-class variance and inter-class distance for both SSL methods and the supervised method in Table \ref{tab:quantify}. In this table, the inter-class distance is calculated as the mean \(\ell_2\)-normalized Euclidean distance between embeddings of each class. The mean intra-class variance is calculated as the variance of the distances within embeddings of each class. Formally, for a test set \(D_{te}=\{(x_i,y_i)\}_{i=1}^N\) with \(N\) pairs of samples \(x\) and labels \(y\), the feature embedding of sample \(x\) is obtained using the feature extractor \(f\) as \(z=f(x)\). The feature embedding is then \(\ell_2\)-normalized as \(\bar{z} = z / \Vert z \Vert_2\). The mean embedding vector \(\mu_i\) of class \(i \in \{1,\dots,K\}\) is calculated with \(\mu_i = \frac{1}{N_i} \sum_{j=1}^{N_i} \bar{z}_{i,j}\), where \(N_i\) is the number of samples in the \(i\)-th class. The inter-class distance is calculated as:\( d_{\mathrm{inter}} = \frac{2}{K(K-1)}\sum_{i=1}^{K-1}\sum_{j=i+1}^{K}\Vert \mu_i - \mu_j \Vert_2 \).The intra-class variance of class \(i\) is calculated as:\( \mathrm{Var}_i = \frac{1}{N_i} \sum_{j=1}^{N_i} \Vert z_{i,j} - \mu_i \Vert_2^2 \).The mean intra-class variance is the average intra-class variance over all classes, calculated as: \( \mathrm{Var}_{\mathrm{intra}} = \frac{1}{K} \sum_{i=1}^K Var_i \)
\begin{table}[h]
    \centering
    \caption{The inter-class distance, the intra-class variance, and the linear evaluation accuracy of SSL methods.}
    \begin{tabular}{lccc}
    \toprule
    \bf   Method  &\bf Inter-class Dist. ($\uparrow$) &\bf Intra-class Var. ($\downarrow$) &\bf ACC (\%) \\
    \midrule
        SimCLR & 1.17 & 1.15 & 70.15 \\
        BYOL & 0.90 & 0.65 & 71.48 \\
        SwAV & 1.12 & 1.01 & 75.78 \\
        Barlow Twins & 1.06 & 1.11 & 73.97 \\
        MAE & 0.14 & 0.85 & 66.85 \\
        Supervised &\bf 1.32 &\bf 0.62 &\bf 79.24 \\
    \bottomrule
    \end{tabular}
    \label{tab:quantify}
\end{table}
As shown in Table \ref{tab:quantify}, the MAE has the smallest inter-class distance and shows no clustering characteristic in Figure \ref{fig:tsne_mae}, this corresponds to the poor performance in linear-eval ACC. In contrast, the supervised method exhibits the smallest intra-class variance and the largest inter-class distance and also has a good performance in linear-eval ACC. Other SSL methods either have smaller inter-class distances or larger intra-class variances, leading to sub-optimal linear-eval ACC performance. These quantified results underscore a significant connection between the crowding problem and downstream performance, aligning with the observed visualization patterns. Following this, we provide an empirical analysis of the SSL objectives introduced in Section \ref{sec:Preli}.

\subsection{Empirical analysis} \label{sec: prob_anal}
To elucidate the reasons behind the crowding problem observed in self-supervised methods, it is essential to gain an understanding of the underlying mechanisms employed by the self-supervised approaches described in Section \ref{sec:Preli}.

{M}inimizing Equation \ref{eq:NCE} aims to encourage the learned representations of positive pairs to exhibit similarity, while also constraining all samples in the feature space to be uniformly distributed on a unit hypersphere. In the case of BYOL, minimizing Equation \ref{eq:BYOL} focuses solely on promoting similarity between the learned representations of two samples with the same ancestor. Similarly, for Barlow-Twins, minimizing Equation \ref{eq:barlow_objective} aims to ensure that the learned representations of positive samples are similar to the anchor, while also encouraging the different dimensions of the learned representations to be independent of each other. Minimizing Equation \ref{eq:MAE} only encourages the alignment of the representation of two samples with the same ancestor with no additional constraints. This could lead to a collapse problem, where the inter-class distance is significantly lower than that of augmentation-based methods.

As we can see, these methods primarily focus on minimizing the distance between $x^1_i$ and $x^2_i$. None of these methods are \textbf{explicitly} designed to constrain the relationship between $x^l_i$ and $x^k_j$, {with indices $i$ and $j$ drawn from $\{1,2,\dots,N\}$ such that $i \neq j$},
and $l,k \in \left\{ {1,2} \right\}$. In contrast, SL leverages labeled data to explicitly enforce the aggregation of similar points and the separation of different types of points. Consequently, some pairs in $X^{aug}_{tr}$ are grouped together, while others are pushed apart.

In order to enhance the performance of self-supervised learning methods, it is necessary to adopt a criterion that enables the clustering of similar points and the separation of dissimilar points. This realization leads us to propose the following approach.

\subsection{Theoretical Analysis}
\label{subsec:theo}
To illustrate why we should minimize the intra-class variance and maximize the inter-class separation while performing SSL,  we first  {make} an assumption on the label consistency between pairs in $X^{aug}_{tr}$ with the same ancestor. In this section, we select Equation \ref{eq:NCE} as the SSL objective. 
\begin{assumption} \label{dadadadad}
    $\forall \left\{ {x_i^1,x_i^2} \right\} \in X_{tr}^{aug}$, assume the labels are deterministic (one-hot) and consistent: $p\left( {y\left| {x_i^1} \right.} \right) = p\left( {y\left| {x_i^2} \right.} \right)$.
\end{assumption}
Then, we aim to  {analyze} the generalization gap between unsupervised and supervised learning risks  {under the classification setting},  
which  {involves training a softmax classifier via expected cross-entropy minimization}:
{
\begin{align}
\mathcal{L}_{CE}^\mu(f) 
&= \mathbb{E}_{p(x,y)} 
\left[
    - \sum_{k=1}^{K} p_k \log q_k
\right] \\[6pt]
&= \mathbb{E}_{p(x,y)} 
\left[
    - \sum_{k=1}^{K} p_k 
    \log 
    \left(
        \frac{
            \exp(f(x)^{\top} \mu_k)
        }{
            \sum_{i=1}^{K} \exp(f(x)^{\top} \mu_i)
        }
    \right)
\right] \\[8pt]
&= \mathbb{E}_{p(x,y)} 
\left[
    - \log 
    \left(
        \frac{
            \exp(f(x)^{\top} \mu_y)
        }{
            \sum_{i=1}^{K} \exp(f(x)^{\top} \mu_i)
        }
    \right)
\right].
\end{align}
Here, $p(x,y)$ represents the joint distribution of the input sample and its corresponding label, and $K$ denotes the number of classes. The vector $p$ is a one-hot encoding, with $p_y = 1$ and $p_i = 0$ for $i \neq y$. The term $q_k$ denotes the model-predicted probability of class $k$, computed by the softmax function as $q_k = \exp(f(x)^{\top} \mu_k)/\sum_{i=1}^{K} \exp(f(x)^{\top} \mu_i)$. The vector $f(x)$ denotes the feature embedding of sample $x$. The term $\mu_i = \mathbb{E}_{p(x|y=i)} [ f(x) ]$ represents the cluster center (mean embedding) of class $i$ and can also be interpreted as the weight $w_i$ of a linear classifier $g$. The vector $\mu_y$ specifically denotes the cluster center corresponding to the true class label $y$ of sample $x$. The conditional distribution of samples given the class label is denoted as $p(x|y)$. Finally, the use of the dot product $f(x)^{\top} \mu_i$ comes from the standard formulation of linear classifiers, where it serves as a similarity measure. A higher dot product indicates stronger alignment with that class and leads to a higher softmax probability.
}
Then, we have:
\begin{theorem} \label{fghjk}
    If Assumption \ref{dadadadad} holds, then, for any $f \in \mathcal{F}$, $\mathcal{L}_{CE}^\mu \left( f \right)$ can be bounded by $\mathcal{L}_{\rm NCE}^\mu \left( f \right)$ as:
\begin{equation}
        \begin{array}{l}
\mathcal{L}_{CE}^\mu \left( f \right)  \le {\mathcal{L}_{{\rm{NCE}}}}\left( f \right) - const + \sum\limits_{i = 1,j = 1,i \ne j}^K {\mu _i^{\rm{T}}{\mu _j}} \\
 \quad\quad\quad \quad \quad+ \sqrt {{\rm{Var}}\left( {f\left( x \right)\left| y \right.} \right)}  + O\left( {cons{t^{{{ - 1} \mathord{\left/
 {\vphantom {{ - 1} 2}} \right.
 \kern-\nulldelimiterspace} 2}}}} \right)
\end{array}
    \end{equation}
where const is a constant that is related to the number of negative samples, ${\mu _i} = {\mathbbm{E}_{p\left( {x\left| i \right.} \right)}}\left[ {f\left( x \right)} \right]$, and ${\rm{Var}}\left( {f\left( x \right)\left| y \right.} \right) = {\mathbbm{E}_{p\left( y \right)}}\left[ {{\mathbbm{E}_{p\left( {x\left| y \right.} \right)}}{{\left\| {f\left( x \right) - {\mathbbm{E}_{p\left( {x\left| y \right.} \right)}}f\left( x \right)} \right\|}^2}} \right]$.
\end{theorem}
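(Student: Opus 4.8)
The plan is to start from the definition of $\mathcal{L}_{CE}^\mu(f)$, decompose the log-softmax term, and then compare the resulting quantity with the InfoNCE objective $\mathcal{L}_{\mathrm{NCE}}(f)$ after replacing the sample-level similarities with their class-conditional expectations. First I would write, for a fixed input $x$ with label $y$,
\begin{equation*}
-\log \frac{\exp(f(x)^{\mathrm{T}}\mu_y)}{\sum_{i=1}^K \exp(f(x)^{\mathrm{T}}\mu_i)} = -f(x)^{\mathrm{T}}\mu_y + \log \sum_{i=1}^K \exp(f(x)^{\mathrm{T}}\mu_i),
\end{equation*}
and then take $\mathbbm{E}_{p(x,y)}$. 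Using Assumption \ref{dadadadad} (label consistency between the two views sharing an ancestor), the alignment term $\mathbbm{E}[f(x)^{\mathrm{T}}\mu_y]$ should be matched, up to the cluster-center definition $\mu_i = \mathbbm{E}_{p(x|i)}[f(x)]$, against the numerator of $\mathcal{L}_{\mathrm{NCE}}$: the InfoNCE numerator aligns an anchor with its positive $x_i^{3-l}$, and in expectation this positive is a draw from the same class-conditional $p(x|y)$, so $\mathbbm{E}[f(x_i^l)^{\mathrm{T}} f(x_i^{3-l})]$ is close to $\mathbbm{E}[f(x)^{\mathrm{T}}\mu_y]$ by linearity, with the discrepancy controlled by $\mathrm{Var}(f(x)|y)$ via Jensen/Cauchy--Schwarz (this is where the $\sqrt{\mathrm{Var}(f(x)|y)}$ term will enter).

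The second, more delicate step is to relate the partition function $\log\sum_i \exp(f(x)^{\mathrm{T}}\mu_i)$ to the denominator of $\mathcal{L}_{\mathrm{NCE}}$, which is a sum over the (large) set of negatives $X^-$. I would invoke a concentration / law-of-large-numbers argument: with $M$ negatives drawn from the data distribution, $\frac{1}{M}\sum_{x_j^k \in X^-}\exp(f(x)^{\mathrm{T}}f(x_j^k))$ concentrates around $\mathbbm{E}_{x'}[\exp(f(x)^{\mathrm{T}}f(x'))] \approx \sum_i p(i)\exp(f(x)^{\mathrm{T}}\mu_i)$, again paying a Jensen gap of order $\mathrm{Var}(f(x)|y)$ when pushing the expectation inside the exponential, and a fluctuation term of order $M^{-1/2}$ that becomes the $O(\mathrm{const}^{-1/2})$ remainder (with $\mathrm{const}$ proportional to the number of negatives). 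Splitting $\log\sum_i \exp(f(x)^{\mathrm{T}}\mu_i) = \log\sum_{i\ne y}\exp(\cdot) + \log(1 + \cdots)$ and expanding $\exp(f(x)^{\mathrm{T}}\mu_i) \ge 1 + f(x)^{\mathrm{T}}\mu_i$ (or a second-order bound) produces the cross term $\sum_{i\ne j}\mu_i^{\mathrm{T}}\mu_j$ after taking expectations and using $\mathbbm{E}[f(x)\mid i] = \mu_i$; the additive $-\mathrm{const}$ absorbs the $\log M$ normalization that distinguishes the averaged denominator from the raw InfoNCE sum.

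Assembling these pieces, collecting the alignment comparison, the partition-function comparison, the Jensen gaps (bounded by $\sqrt{\mathrm{Var}(f(x)|y)}$ after a square-root step from a variance bound), the cross-correlation term $\sum_{i\ne j}\mu_i^{\mathrm{T}}\mu_j$, and the $O(\mathrm{const}^{-1/2})$ concentration residual, yields the claimed inequality. I expect the main obstacle to be making the passage from the finite empirical InfoNCE denominator to the class-center partition function rigorous — this requires a uniform concentration bound over $f \in \mathcal{F}$ (or at least a fixed-$f$ bound with the stated $O(\mathrm{const}^{-1/2})$ rate) together with a careful accounting of the Jensen gap incurred when moving the expectation through $\exp(\cdot)$ and then $\log(\cdot)$; bounding that gap cleanly by $\sqrt{\mathrm{Var}(f(x)|y)}$ rather than by the raw variance is the step most likely to need an extra boundedness assumption on $f$ or a truncation argument.
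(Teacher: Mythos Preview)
Your proposal is correct and follows essentially the same route as the paper's proof: both arguments decompose the two losses into an alignment piece and a log-partition/denominator piece, control the alignment discrepancy via the conditional variance (yielding the $\sqrt{\mathrm{Var}(f(x)\mid y)}$ term through a Cauchy--Schwarz/Jensen step, using implicitly that $\|f\|\le 1$), handle the passage from the empirical InfoNCE denominator to its population version by a concentration bound of order $Q^{-1/2}$ (the paper invokes Berry--Esseen explicitly, you invoke a generic LLN/concentration argument), and then extract the inter-class term $\sum_{i\ne j}\mu_i^{\mathrm T}\mu_j$ when replacing per-sample negatives by their class means inside the $\log$-sum-$\exp$. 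The only cosmetic difference is the direction of the chain---the paper lower-bounds $\mathcal{L}_{\mathrm{NCE}}$ whereas you upper-bound $\mathcal{L}_{CE}^\mu$---and your plan for producing the cross term via a first-order expansion of $\exp$ is in fact more explicit than what the paper writes down.
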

{
The detailed proof can be found in Appendix A. From the above empirical and theoretical analyses, we conclude that to effectively compress the upper bound of the classification error, one must simultaneously minimize three crucial factors: (i) the original SSL loss, (ii) intra-class variance, and (iii) inter-class similarity. Intuitively, reducing intra-class variance implies that samples sharing similar semantics should cluster more tightly in the feature space. Similarly, increasing inter-class distance requires samples with dissimilar semantics to be more widely separated in the feature space. However, in self-supervised learning, label information is not available. Therefore, achieving these two objectives necessitates designing an effective unsupervised mechanism.
}

\begin{figure*}[t]
    \centering
    \includegraphics[width=0.9\textwidth]{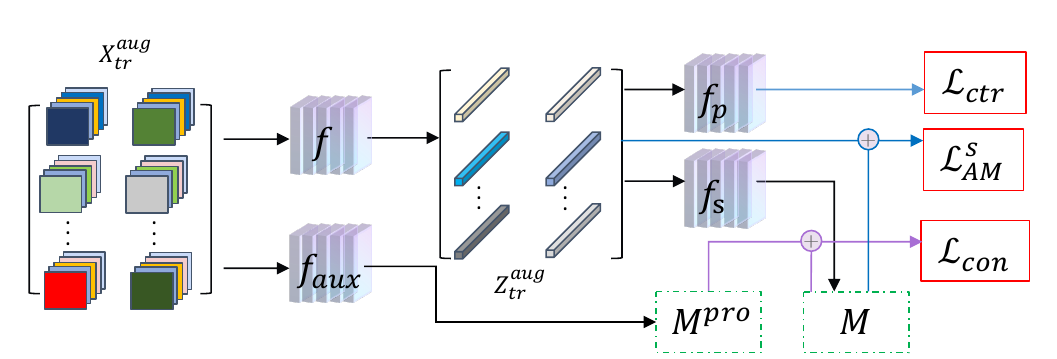}
    \caption{The pipeline of the proposed DSA. First, DSA obtains the feature representations by the $f_p$ and the $M^{pro}$ by the $f_p$. Then, DSA obtains the $M$ by the $f_s$. Finally, DSA simultaneously minimizes the ${\mathcal{L}_{ctr}}$, $ \mathcal{L}_{AM}^s$, and $ {\mathcal{L}_{con}}$ to learn the model. }
    \label{fig:good neighbor}
\end{figure*}

\section{Methodology}
\label{sec:method}
In this section, we describe our proposed method called {the} Dynamic Semantic Adjuster (DSA). DSA can be easily integrated into existing  SSL methods. The core concept of DSA is to attract samples with similar semantics while repelling others in an instance-based manner. Specifically, DSA first randomly selects a point from the training dataset as an anchor point, and then, based on the arranging module and the scoring module,  {aggregates points in the training dataset that are similar to the anchor point, and separates those that are dissimilar}.
Then DSA traverses all the points in the training dataset and treats them as anchor points, thus achieving the purpose of aggregating similar points and separating dissimilar points in the entire training dataset. Note that similar points are more likely to be of the same class, and dissimilar points are more likely to be of different classes. Therefore, DSA can induce a clustering structure and make points of the same class cluster together, thereby reducing intra-class variance,  {while pushing points from different classes apart to enhance inter-class separation}.

The overall pipeline of DSA is illustrated in Figure \ref{fig:good neighbor}, it consists of two main components. The first component is the arranging module, which groups samples with similar semantics together and pushes away those with dissimilar semantics. The second component is the scoring module, which is to further revise the semantic similarity between samples in the arranging module through the relative position structure between samples in the original space.

\subsection{Arranging module}
The arranging module (AM) is designed to learn a regulator $M$, which guides the arrangement of samples in the feature space. The goal of the regulator $M$ is to bring samples with high similarity closer to each other while separating samples with low similarity. Formally, the regulator $M$ can be interpreted as a similarity matrix with a size of $2N \times 2N$.

Specifically, we first project each sample in $X_{^{tr}}^{aug} = \left\{ {x_1^1,x_1^2,...,x_N^1,x_N^2} \right\}$ to the feature space via $f$ and obtain their feature representations, denoted as $Z_{^{tr}}^{aug} = \left\{ {z_1^1,z_1^2,...,z_N^1,z_N^2} \right\}$, where $z_i^l = f\left( {x_i^l} \right)$, $i \in \left\{ {1,...,N} \right\}$, and $l \in \left\{ {1,2} \right\}$. Regarding the $i$-th sample $z_i$ in $Z_{^{tr}}^{aug}$ as the anchor, then, for the $j$-th sample $z_j$ in $Z_{^{tr}}^{aug}$, we concatenate them and obtain a new representation $z_{i,j}$, e.g., ${z_{i,j}} = cat\left( {{z_i},{z_j}} \right)$. Subsequently, we feed ${z_{i,j}}$ into a similarity network $f_s$, which consists of a two-layer MLP with ReLU activation function. The selection of $f_s$ is evaluated and discussed through an ablation study in Section \ref{subsec:ablation}. The similarity network computes the similarity value between $z_i$ and $z_j$ and assigns it to the corresponding entry $M_{i,j}$ in the regulator matrix $M$, i.e., $M_{i,j} = f_s(z_{i,j})$. Then, based on the anchor $z_i$, we can obtain:
\begin{equation}
{M_i} = \left[ {{{\bar f}_s}\left( {{z_{i,1}}} \right),...,{{\bar f}_s}\left( {{z_{i,2N}}} \right)} \right]
\end{equation}
where ${{\bar f}_s}\left( {{z_{i,j}}} \right) = {{{f_s}\left( {{z_{i,j}}} \right)} \mathord{\left/
 {\vphantom {{{f_s}\left( {{z_{i,j}}} \right)} {\sum\nolimits_{k = 1}^{k = 2N} {{f_s}\left( {{z_{i,k}}} \right)} }}} \right.
 \kern-\nulldelimiterspace} {\sum\nolimits_{k = 1}^{k = 2N} {{f_s}\left( {{z_{i,k}}} \right)} }}$ and $j \in \left\{ {1,...,2N} \right\}$. We in turn treat the samples in $Z_{^{tr}}^{aug}$ as anchors and obtain $M = {\left[ {{M_1},....,{M_{2N}}} \right]^{\rm{T}}}$.

Once the regulator matrix $M$ is obtained, we propose to minimize the following loss function:
\begin{equation}\label{eq:L_lay}
\begin{array}{l}
{\mathcal{L}_{AM}}=
  \log ( {1 + \sum\limits_{i = 1}^{2N} {\sum\limits_{j = 1}^{2N} {\exp ( {\frac{{( {2{M_{i,j}} - \alpha } )\| {{z_i} - {z_j}} \|_2^2}}{\tau }} )} } } )
\end{array}
\end{equation}
where $\tau,\alpha > 0$ are two temperature hyperparameters. Note that $\mathcal{L}_{AM}$ captures not only the relationship between different augmentations of the same ancestor but also the relationship between different augmentations of different ancestors.

{Given $\alpha$ and $M_{i,j}$, when $2M_{i,j} - \alpha > 0$, we consider that the samples $z_i$ and $z_j$ in $Z_{^{tr}}^{aug}$ are semantically similar and should be brought closer together. From Equation (\ref{eq:L_lay}), we can see that when $2M_{i,j} - \alpha > 0$, minimizing $\| z_i - z_j \|_2^2$ will minimize $\mathcal{L}_{AM}$. Furthermore, to minimize $\mathcal{L}_{AM}$, the greater the value of $2M_{i,j} - \alpha$, the more we should pull $z_i$ and $z_j$ together. This means that minimizing $\mathcal{L}_{AM}$ brings pairs of samples closer together to varying degrees, depending on the value of $M_{i,j}$.}

{On the other hand, when $2M_{i,j} - \alpha < 0$, the scalar becomes negative, and minimizing $\mathcal{L}_{AM}$ drives the distance $\|z_i - z_j\|_2^2$ to increase, which leads to a repulsive force between dissimilar samples. Thus, while $\mathcal{L}_{AM}$ encourages attraction between similar samples, it simultaneously imposes a repulsive force between dissimilar ones, effectively controlling both attraction and repulsion based on the value of $M_{i,j}$.}

\subsection{Scoring module}
As illustrated in the above subsection, $M_{i,j}$ can control the dynamics  {of a sample, i.e., whether it moves closer to or farther from the anchor}.  
However, $M$  {is computed by feeding} $z_{i,j}$ into $f_s$. However, $M$ is obtained by inputting $z_{i,j}$ into $f_s$. Without proper constraints, $f_s$ may output undesirable values. For example,  {during early optimization,} $f_s$ {often performs suboptimally}, which leads to the learned regulator matrix being inaccurate, causing the sample points in the training dataset to be incorrectly aggregated and separated. To alleviate this problem, we propose a scoring module (SM) that can revise the output of $f_s$ based on the similarity prior of the samples in the original input space.

Specifically, we first input samples in $X^{aug}_{tr}$ into an auxiliary feature extractor $f_{aux}$ to obtain prior feature representations, denoted as $\bar Z_{tr}^{aug} = \{ {\bar z_1^1,\bar z_1^2,...,\bar z_{2N}^1,\bar z_{2N}^2} \}$, where $\bar z_i^l = {f_{aux}}( {x_i^l} )$, $i \in \{ {1,...,2N} \}$, and $l \in \{ {1,2} \}$. 
{In our implementation, $f_{aux}$ is a pre-trained and fixed feature extractor and does not involve any additional training during our method. The default choice of $f_{aux}$ is the image encoder of the pre-trained CLIP model, but we have also conducted ablation studies replacing $f_{aux}$ with other pre-trained models, including BEiT, SIFT, and a pre-trained ResNet-18. We observed that the choice of $f_{aux}$ has minimal impact on the final performance, as shown in Table \ref{tab:f_aux_fs}.}
Then, considering the $i$-th element ${{{\bar z}_i}}$ in $\bar Z_{tr}^{aug}$ as the anchor, we define:
\begin{equation}
M_{i,j}^{pro} = \exp \left( {{{\left\| {{{\bar z}_i} - {{\bar z}_j}} \right\|_2^2} \mathord{\left/
 {\vphantom {{\left\| {{{\bar z}_i} - {{\bar z}_j}} \right\|_2^2} \tau }} \right.
 \kern-\nulldelimiterspace} \tau }} \right)
\end{equation}
where ${{\bar z}_j}$ is the $j$-th sample in $\bar Z_{tr}^{aug}$. Then, we can obtain:
\begin{equation}
M_i^{pro} = \left[ {\bar M_{i,1}^{pro},...,\bar M_{i,2N}^{pro}} \right]
\end{equation}
where $\bar M_{i,k}^{pro} = {{M_{i,k}^{pro}} \mathord{\left/
 {\vphantom {{M_{i,k}^{pro}} {\sum\nolimits_{j = 1}^{j = 2N} {M_{i,j}^{pro}} }}} \right.
 \kern-\nulldelimiterspace} {\sum\nolimits_{j = 1}^{j = 2N} {M_{i,j}^{pro}} }}$ and $k \in \left\{ {1,...,2N} \right\}$. We in turn treat the samples in ${{{\bar z}_i}}$ as anchors and obtain ${M^{pro}} = {\left[ {M_1^{pro},...,M_{2N}^{pro}} \right]^{\rm{T}}}$. To this end, we give the prior constraint ${M^{pro}}$ of $M$ based on the Euclidean distance of different pairs of samples in the original input space. We constrain $M$ as follows:
\begin{equation}\label{dswd}
{\mathcal{L}_{con}} = \left\| {{M^{pro}} - M} \right\|_2^2
\end{equation}
By simultaneously minimizing Equation~(\ref{dswd}) and Equation~(\ref{eq:L_lay}),  
the points in the original input space that are closer to the anchor  {are pulled nearer} in the learned feature space,  while the points that are farther from the anchor  {are pushed further away in the feature space}.

{The key difference between $f_{aux}$ and the similarity network $f_s$ lies in their roles and training strategies. $f_{aux}$ provides a prior representation purely based on pre-trained visual features and remains fixed, serving as a stable reference to generate the proxy matrix $M^{pro}$. In contrast, $f_s$ is a learnable neural network that is trained jointly with the model to predict semantic similarities in the learned representation space. Therefore, $f_s$ is subject to instability during early training stages, which $f_{aux}$ helps mitigate.}

To assess the quality of each anchor, we propose a scoring mechanism based on the connectivity between samples. Given the $i$-th element $z_i$ in $Z_{tr}^{aug}$, we find its corresponding element $\bar{z}_i$ in $\bar Z_{tr}^{aug}$. {We denote the set of $\eta$ nearest neighbors of $\bar{z}_i$ as $N_\eta(\bar{z}_i)$. For each neighbor $\bar{z}_i^a$ in $N_\eta(\bar{z}_i)$, we further identify its own $\eta$ nearest neighbors $N_\eta(\bar{z}_i^a)$. For every point $\bar{z}_i^b$ in $N_\eta(\bar{z}_i^a)$, we check whether $\bar{z}_i$ itself is among the neighbors of $\bar{z}_i^b$. If so, it contributes a value of 1 to the score; otherwise, it contributes 0.} The connection score $s_{i}$ for anchor $z_i$ is formally defined as:
\begin{equation}\label{eq:s_ij}
    s_i\left( z_i \right) 
    = \sum_{\bar{z}_i^b \sim N_\eta(\bar{z}_i^a),\, \bar{z}_i^a \sim N_\eta(\bar{z}_i)} 
    \mathbbm{1}_{\left\{ \bar{z}_i \in N_\eta(\bar{z}_i^b) \right\}}.
\end{equation}
Then, we normalize the score to obtain the final connection score $sc(z_i)$:
\begin{equation}\label{eqasdasd}
    sc\left( z_i \right) = \frac{ s_i\left( z_i \right) }{ \eta }.
\end{equation}
The scoring mechanism in Equation~(\ref{eq:s_ij}) takes into account both direct connections between the anchor $\bar{z}_i$ and $N_\eta(\bar{z}_i)$, as well as latent connections induced by other samples along the connected path . {From Equation~(\ref{eqasdasd}), it follows that when $z_i$ is surrounded by samples sharing similar semantics, $s_i(z_i)$ will be close to $\eta$, making $sc(z_i)$ close to 1. Conversely, when $z_i$ is an outlier, $s_i(z_i)$ will be close to 0, resulting in a small value of $sc(z_i)$. This scoring mechanism is entirely unsupervised and relies solely on pairwise similarities among samples, without any use of ground-truth labels. This works because samples with similar semantics are naturally close in the feature space, so mutual neighborhood structures often reflect potential same-class clustering even without labels.
This score is then used as a weight in Equation~(\ref{eq:L_lay}) to reduce the influence of unreliable anchors, leading to the following formulation:}
\begin{equation}\label{eq:L_adadlay}
\mathcal{L}_{AM}^s = 
\log \left( 
    1 + \sum_{i=1}^{2N} 
    \left(
        sc(z_i) \cdot \sum_{j=1}^{2N}
        \exp 
        \left(
            \frac{
                \left(2M_{i,j} - \alpha\right)
                \| z_i - z_j \|_2^2
            }{\tau}
        \right)
    \right)
\right).
\end{equation}

\subsection{Overall objective}
 {Accordingly, we define the learning goal of the introduced} DSA {as follows:}
\begin{equation}\label{edfy}
\mathcal{L}_{DSA}^s = {\mathcal{L}_{ssl}} + \nu \mathcal{L}_{AM}^s + \upsilon {\mathcal{L}_{con}}
\end{equation}
where ${\mathcal{L}_{ssl}}$ is the loss in Equation \ref{eq:SSL_unify}, e.g., we can set ${\mathcal{L}_{ssl}}$ equals to  ${\mathcal{L}_{\rm NCE}}$, ${\mathcal{L}_{\rm BYOL}}$, ${\mathcal{L}_{\rm BT}}$, or $\mathcal{L}_{\rm MAE}$, and $\nu ,\upsilon  > 0$ represent the temperature hyperparameters. 

{
In summary, the proposed DSA explicitly addresses the theoretical motivations described in Section~\ref{subsec:theo}. Specifically, the second term of Equation (\ref{edfy}), $\mathcal{L}_{AM}^s$, encourages aggregation of semantically similar samples and separation of dissimilar ones, effectively reducing intra-class variance and enhancing inter-class distances. Meanwhile, the third term, $\mathcal{L}_{con}$, ensures the similarity estimates remain reliable and robust, further improving the accuracy of semantic aggregation and separation. Together, these two terms of the DSA objective directly target and tighten the theoretical upper bound of the classification error established in Theorem \ref{fghjk}, achieving the desired unsupervised minimization of intra-class variance and maximization of inter-class separation.
}

\subsection{{Complexity Analysis of DSA}}
{We analyze the time complexity of our proposed method based on the unified objective function given in Equation \ref{edfy}, which consists of three components: the base SSL loss $\mathcal{L}_{ssl}$, $\mathcal{L}_{AM}^s$, and $\mathcal{L}_{con}$. Let $N$ denote the number of original samples in a mini-batch, each augmented twice, yielding $2N$ samples in total. Let $D$ denote the feature dimension of each embedding, and $\eta$ the number of neighbors used in the scoring module.}

{We first consider the term $\mathcal{L}_{ssl}$. Using SimCLR as a representative example, the feature extraction step has complexity $O(ND)$, and pairwise cosine similarity is computed between all $2N$ embeddings. Each pairwise similarity takes $O(D)$ time, resulting in a total of $O(N^2D)$ operations. The InfoNCE loss then applies a softmax over each positive pair with respect to all $2N-1$ negatives, costing an additional $O(N^2)$. Thus, the overall complexity of $\mathcal{L}_{ssl}$ is $O(ND)+O(N^2D)+O(N^2)=O(N^2D)$.}

{Next, for the semantic arranging loss $\mathcal{L}_{AM}^s$, the main cost lies in computing the regulator matrix $M$, which evaluates a similarity function $f_s$ over all $(2N)^2$ pairs of embeddings. $f_s$ is a two-layer MLP, each forward pass costs $O(D)$, resulting in $O(N^2D)$ total cost. The loss $\mathcal{L}_{AM}^s$ itself involves computing weighted distances between all sample pairs, again requiring $O(N^2D)$ operations. Additionally, the computation of the connection score for each anchor involves identifying its $\eta$ nearest neighbors among $2N$ auxiliary features. Using an efficient approximate nearest neighbor search algorithm, the total neighbor search cost is $O(ND \log N)$, and the connectivity-based scoring adds another $O(N\eta)$. Therefore, the complexity of this term is also $O(N^2D)+O(N^2D)+O(ND \log N)+O(N\eta)=O(N^2D)$.}

{Finally, the constraint loss $\mathcal{L}_{con}$ computes the squared difference between the predicted similarity matrix $M$ and the prior matrix $M^{pro}$, each of size $2N \times 2N$. The prior matrix is computed via pairwise Euclidean distances between auxiliary features $\bar{z}_i \in \mathbb{R}^D$, costing $O(N^2D)$, while the loss term itself requires a simple matrix-wise difference and squared Frobenius norm, costing $O(N^2)$. Hence, the complexity for $\mathcal{L}_{con}$ is $O(N^2D)+O(N^2)=O(N^2D)$.}

{Combining all terms, we find that the overall time complexity of our method is $\mathcal{O}(N^2 D)$. This is consistent with the complexity of standard contrastive methods such as SimCLR. Furthermore, since the auxiliary extractor $f_{\text{aux}}$ in $\mathcal{L}_{\text{con}}$ only performs a forward pass without backpropagation, and all similarity computations can be efficiently parallelized on GPUs, the practical overhead remains modest despite the quadratic scaling.}

\section{Experiment}
\label{sec:exp}
  
{This section presents an extensive empirical evaluation of our proposed DSA.}  
First, we  {introduce} the benchmark datasets for evaluation.  
Then, we detail the implementation of the DSA pre-training process, and  {evaluate it} on standard benchmarks for self-supervised image and video representation learning.  
To  {further verify its generalization ability, we evaluate DSA on} various downstream tasks,  
including semi-supervised classification, object detection, few-shot learning, and semantic segmentation.  
Finally, we present a comprehensive ablation study on hyper-parameters, module design, and complexity, along with case studies for visualization.

\subsection{Benchmark dataset}
Our experiments involved  {a diverse set of} datasets across different tasks.  
The pre-training dataset for image data is ImageNet~\cite{dengImageNetLargescaleHierarchical2009}, which contains  {1.3M labeled images spanning 1000 categories}.  
The pre-training dataset for video data is the Kinetics-400 dataset~\cite{kay2017kinetics}, which contains 400 human action classes, with at least 400 video clips for each action.  
  
{Each video clip, sourced from YouTube, lasts approximately 10 seconds.}  
The downstream object detection and instance segmentation tasks for image data are evaluated on the MS-COCO~\cite{lin2014microsoft} and Pascal-VOC~\cite{everingham2010pascal} {datasets}.  
MS-COCO is a large-scale benchmark for object detection, segmentation, keypoint detection, and captioning, consisting of 328K images.  
The Pascal-VOC dataset includes 1464 training images and 1449 validation images, annotated with object segmentation, bounding boxes, and class labels.  
 {It contains 20 predefined object classes.}
The few-shot classification task is evaluated on  {three standard benchmarks:} FC100~\cite{fc100}, Caltech-UCSD Birds (CUB-200)~\cite{cub200}, and Plant Disease~\cite{plantdisease}.  
FC100 is a split dataset based on CIFAR-100~\cite{krizhevskyLearningMultipleLayers2009}, designed for few-shot learning via disjoint train/val/test splits to reduce information overlap.  
CUB-200 comprises 6033 bird images categorized into 200 species.  
Plant Disease is a public dataset consisting of 54,306 images of both diseased and healthy plant leaves;  {each sample is labeled with one of 14 crop types and 26 disease classes}.

\subsection{Pre-training Details}
 
{During optimization, we adopt a schedule that gradually increases the learning rate over the first 500 steps, followed by a decay of 0.2 occurring 50 and 25 epochs prior to training completion.} 
 
{The proposed approach is incorporated into standard SSL frameworks using ResNet-50 or ViT-B as the feature encoder.} The SSL pre-training for video data follows the standard protocol as introduced in \cite{feichtenhoferLargeScaleStudyUnsupervised2021}  {using the Kinetics-400 dataset as the source corpus} \cite{kay2017kinetics}.   {Our method is applied to four representative video self-supervised learning frameworks: v-SwAV, v-SimCLR, v-BYOL, and v-MoCo.} We evaluate the video SSL algorithm on two backbones, R3D-18  and R3D-50 . The default training epoch is 200 while the default batch size is 256.

For the default hyperparameters of DSA, we set the number of neighbors $\eta=20$, the hyperparameter $\alpha=1.0$, $\nu=0.1$, and $\upsilon=100$. We conduct a detailed ablation study of these hyper-parameters in Section \ref{subsec:ablation}.

\begin{table}[h!]
	\centering
	\caption{The Top-1 and Top-5 classification accuracies of a linear classifier on ImageNet with ResNet-50 as the feature extractor.}
	\label{tab:imagenet} 
	\begin{tabular}{lccc}
		\toprule
	    Method & \textbf{Backbone} & \textbf{Top-1} & \textbf{Top-5} \\
	    \midrule
	    SimCLR \cite{chenSimpleFrameworkContrastive2020} & ResNet-50 & 70.15 $\pm$ 0.16 & 89.75 $\pm$ 0.14  \\
		MoCo \cite{heMomentumContrastUnsupervised2020} & ResNet-50 & 72.80 $\pm$ 0.12 & 91.64 $\pm$ 0.11 \\ 
            BYOL \cite{grillBootstrapYourOwn2020} & ResNet-50 & 71.48 $\pm$ 0.15 & 92.32 $\pm$ 0.14 \\
            SimSiam \cite{chenExploringSimpleSiamese2021} & ResNet-50 & 73.01 $\pm$ 0.21 & 92.61 $\pm$ 0.27 \\ 
            Barlow Twins \cite{zbontarBarlowTwinsSelfSupervised2021} & ResNet-50 & 73.97 $\pm$ 0.23 & 92.91 $\pm$ 0.19 \\
		SwAV \cite{caronUnsupervisedLearningVisual2020} & ResNet-50 & 75.78 $\pm$ 0.16 & 92.86 $\pm$ 0.15 \\
            DINO \cite{caron2021emerging} & ResNet-50 & 75.43 $\pm$ 0.18 & 93.32 $\pm$ 0.19 \\
            W-MSE \cite{ermolov2021whitening} & ResNet-50 & 76.01 $\pm$ 0.27 & 93.12 $\pm$ 0.21 \\
            RELIC v2 \cite{RELIC-v2} & ResNet-50 & 75.88 $\pm$ 0.15 & 93.52 $\pm$ 0.13 \\
		LMCL \cite{LMLC} & ResNet-50 & 75.89 $\pm$ 0.19 & 92.89 $\pm$ 0.28 \\
        ReSSL \cite{ressl} & ResNet-50 & 75.77 $\pm$ 0.21 & 92.91 $\pm$ 0.27 \\
        SSL-HSIC \cite{ssl-hsic} & ResNet-50 & 74.99 $\pm$ 0.19 & 93.01 $\pm$ 0.20 \\
        CorInfoMax \cite{CorInfoMax} & ResNet-50 & 75.54 $\pm$ 0.20 & 92.23 $\pm$ 0.25 \\
        MEC \cite{MEC} & ResNet-50 & 75.38 $\pm$ 0.17 & 92.84 $\pm$ 0.20 \\
        VICRegL \cite{Vicregl} & ResNet-50 & 75.96 $\pm$ 0.19 & 92.97 $\pm$ 0.26 \\
    \midrule
    \rowcolor{orange!10}SimCLR + DSA & ResNet-50 &72.09 $\pm$ 0.16&91.39 $\pm$ 0.13\\
    \rowcolor{orange!10}MoCo + DSA & ResNet-50 &74.79 $\pm$ 0.69&93.65 $\pm$ 0.74\\
    \rowcolor{orange!10}SimSiam + DSA & ResNet-50 &74.09 $\pm$ 0.89&94.24 $\pm$ 0.82\\
    \rowcolor{orange!10}Barlow Twins + DSA & ResNet-50 &76.03 $\pm$ 0.49&93.95 $\pm$ 0.15\\
    \rowcolor{orange!10}SwAV + DSA & ResNet-50 &77.84 $\pm$ 0.32&94.05 $\pm$ 0.89\\
    \rowcolor{orange!10}DINO + DSA & ResNet-50 &76.50 $\pm$ 0.78&\bf 94.46 $\pm$ 0.62\\
    \rowcolor{orange!10}BYOL + DSA & ResNet-50 &74.95 $\pm$ 0.57&94.93 $\pm$ 0.16\\
    \rowcolor{orange!10}ReSSL + DSA & ResNet-50 &77.45 $\pm$ 0.33&94.23 $\pm$ 0.23\\
    \rowcolor{orange!10}VICRegL + DSA & ResNet-50 &\bf 78.15 $\pm$ 0.63&94.04 $\pm$ 0.20\\
    \midrule
    MAE \cite{he2022masked} & ViT-B & 66.85 $\pm$ 0.23 & 85.24 $\pm$ 0.43\\
    U-MAE \cite{zhang2022mask} & ViT-B & 70.46 $\pm$ 0.18 & 91.25 $\pm$ 0.28 \\
    MoCo-v3 \cite{chen2021empirical} & ViT-B & 76.47 $\pm$ 0.14 & 93.76 $\pm$ 0.46 \\
    DINO \cite{caron2021emerging} & ViT-B &  78.17 $\pm$ 0.57 & 96.14 $\pm$ 0.17 \\
    \midrule
    \rowcolor{orange!10}MAE + DSA  & ViT-B & 72.76 $\pm$ 0.41&89.36 $\pm$ 0.31\\
    \rowcolor{orange!10}MoCo-v3 + DSA  & ViT-B & 78.24 $\pm$ 0.72&95.75 $\pm$ 0.23\\
    \rowcolor{orange!10}DINO + DSA  & ViT-B &\bf 79.20 $\pm$ 0.42&\bf 97.16 $\pm$ 0.66\\
    \bottomrule
	\end{tabular}
\end{table}

\begin{table}[htb]
    \centering
    \caption{ {Results on Kinetics-400 under linear evaluation, with} $\varrho$, $L$, and $\delta$  {representing the count of positive pairs, temporal span, and frame stride}.}
    \begin{tabular}{lcccc}
    \toprule
    \bf Method & $\varrho$ & $L$ & $\delta$ &\bf Top-1 \\
    \midrule
    supervised & - & 8 & 8 & 74.7 \\
    \midrule
    v-MoCo \cite{feichtenhoferLargeScaleStudyUnsupervised2021} & 2 & 8 & 8 & 65.8 \\
    v-BYOL \cite{feichtenhoferLargeScaleStudyUnsupervised2021} & 2 & 8 & 8 & 65.8 \\
    v-SwAV \cite{feichtenhoferLargeScaleStudyUnsupervised2021} & 2 & 8 & 8 & 61.6 \\
    v-SimCLR \cite{feichtenhoferLargeScaleStudyUnsupervised2021} & 2 & 8 & 8 & 60.5 \\
    v-BYOL \cite{feichtenhoferLargeScaleStudyUnsupervised2021} & 2 & 16 & 4 & 67.6 \\
    v-MoCo \cite{feichtenhoferLargeScaleStudyUnsupervised2021} & 4 & 8 & 8 & 67.8 \\
    \midrule
    \rowcolor{orange!10} v-BYOL + DSA & 2 & 8 & 8 & 66.7   \\
    \rowcolor{orange!10} v-SimCLR + DSA & 2 & 8 & 8 & 62.5   \\
    \rowcolor{orange!10} v-MoCo + DSA & 2 & 8 & 8 & 67.3  \\
    \rowcolor{orange!10}v-MoCo  + DSA & 4 & 8 & 8 & 69.0 \\
    \rowcolor{orange!10} v-SwAV + DSA & 2 & 8 & 8 & 62.8   \\
    \rowcolor{orange!10}v-BYOL  + DSA & 2 & 16 & 4 &\bf 69.2 \\
    \bottomrule
    \end{tabular}
    \label{tab:k400}
\end{table}

\begin{table}[htb]
	\centering
	\caption{The semi-supervised learning accuracies (\%) on ImageNet using 1\% and 10\% training examples.  dataset with the ResNet-50 pre-trained on the Imagenet dataset.}
\label{tab:semi}
		\begin{tabular}{lcccc}
		\toprule
		\multirow{2.5}{*}{\bf Method} &\multicolumn{2}{c}{\bf 1\%} & \multicolumn{2}{c}{\bf 10\%} \\
	    \cmidrule(lr){2-3} \cmidrule(lr){4-5}
	    &\bf Top-1 &\bf Top-5 &\bf Top-1 &\bf Top-5 \\
        \midrule
        Supervised & 25.1 $\pm$ 1.3 & 48.6 $\pm$ 0.7 & 55.9 $\pm$ 0.7 & 81.2 $\pm$ 0.8 \\
        \midrule
        SimCLR \cite{chenSimpleFrameworkContrastive2020}   & 48.3 $\pm$ 0.2 & 75.5 $\pm$ 0.1 & 65.6 $\pm$ 0.1 & 87.8 $\pm$ 0.2\\
	MoCo \cite{heMomentumContrastUnsupervised2020}  &52.3 $\pm$ 0.1 & 77.9 $\pm$ 0.2 &68.4 $\pm$ 0.1 &88.0 $\pm$ 0.2\\
	BYOL \cite{grillBootstrapYourOwn2020}   & 56.3 $\pm$ 0.2 & 79.6 $\pm$ 0.2 & 69.7 $\pm$ 0.2& 89.3 $\pm$ 0.1\\
        SimSiam \cite{chenExploringSimpleSiamese2021}   & 54.9 $\pm$ 0.2 & 79.5 $\pm$ 0.2 & 68.0 $\pm$ 0.1 &89.0 $\pm$ 0.3 \\
        Barlow Twins \cite{zbontarBarlowTwinsSelfSupervised2021}   & 55.0 $\pm$ 0.1& 79.2 $\pm$ 0.1 & 67.7 $\pm$ 0.2 & 89.3 $\pm$ 0.2\\
	     RELIC v2 \cite{RELIC-v2}  & 55.2 $\pm$ 0.2 & 80.0 $\pm$ 0.1& 68.0 $\pm$ 0.2 & 88.9 $\pm$ 0.2\\
	     LMCL \cite{LMLC}   & 54.8 $\pm$ 0.2 & 79.4 $\pm$ 0.2 & 70.3 $\pm$ 0.1  & 89.9 $\pm$ 0.2\\
	     ReSSL \cite{ressl}   & 55.0 $\pm$ 0.1 & 79.6 $\pm$ 0.3 & 69.9 $\pm$ 0.1 & 89.7 $\pm$ 0.1\\
	     SSL-HSIC \cite{ssl-hsic}   & 55.1 $\pm$ 0.3 & 79.6 $\pm$ 0.2 & 70.4 $\pm$ 0.1 & 90.0 $\pm$ 0.1 \\
      CorInfoMax \cite{CorInfoMax}  & 55.0 $\pm$ 0.2 & 79.6 $\pm$ 0.3 & 70.3 $\pm$ 0.2 & 89.3 $\pm$ 0.2\\
      MEC \cite{MEC}  & 54.8 $\pm$ 0.1 & 79.4 $\pm$ 0.2&  70.0 $\pm$ 0.1 & 89.1 $\pm$ 0.1\\
      VICRegL \cite{Vicregl}  & 54.9 $\pm$ 0.1 & 79.6 $\pm$ 0.2 & 67.2 $\pm$ 0.1  & 89.4 $\pm$ 0.2\\
	   \midrule
        \rowcolor{orange!10} SimCLR + DSA   &49.9 $\pm$ 0.2&77.3 $\pm$ 0.1&66.6 $\pm$ 0.1&89.3 $\pm$ 0.2\\
\rowcolor{orange!10} MoCo + DSA   &54.1 $\pm$ 0.3&80.0 $\pm$ 0.3&69.9 $\pm$ 0.3&90.0 $\pm$ 0.2\\
\rowcolor{orange!10} BYOL + DSA   &\bf 57.5 $\pm$ 0.5&\bf 81.1 $\pm$ 0.1&\bf 71.7 $\pm$ 0.4&\bf 91.0 $\pm$ 0.3\\
\rowcolor{orange!10} Barlow Twins + DSA   &57.0 $\pm$ 0.1&80.9 $\pm$ 0.1&69.4 $\pm$ 0.5&90.7 $\pm$ 0.1\\
		\bottomrule
	\end{tabular}
\end{table}

\begin{table*}[htb]
    \centering
    \caption{ {Action recognition finetuning performance on UCF101 and HMDB51, reported as the average over three splits. Models are pre-trained in a self-supervised manner using the Kinetics-400 dataset.} $\varrho$ is the number of positive samples.}
    \label{tab:ucf}
    \resizebox{.9\linewidth}{!}{
    \begin{tabular}{lccccccc}
    \toprule
      \bf Method  & \bf Resolution & \bf Frames  & \bf Architecture & \bf Param. & \bf Epochs & \bf UCF101 & \bf HMDB51 \\
    \midrule
        VTHCL \cite{yangVideoRepresentationLearning2020} & 224$\times$224 & 8 & R3D-18 & 13.5M & 200   & 80.6 & 48.6 \\
        TCLR \cite{dave_tclr_2022} & 112$\times$112 & 16 & R3D-18 & 13.5M & 100   & 85.4 & 55.4 \\
        VideoMoCo \cite{videomoco} & 112$\times$112 & 16 & R3D-18 & 13.5M & 200   & 74.1 & 43.6 \\
        SLIC \cite{khorasgani2022slic} & 128$\times$128 & 32 & R3D-18 & 13.5M & 150  & 83.2 & 52.2 \\
        MACLR \cite{xiao2022maclr} & 112$\times$112 & 32 & R3D-18 & 13.5M & 600  & 91.3 & 62.1 \\
        v-BYOL$_{\varrho=4}$  & 112$\times$112 & 16 & R3D-18 & 13.5M & 200  & 88.3 & 69.3 \\
    \midrule
    \rowcolor{orange!10} v-BYOL$_{\varrho=4}$ + DSA & 112$\times$112 & 16 & R3D-18 & 13.5M & 200  &\bf 92.7   &\bf 70.5  \\

    \midrule
        CVRL \cite{qianSpatiotemporalContrastiveVideo2021} & 224$\times$224 & 32 & R3D-50 & 31.8M & 800  & 92.2 & 66.7 \\
        MACLR \cite{xiao2022maclr} & 224$\times$224 & 32 & R3D-50 & 31.8M & 600  & 94.0 & 67.4 \\
        v-SimCLR$_{\varrho=2}$ \cite{feichtenhoferLargeScaleStudyUnsupervised2021} & 224$\times$224 & 8 & R3D-50 & 31.8M & 200  & 88.9 & 67.2  \\ 
        v-SwAV$_{\varrho=2}$ \cite{feichtenhoferLargeScaleStudyUnsupervised2021} & 224$\times$224 & 8 & R3D-50 & 31.8M & 200  & 87.3 & 68.3  \\ 
        v-MoCo$_{\varrho=4}$ \cite{feichtenhoferLargeScaleStudyUnsupervised2021} & 224$\times$224 & 8 & R3D-50 & 31.8M & 200  & 93.5 & 71.6  \\
        v-BYOL$_{\varrho=4}$ \cite{feichtenhoferLargeScaleStudyUnsupervised2021} & 224$\times$224 & 8 & R3D-50 & 31.8M & 200  & 94.2 & 72.1 \\
    \midrule
    \rowcolor{orange!10}    v-SimCLR$_{\varrho=2}$ + DSA & 224$\times$224 & 8 & R3D-50 & 31.8M & 200  & 90.7   & 69.5  \\ 
    \rowcolor{orange!10}    v-SwAV$_{\varrho=2}$ + DSA & 224$\times$224 & 8 & R3D-50 & 31.8M & 200  & 91.1  & 69.7  \\ 
    \rowcolor{orange!10}    v-MoCo$_{\varrho=4}$ + DSA & 224$\times$224 & 8 & R3D-50 & 31.8M & 200  & 94.9   &\bf 73.7  \\
    \rowcolor{orange!10}    v-BYOL$_{\varrho=4}$ + DSA & 224$\times$224 & 8 & R3D-50 & 31.8M & 200  &\bf 95.4   & 73.6  \\ 
    \bottomrule
    \end{tabular}
    }
\end{table*}

\begin{table*}[htb]
  \caption{Few-shot learning accuracies (\%) with 95\% confidence intervals averaged over 2000 episodes on FC100, CUB200, and Plant Disease. $(N,K)$ denotes $N$-way $K$-shot tasks.}
  \label{tab:few-shot}
  \centering
\resizebox{\linewidth}{!}{
\begin{tabular}{lcccccc}
  \toprule
\multirow{2}{*}{\textbf{Method}} & \multicolumn{2}{c}{\textbf{FC100}} & \multicolumn{2}{c}{\textbf{CUB200}} & \multicolumn{2}{c}{\textbf{Plant Disease}}\\
  \cmidrule(r){2-3}
  \cmidrule(r){4-5}
  \cmidrule(r){6-7}
    & \textbf{(5,1)} & \textbf{(5,5)} & \textbf{(5,1)} & \textbf{(5,5)} & \textbf{(5,1)} & \textbf{(5,5)}\\
    \midrule
    Supervised &34.62 $\pm$ 0.88&47.15 $\pm$ 0.23&47.34 $\pm$ 0.66&64.87 $\pm$ 0.21&70.08 $\pm$ 0.24&89.83 $\pm$ 0.41\\
    \midrule
    SimCLR \cite{chenSimpleFrameworkContrastive2020} &39.89 $\pm$ 0.28&49.83 $\pm$ 0.70&44.47 $\pm$ 0.76&67.33 $\pm$ 0.65&76.94 $\pm$ 0.56&89.73 $\pm$ 0.67\\
    Barlow Twins \cite{zbontarBarlowTwinsSelfSupervised2021} &40.02 $\pm$ 0.73&52.69 $\pm$ 0.11&46.02 $\pm$ 0.21&65.06 $\pm$ 0.54&79.37 $\pm$ 0.11&89.53 $\pm$ 0.42\\
    BYOL \cite{zbontarBarlowTwinsSelfSupervised2021} &36.41 $\pm$ 0.43&51.36 $\pm$ 0.59&44.38 $\pm$ 0.33&62.75 $\pm$ 0.27&79.41 $\pm$ 0.24&91.05 $\pm$ 0.31\\
    W-MSE \cite{ermolov2021whitening} &40.97 $\pm$ 0.62&53.69 $\pm$ 0.56&49.15 $\pm$ 0.63&65.15 $\pm$ 0.74&75.52 $\pm$ 0.24&89.05 $\pm$ 0.64\\
    VICRegL \cite{Vicregl} &40.78 $\pm$ 0.48&54.28 $\pm$ 0.66&49.44 $\pm$ 0.82&67.18 $\pm$ 0.61 &79.36 $\pm$ 0.22&92.84 $\pm$ 0.24\\
    SimSiam \cite{chenExploringSimpleSiamese2021} &37.11 $\pm$ 0.86&51.64 $\pm$ 0.32&46.28 $\pm$ 0.68&63.25 $\pm$ 0.14&76.83 $\pm$ 0.16&90.45 $\pm$ 0.75\\
    ReSSL \cite{ressl} &37.92 $\pm$ 0.64&52.35 $\pm$ 0.43&46.42 $\pm$ 0.42&63.76 $\pm$ 0.45&77.25 $\pm$ 0.86&91.15 $\pm$ 0.76\\
    SwAV \cite{caronUnsupervisedLearningVisual2020}&39.64 $\pm$ 0.11&51.83 $\pm$ 0.55&47.34 $\pm$ 0.16&65.24 $\pm$ 0.75&79.41 $\pm$ 0.35&92.62 $\pm$ 0.61\\
    \midrule
   \rowcolor{orange!10}SimCLR + DSA &\bf 43.04 $\pm$ 0.37&52.38 $\pm$ 0.15&46.96 $\pm$ 0.12&\bf 70.52 $\pm$ 0.37&78.31 $\pm$ 0.71&91.23 $\pm$ 0.82\\
    \rowcolor{orange!10}Barlow Twins + DSA &41.42 $\pm$ 0.88&\bf 55.47 $\pm$ 0.67&47.36 $\pm$ 0.39&68.04 $\pm$ 0.32&82.18 $\pm$ 0.73&90.95 $\pm$ 0.18\\
    \rowcolor{orange!10}BYOL + DSA &39.85 $\pm$ 0.61&52.95 $\pm$ 0.46&46.13 $\pm$ 0.69&64.79 $\pm$ 0.76&\bf 82.89 $\pm$ 0.74&\bf 94.72 $\pm$ 0.89\\
    \rowcolor{orange!10}SwAV + DSA &41.03 $\pm$ 0.21&54.12 $\pm$ 0.56&\bf 49.36 $\pm$ 0.10&68.27 $\pm$ 0.55&81.34 $\pm$ 0.68&93.20 $\pm$ 0.26\\
    \bottomrule
  \end{tabular}
  }
\end{table*}

\begin{table*}[htb]
	\centering
	\caption{%
{Transfer learning results on object detection and instance segmentation using the C4-backbone.}  
``AP''  {refers to the average precision,}  
``$\text{AP}_{N}$'' %
{denotes the average precision at IoU threshold $N\%$.}%
}
\label{tab:voc_coco}
		\resizebox{\linewidth}{!}{
		
		\begin{tabular}{lcccccccccccc}
		\toprule
		\multirow{2.5}{*}{Method} &\multicolumn{3}{c}{VOC 07 detection} & \multicolumn{3}{c}{VOC 07+12 detection} &\multicolumn{3}{c}{COCO detection}&\multicolumn{3}{c}{COCO instance segmentation}\\
	    \cmidrule(lr){2-4} \cmidrule(lr){5-7} \cmidrule(lr){8-10} \cmidrule(lr){11-13} 
	    & \(\mathbf{AP_{50}}\)& \(\mathbf{AP}\) & \(\mathbf{AP_{75}}\)& \(\mathbf{AP_{50}}\)& \(\mathbf{AP}\) & \(\mathbf{AP_{75}}\)& \(\mathbf{AP_{50}}\)& \(\mathbf{AP}\) & \(\mathbf{AP_{75}}\)& \(\mathbf{AP^{mask}_{50}}\)& \(\mathbf{AP^{mask}}\) & \(\mathbf{AP^{mask}_{75}}\)\\
	       \midrule
	     Supervised & 74.4 & 42.4 & 42.7 & 81.3 & 53.5 & 58.8 & 58.2 & 38.2 & 41.2 & 54.7 & 33.3 & 35.2\\
	   \midrule
	     SimCLR \cite{chenSimpleFrameworkContrastive2020} & 75.9 & 46.8 & 50.1 & 81.8 & 55.5 & 61.4 & 57.7 & 37.9 & 40.9 & 54.6 & 33.3 & 35.3\\
	     MoCo \cite{heMomentumContrastUnsupervised2020} & 77.1 & 46.8 & 52.5 & 82.5 & 57.4 & 64.0 & 58.9 & 39.3 & 42.5 & 55.8 & 34.4 & 36.5\\
	     BYOL \cite{grillBootstrapYourOwn2020} & 77.1 & 47.0 & 49.9 & 81.4 & 55.3 & 61.1 & 57.8 & 37.9 & 40.9 & 54.3 & 33.2 & 35.0\\
	     SimSiam \cite{chenExploringSimpleSiamese2021} & 77.3 & 48.5 & 52.5 & 82.4 & 57.0 & 63.7 & 59.3 & 39.2 & 42.1 & 56.0 & 34.4 & 36.7\\
	     Barlow Twins \cite{zbontarBarlowTwinsSelfSupervised2021} & 75.7 & 47.2 & 50.3 & 82.6 & 56.8 & 63.4 & 59.0 & 39.2 & 42.5 & 56.0 & 34.3 & 36.5\\
        SwAV \cite{caronUnsupervisedLearningVisual2020} & 75.5 & 46.5 & 49.6 & 82.6 & 56.1 & 62.7 & 58.6 & 38.4 & 41.3 & 55.2 & 33.8 & 35.9\\
	     MEC \cite{MEC} & 77.4 & 48.3 & 52.3 & 82.8 & 57.5 & 64.5 & 59.8 & 39.8 & 43.2 & 56.3 & 34.7 & 36.8\\
	     RELIC v2 \cite{RELIC-v2} & 76.9 & 48.0 & 52.0 & 82.1 & 57.3 & 63.9 & 58.4 & 39.3 & 42.3 & 56.0 & 34.6 & 36.3\\
	 CorInfoMax \cite{CorInfoMax}& 76.8 & 47.6 & 52.2 & 82.4 & 57.0 & 63.4 & 58.8 & 39.6 & 42.5 & 56.2 & 34.8 & 36.5\\
      VICRegL \cite{Vicregl}& 75.9 & 47.4 & 52.3 & 82.6 & 56.4 & 62.9 & 59.2 & 39.8 & 42.1 & 56.5 & 35.1 & 36.8\\    
     \midrule
\rowcolor{orange!10}SimCLR + DSA &77.5&47.9&52.0&83.6&57.5&63.9&59.0&39.5&42.9&56.3&35.4&36.3\\
\rowcolor{orange!10}MoCo + DSA &\bf 79.4&48.2&54.1&84.4&\bf 58.6&\bf 66.4&61.0&40.8&\bf 44.3&57.1&36.2&\bf 38.9\\
\rowcolor{orange!10}BYOL + DSA &78.5&48.4&52.1&83.1&57.0&62.6&59.0&39.3&43.3&55.8&35.3&37.2\\
\rowcolor{orange!10}SimSiam + DSA &78.7&\bf50.1&54.2&\bf 84.7&58.3&65.0&60.4&41.0&43.2&\bf 58.1&36.9&\bf 38.9\\
\rowcolor{orange!10}SwAV + DSA &77.4&48.1&52.0&83.9&58.5&64.9&60.3&40.1&43.8&57.2&35.1&37.6\\
\rowcolor{orange!10}VICRegL + DSA &78.4&49.1&\bf 54.6&83.8&57.8&64.8&\bf 60.6&\bf 42.2&43.4&57.7&\bf 37.5&38.2\\
		\bottomrule
	\end{tabular}
	}
\end{table*}

\subsection{Standard Evaluation}
\label{sec:classification}
 {To assess how effectively our method improves the learned feature extractor,} we perform the standard evaluation protocol, i.e. linear evaluation on the obtained feature extractors.
This protocol involves freezing the feature extractor $f(\cdot)$ after pre-training and subsequently training a supervised linear classifier with softmax on top of it. We present the average top-1 and top-5 accuracy and the standard deviation of 5 runs for pre-trained image feature extractor in Table \ref{tab:imagenet}. The linear evaluation results for video data are shown in Table \ref{tab:k400}.

From Table \ref{tab:imagenet}, we can observe that the inclusion of DSA significantly improves the linear evaluation performance compared to the SSL baselines, with an increase in average top-1 accuracy by  2.04\%, and an increase in top-5 accuracy by 1.57\%. Furthermore, DSA is effective across different backbones, with an average improvement of top-1 accuracy of 1.88\% for ViT-B and 2.13\% for ResNet-50. Also, when using ResNet-50 as the backbone, VICRegL+DSA achieved the highest average top-1 accuracy of 78.15\%, surpassing the best previous result of W-MSE by 2.14\%, when using ViT-B as the backbone, DINO+DSA achieved the highest average top-1 accuracy of 79.20\%, exceeding that of DINO by 1.05\%. Moreover, despite the linear evaluation results of MAE being significantly lower compared to other methods, the introduction of DSA results in a notable improvement, with an increase in top-1 accuracy by 5.91\%. 

{Table~\ref{tab:k400} indicates that incorporating DSA yields a 1.4\% gain in top-1 classification accuracy on the Kinetics-400 validation split using SSL-based models.} 
This demonstrates that our proposed DSA is not only applicable to a specific type of data but is effective across various data modalities.

In summary, DSA significantly enhances the accuracy of SSL methods on the validation set during standard linear evaluation, aligning with the conclusions drawn from our theoretical analysis.

\subsection{Downstream Evaluation}
\label{sec:downstream}
To further explore the performance of our proposed DSA, we tested the pre-trained SSL feature extractor on a range of downstream tasks. These tasks include semi-supervised classification, action recognition, few-shot classification, object detection, instance segmentation, and action detection. Experimental results indicate that DSA achieves significant improvements across all these downstream tasks.

\paragraph{Semi-Supervised Classification}
We follow the setting of \cite{chenSimpleFrameworkContrastive2020} and sample 1\% and 10\% of the labeled data from ImageNet \cite{dengImageNetLargescaleHierarchical2009} in a class-balanced way, with each class containing around 12.8 to 128 images per class{, }respectively. We then fine-tune the pre-trained backbone with our method 
for 50 epochs with these sampled labeled data and report the average top-1 and top-5 accuracies of the test set.

The results for semi-supervised classification are shown in Table \ref{tab:semi}. These results demonstrate that when the amount of labeled data is limited, the fine-tuning performance of pre-trained models based on self-supervised learning methods surpasses that of supervised baselines. Additionally, the pre-trained models incorporating the DSA method exhibit superior performance in semi-supervised tasks compared to their self-supervised baseline counterparts.  
{When merely 1\% of the labels are provided, the mean top-1 accuracy rises by 1.65\%, accompanied by a 1.78\% gain in top-5 accuracy.} 
{For 10\% labeled data, the model achieves a 1.73\% improvement in both top-1 and top-5 accuracies.}

{In comparison to the strongest baseline under 1\% supervision, RELIC v2 combined with BYOL+DSA shows enhancements of 2.3\% in top-1 and 1.1\% in top-5 accuracy.}

{Against the best performing method using 10\% annotated samples, SSL-HSIC paired with BYOL+DSA delivers additional gains of 1.3\% and 1.0\% in top-1 and top-5 accuracy, respectively.}

In summary, the inclusion of DSA results in significant improvements in semi-supervised classification tasks. This indicates that enhancing model discriminability has a notable impact on the fine-tuning of downstream classification tasks, even when labeled data is limited.

\paragraph{Action Recognition}
To  {assess how well the SSL-pretrained models generalize to downstream tasks}, 
we follow the setting in~\cite{qianSpatiotemporalContrastiveVideo2021,feichtenhoferLargeScaleStudyUnsupervised2021} and  {conduct downstream adaptation using UCF-101 and HMDB-51}. 
We use the pre-trained backbone on Kinetics to initialize the network parameters and {additionally append} a classification layer. 
On each dataset, the models are trained for 50 epochs with a batch size of 128. 
The recognition results obtained from the fine-tuning stage are {reported in} Table~\ref{tab:ucf}. 
To enable a fair comparison with existing small-capacity video SSL methods, we also evaluate R3D-18-based models. 
From the table, {adding DSA leads to a 1.94\% boost in top-1 recognition performance} over standard video self-supervised learning. 
Furthermore, with 16-frame inputs and R3D-18 as backbone, v-BYOL + DSA {surpasses} the top result of MACLR by 1.4\% on UCF-101.

These results demonstrate that the inclusion of DSA significantly enhances accuracy in downstream action recognition tasks for video SSL. This further indicates that our method improves the discriminability of the learned representations.

\paragraph{Few-shot Classification}
We  {conduct evaluation of the few-shot classification} on FC100, CUB200, and Plant Disease following the standard cross-domain few-shot setting . 
We  {carry out this task} by performing logistic regression using the pre-trained ResNet-50 with the parameters frozen and report the results for both the $5$-way $1$-shot and $5$-way $5$-shots task {(see Table~\ref{tab:few-shot})}. 
From the results, we can observe that in few-shot classification tasks, SSL methods with DSA show an average accuracy improvement of 2.26\% compared to their baselines. 
Moreover, the inclusion of DSA achieves better results across different datasets. 
For instance,  {within the 5-way 1-shot task conducted using the FC100 dataset}, the best self-supervised learning result is 40.97\% achieved by W-MSE. 
In contrast, SimCLR + DSA exceeds this metric by 2.07\%. 
Similarly,  {under the setting of the 5-way 5-shot task on the Plant Disease dataset}, VICRegL achieves an average accuracy of 92.84\%, while BYOL + DSA surpasses this result by 1.88\%.

These findings indicate that DSA significantly  {enhances SSL model capability in few-example recognition settings}.

\paragraph{Detection and Segmentation}
The evaluation of downstream detection and segmentation tasks  {is performed} on Pascal-VOC and MS-COCO datasets.  
For object detection on Pascal-VOC, we fine-tune a Faster R-CNN~\cite{ren2015faster}  {using the C4-backbone}.  
Results are reported for   
{training on the VOC 2007 trainval set and the combined VOC 2007 trainval + 2012 train set}, both  {assessed on the VOC 2007 test split}.  
For object detection on COCO, we fine-tune a Mask R-CNN~\cite{he2017mask} (1$\times$ schedule)  {based on the C4-backbone}.  
Fine-tuning is performed on the COCO 2017 train set and  {tested on the COCO 2017 validation set}.  
The evaluation tasks on VOC and COCO  {adhere to the procedures described in} \cite{chenExploringSimpleSiamese2021}, where the pre-trained Resnet-50 are used for initialization.  
We  {present AP scores} at different intersections over union (IoU) ratios for results from VOC and COCO in Table~\ref{tab:voc_coco}.

\begin{figure*}[htb]
    \centering
    \subfigure[{SimCLR + DSA}]{\includegraphics[width=0.3\textwidth]{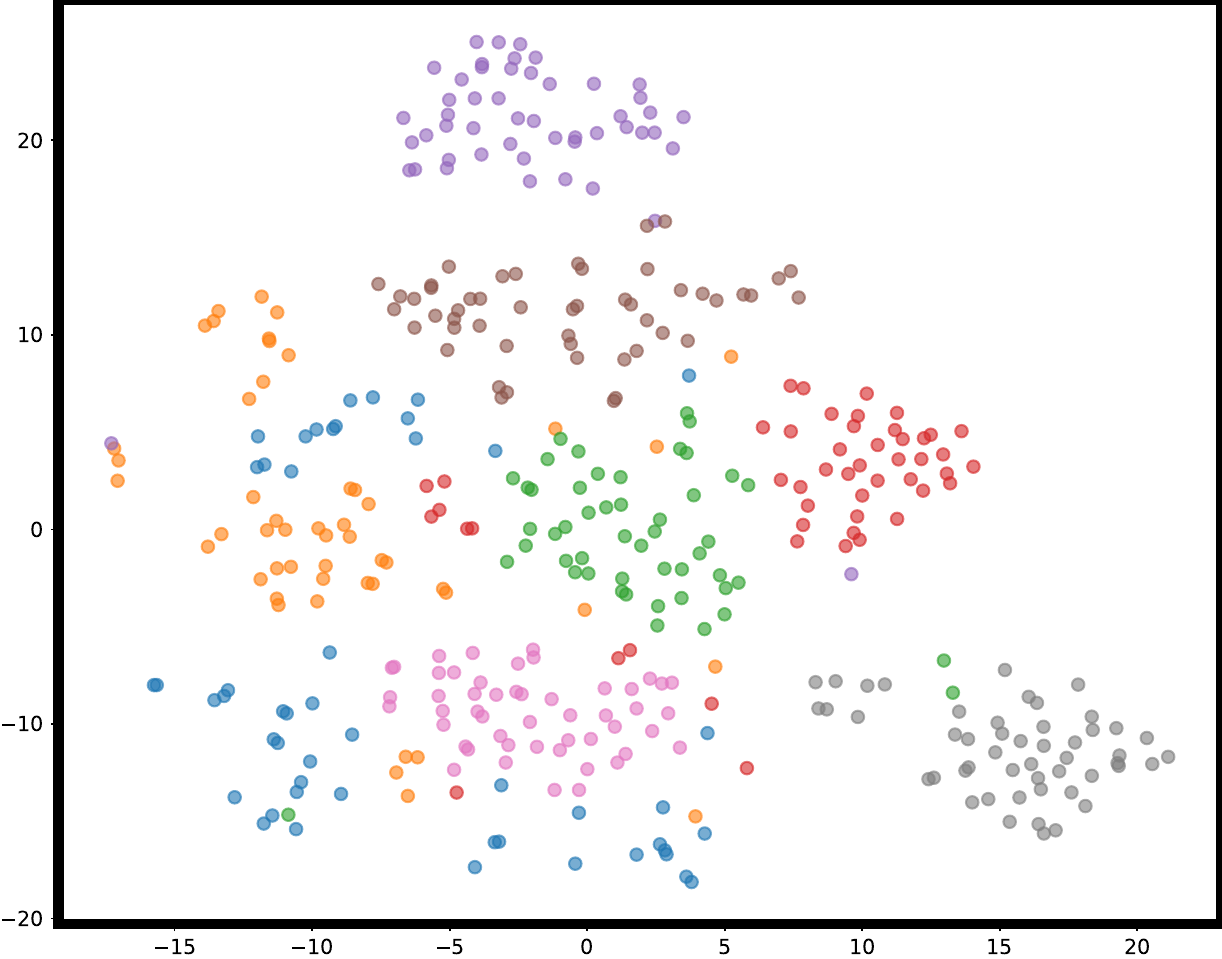}\label{fig:tsne_simclr_better}}
     \subfigure[{BYOL + DSA}]{\includegraphics[width=0.3\textwidth]{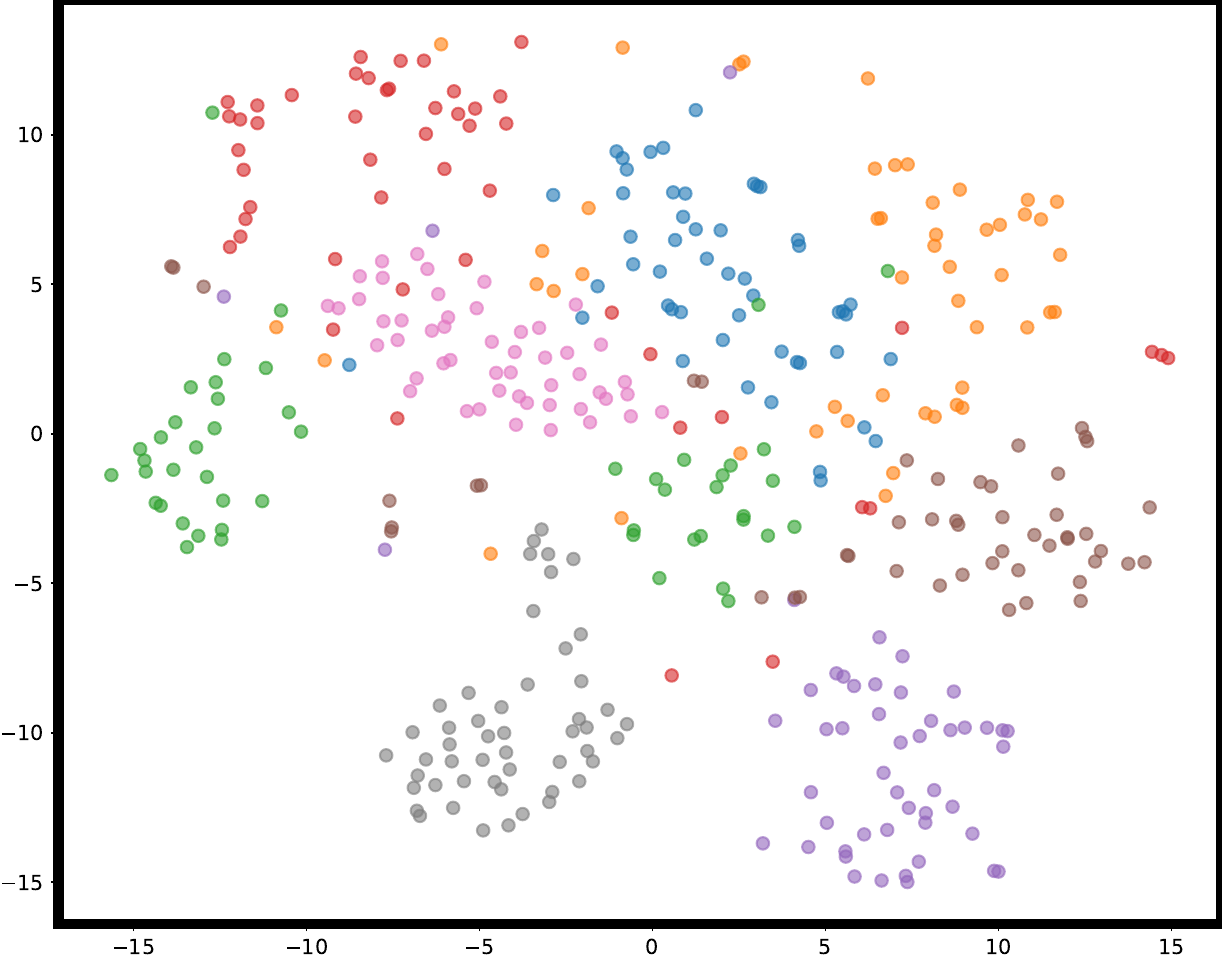}\label{fig:tsne_byol_better}}
     \subfigure[{Barlow Twins + DSA}]{\includegraphics[width=0.3\textwidth]{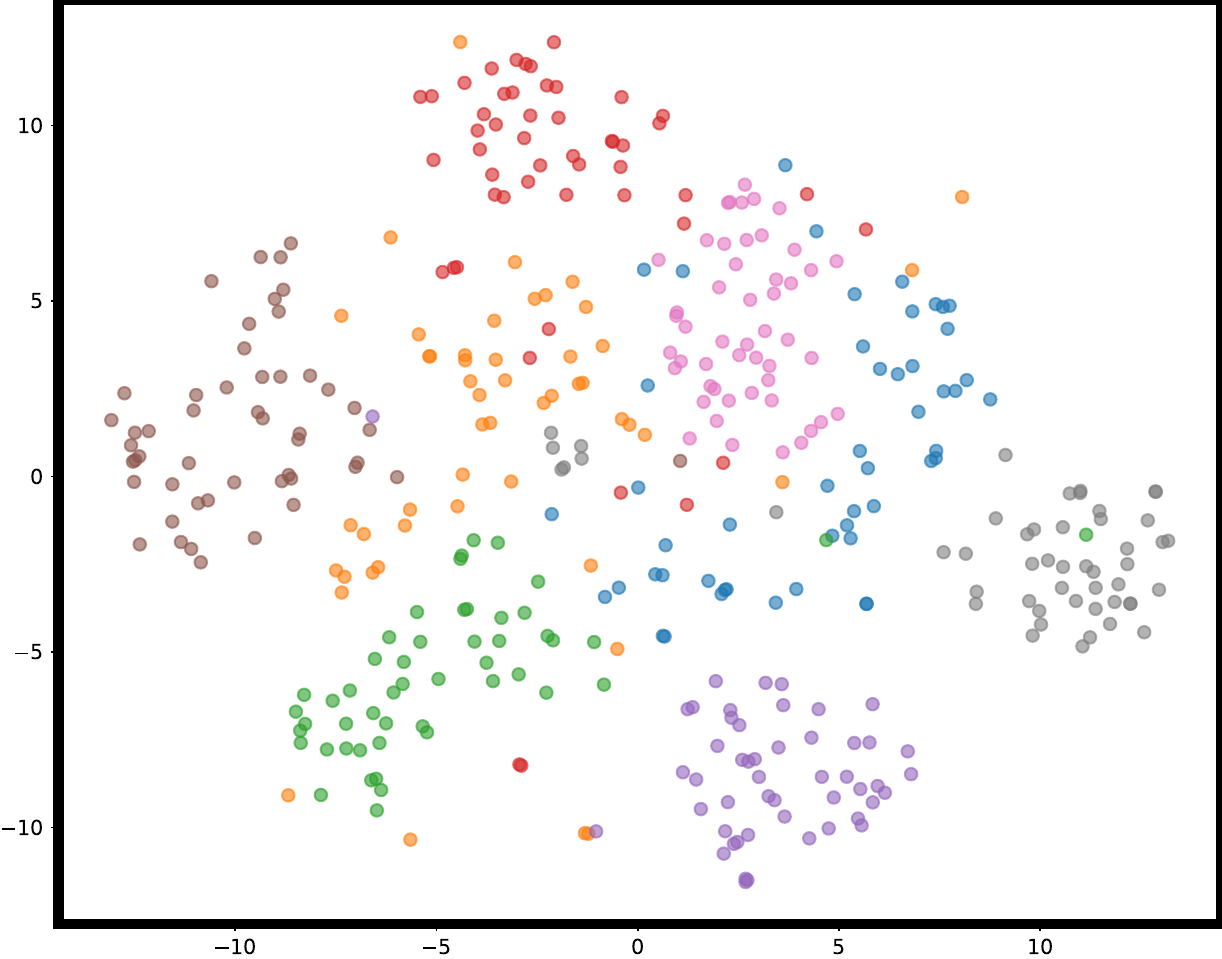}\label{fig:tsne_barlow_better}}
     \subfigure[{SwAV + DSA}]{\includegraphics[width=0.3\textwidth]{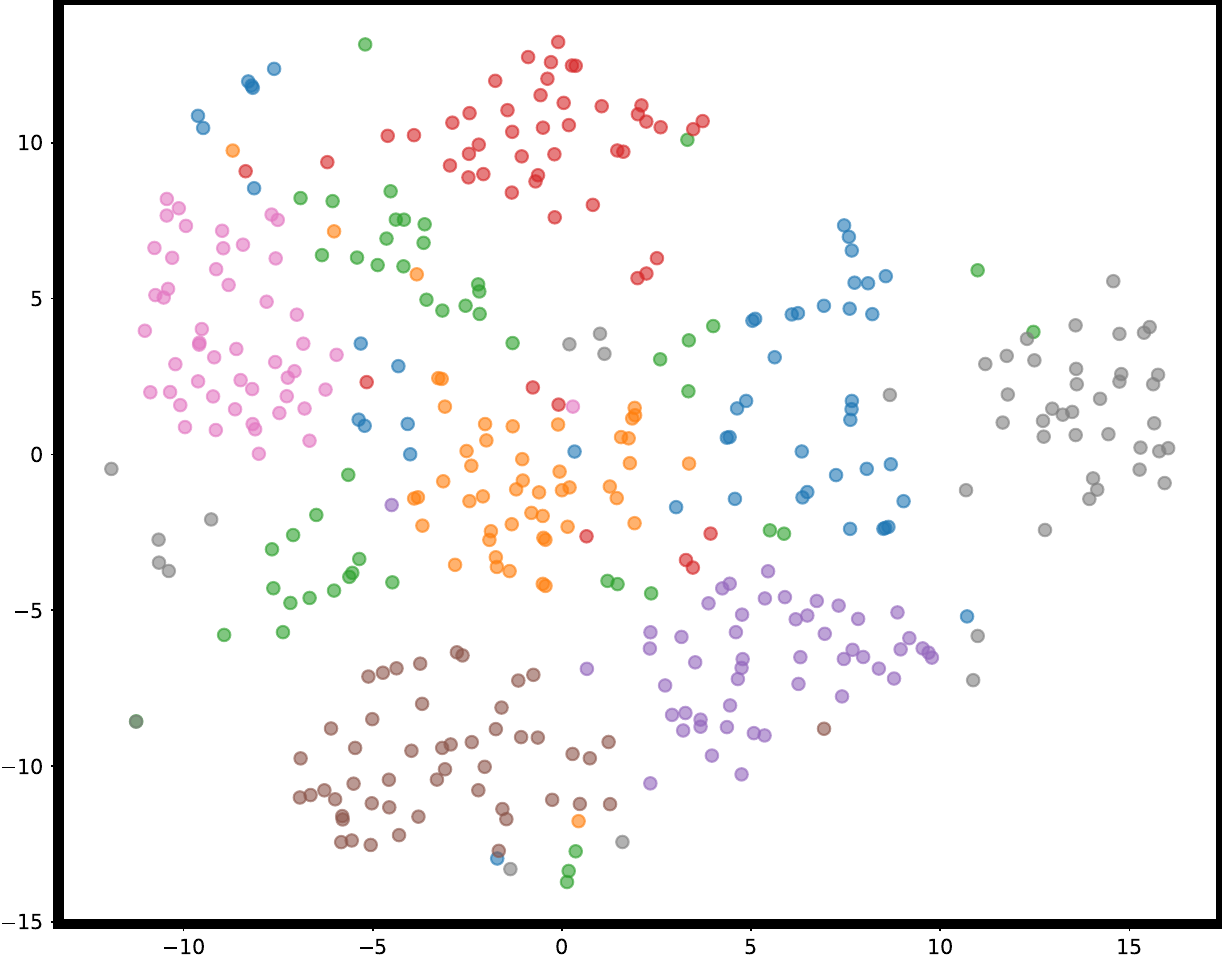}\label{fig:tsne_swav_better}}
     \subfigure[{MAE + DSA}]{\includegraphics[width=0.3\textwidth]{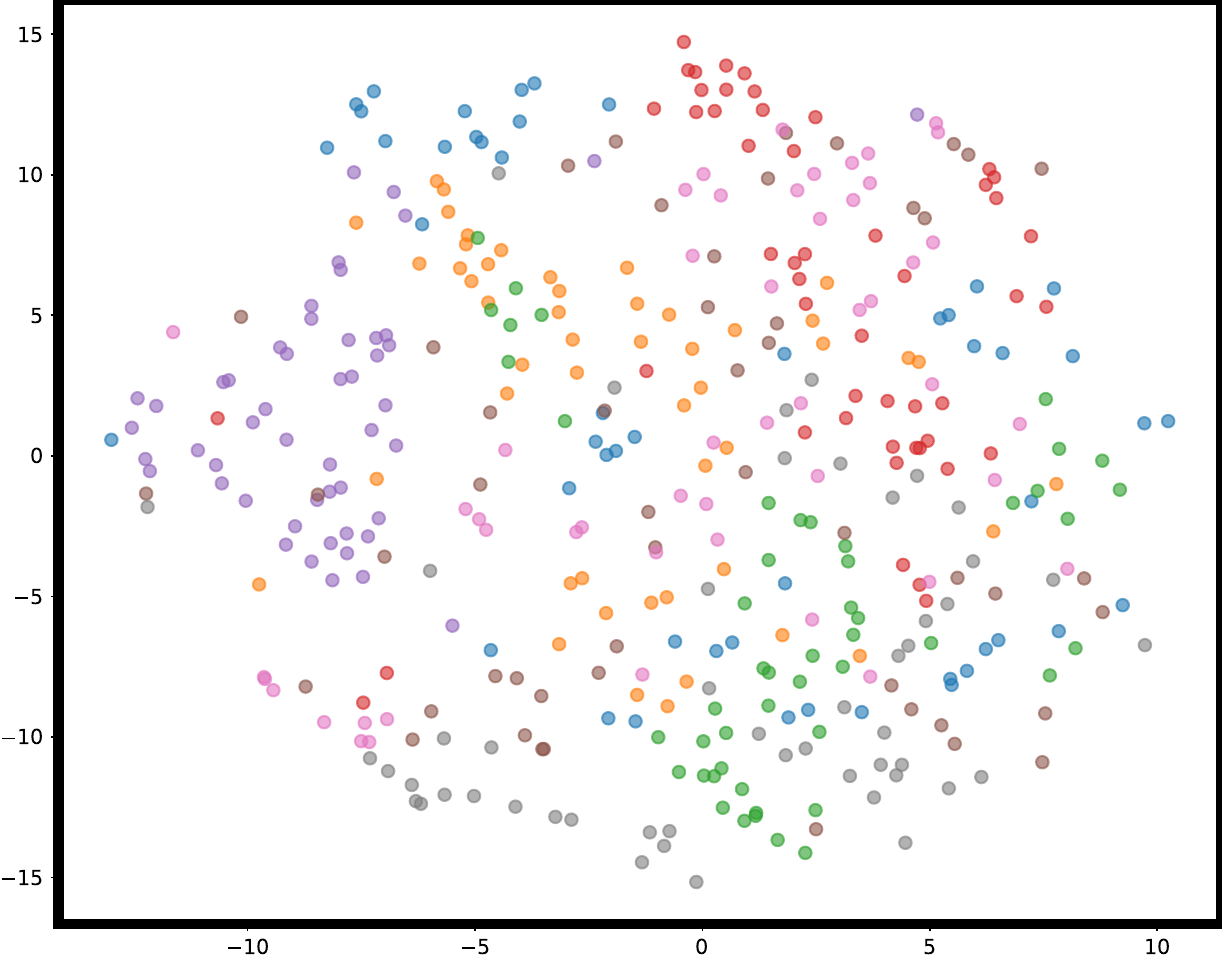}\label{fig:tsne_mae_better}}
    \caption{Data distribution visualization based on the same set of classes as Figure \ref{fig:tsne} of the test set of ImageNet in the feature space. (a) - (e) corresponds to the visualization results of the self-supervised methods integrated with DSA. We can observe that the border between classes does not overlap (except for MAE), and the inter-class features are clustered together. }
    \label{fig:tsne_better}
\end{figure*}

\begin{figure*}[htb]
    \centering
    \subfigure[$\nu$]{\includegraphics[width=0.3\textwidth]{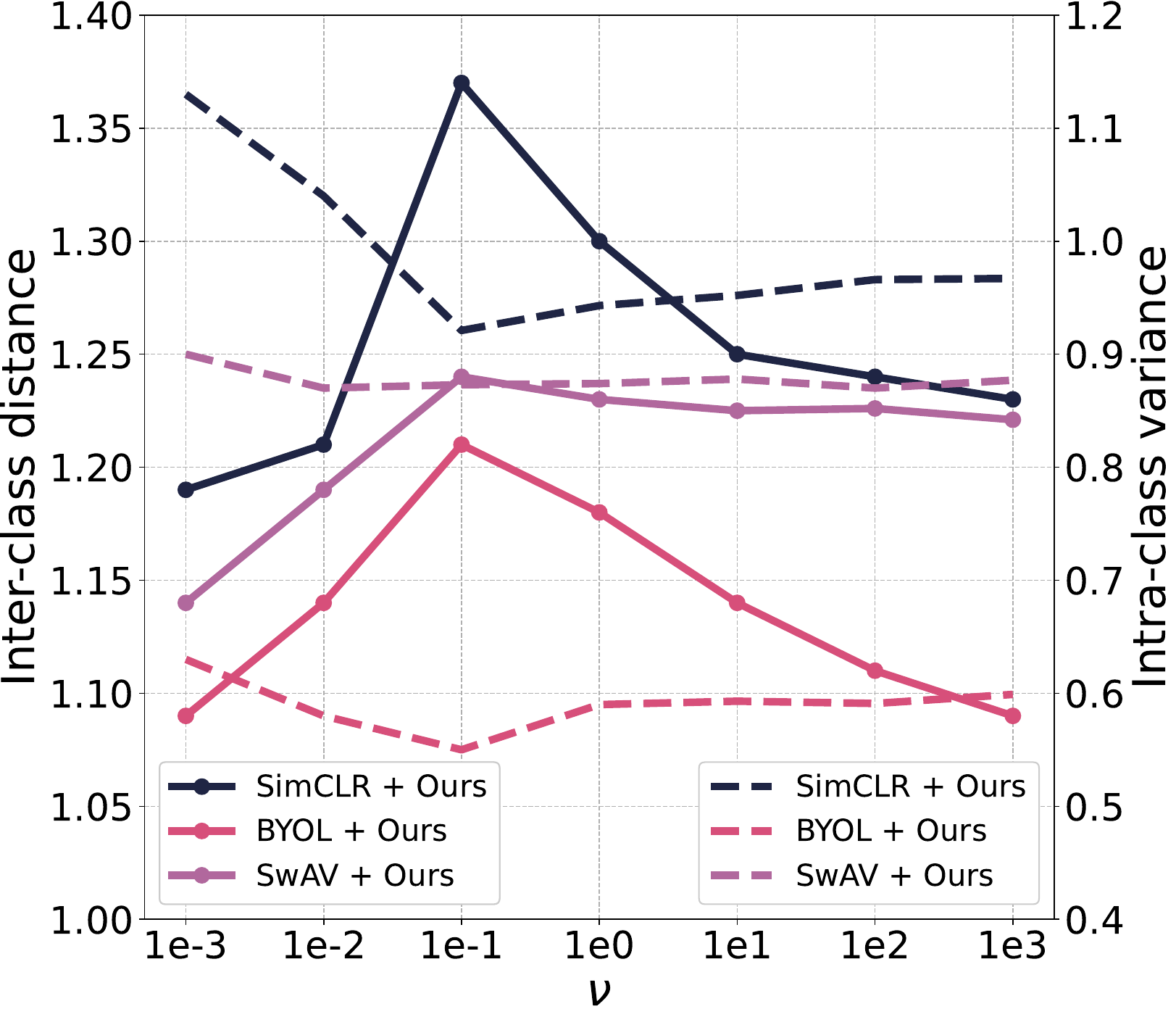}\label{fig:nu_acc}}
    \subfigure[$\upsilon$]{\includegraphics[width=0.3\textwidth]{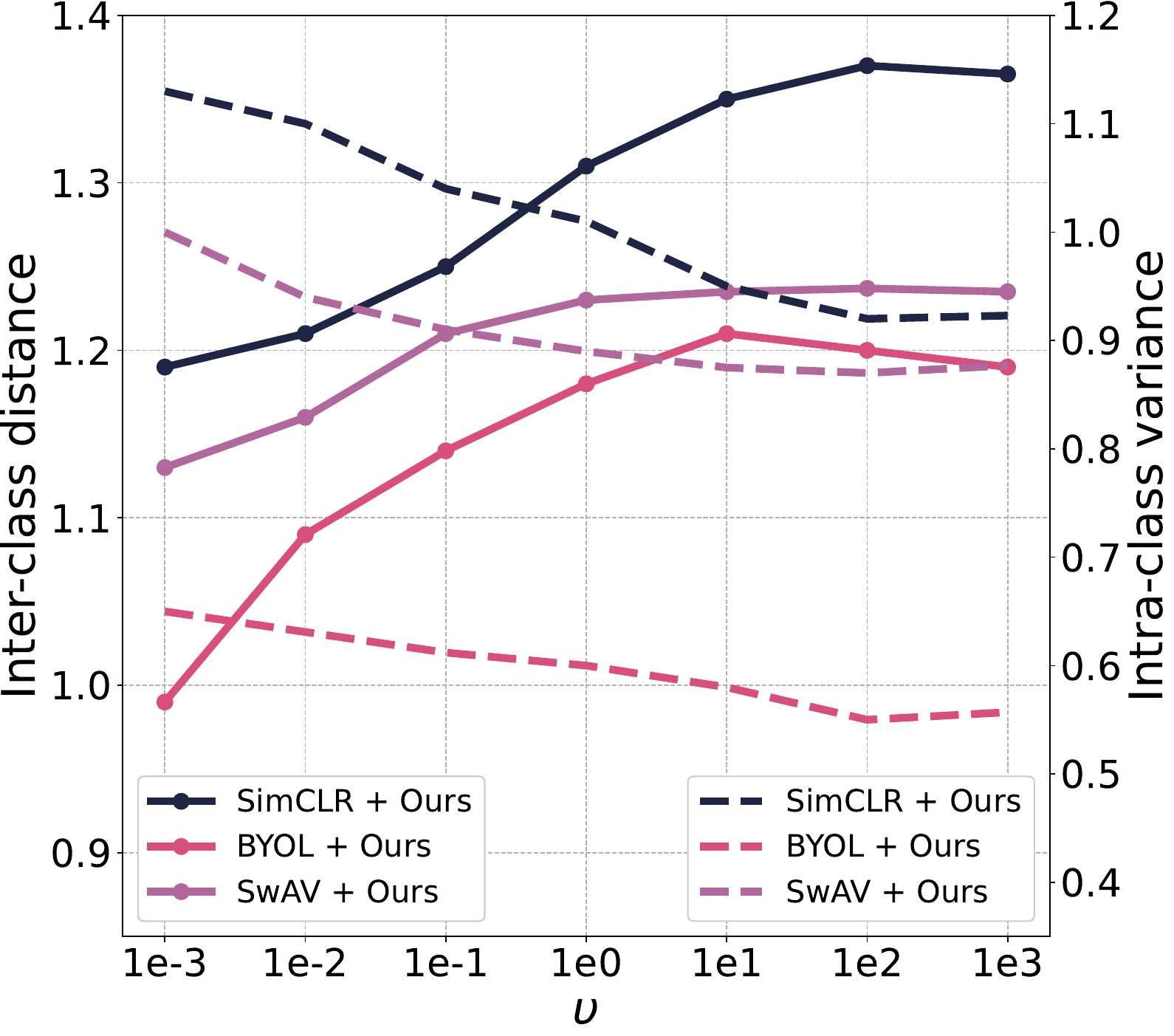}\label{fig:upsilon_acc}}
    \subfigure[$\alpha$]{\includegraphics[width=0.3\textwidth]{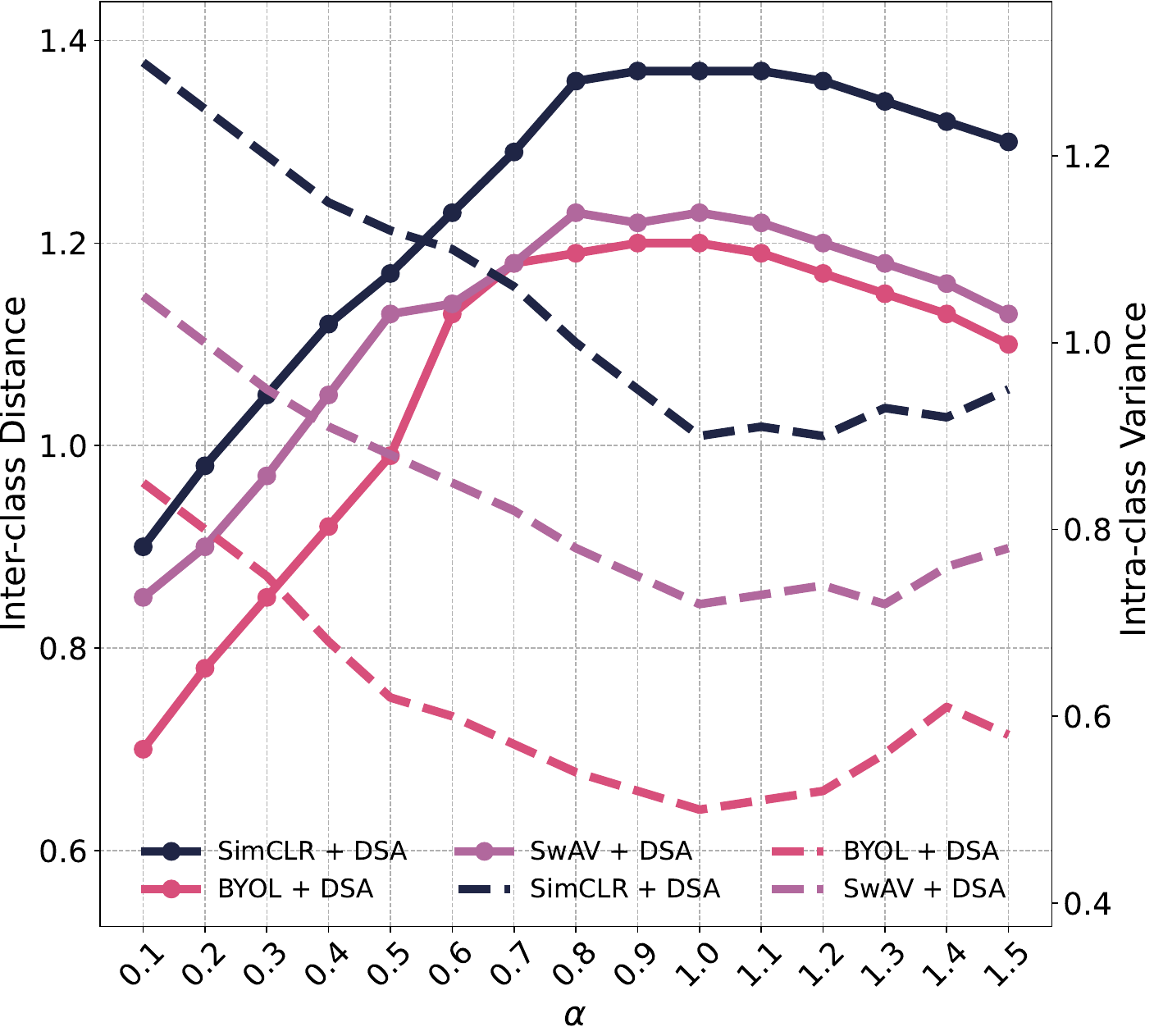}\label{fig:alpha_acc}}
    \subfigure[$\eta$]{\includegraphics[width=0.3\textwidth]{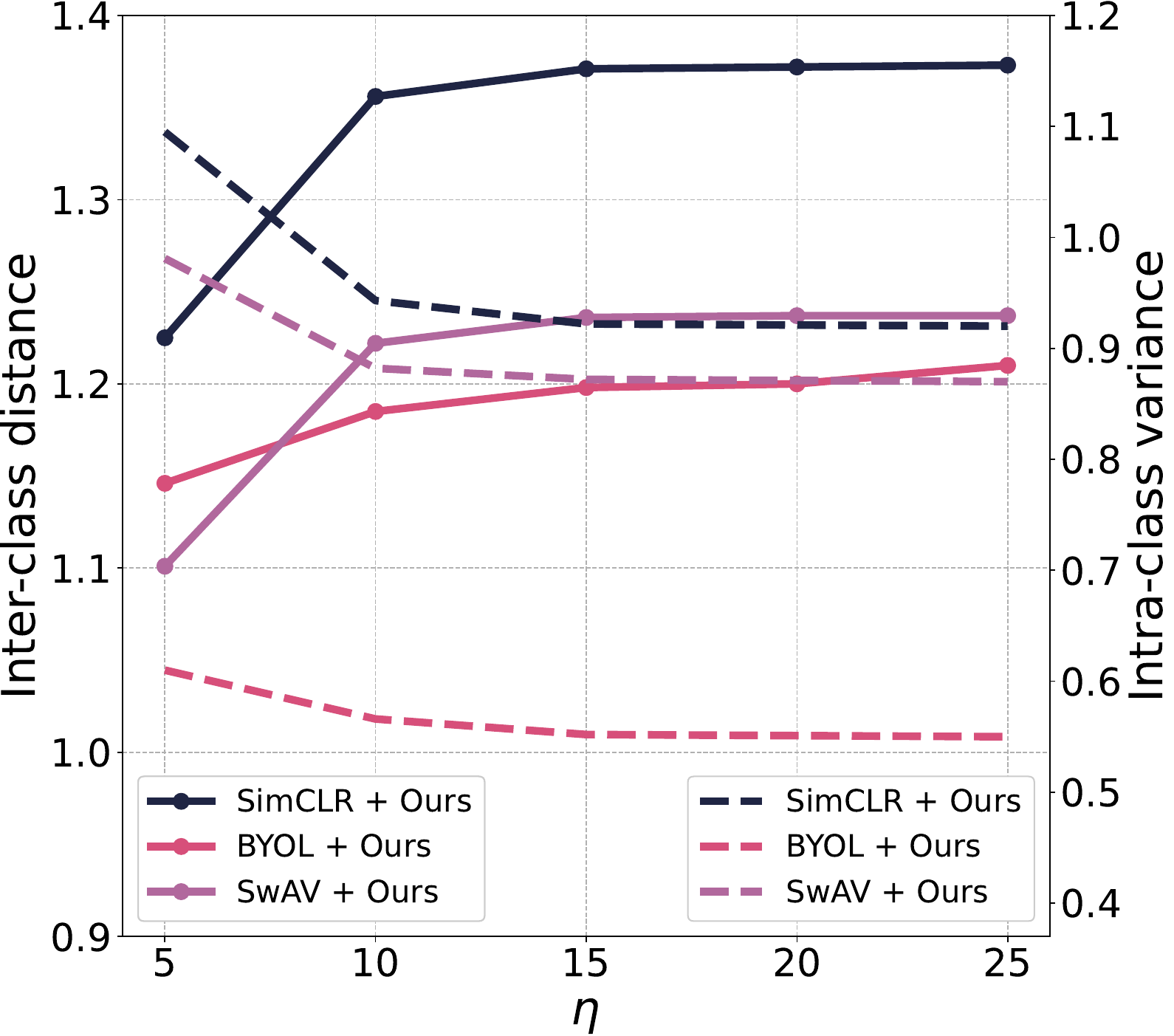}\label{fig:eta_acc}}
    \subfigure[$\tau$]{\includegraphics[width=0.3\textwidth]{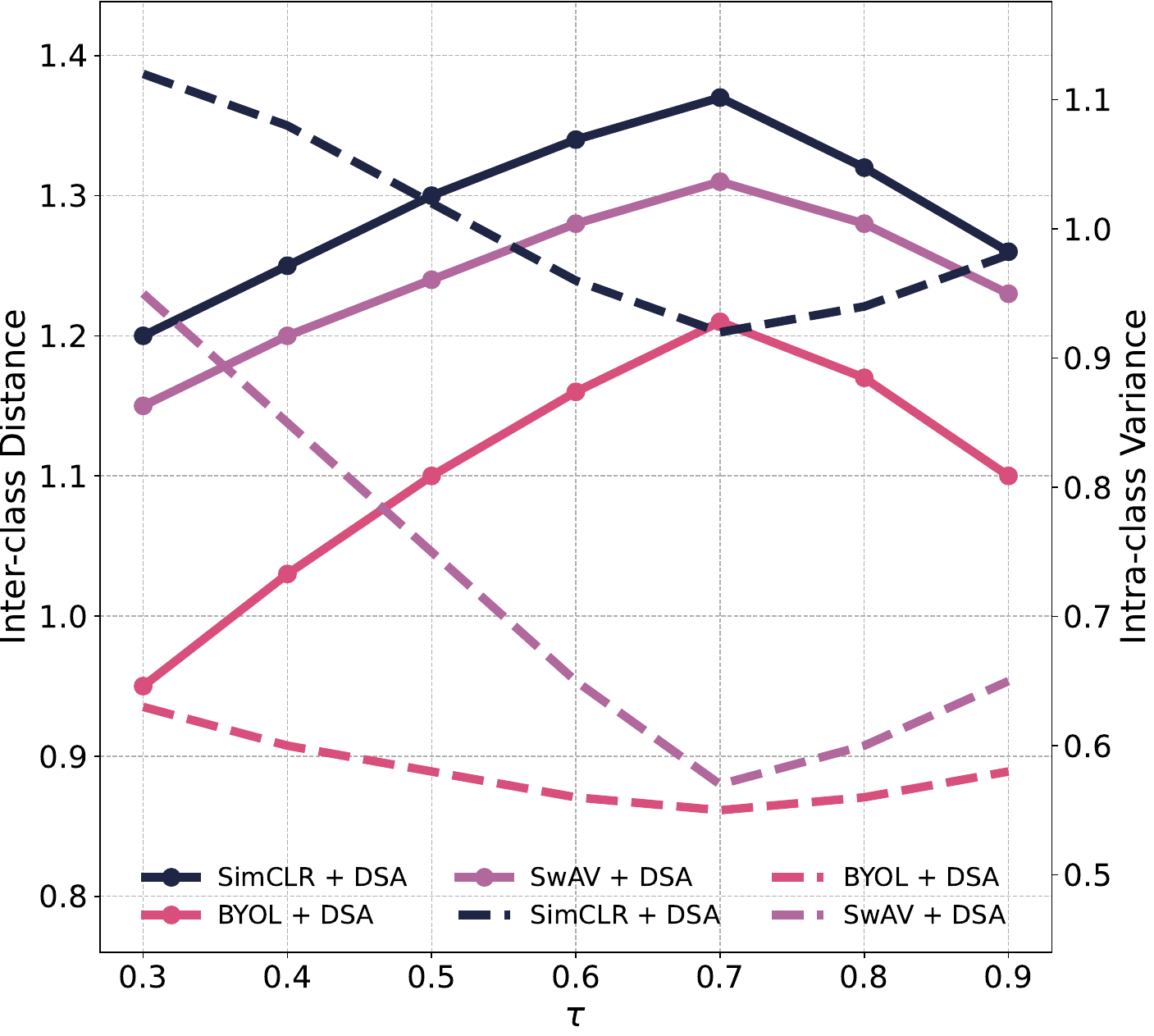}\label{fig:tau_acc}}

    \caption{{The inter-class distance and intra-class variance correspond to different values of hyper-parameters $\nu$, $\upsilon$, $\alpha$, $\eta$, and $\tau$. The solid lines in figures (a) - (e) represent the inter-class distance, while the dashed lines represent the intra-class variance.} }
    \label{fig:hyperparam}
\end{figure*}

\begin{figure*}[htb]
    \centering
    \subfigure[$\nu$]{\includegraphics[width=0.3\textwidth]{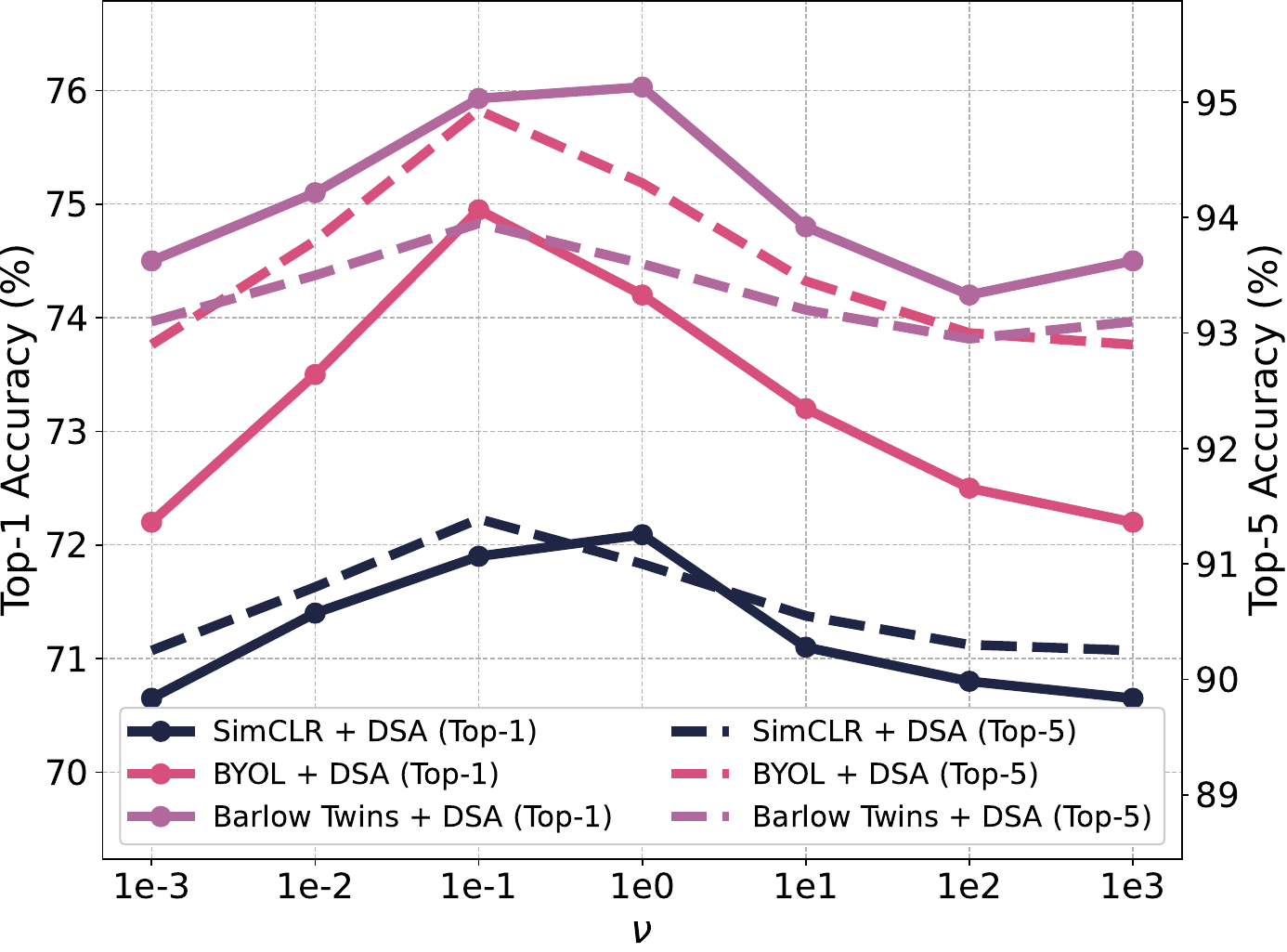}\label{fig:nu_imagenet}}
    \subfigure[$\upsilon$]{\includegraphics[width=0.3\textwidth]{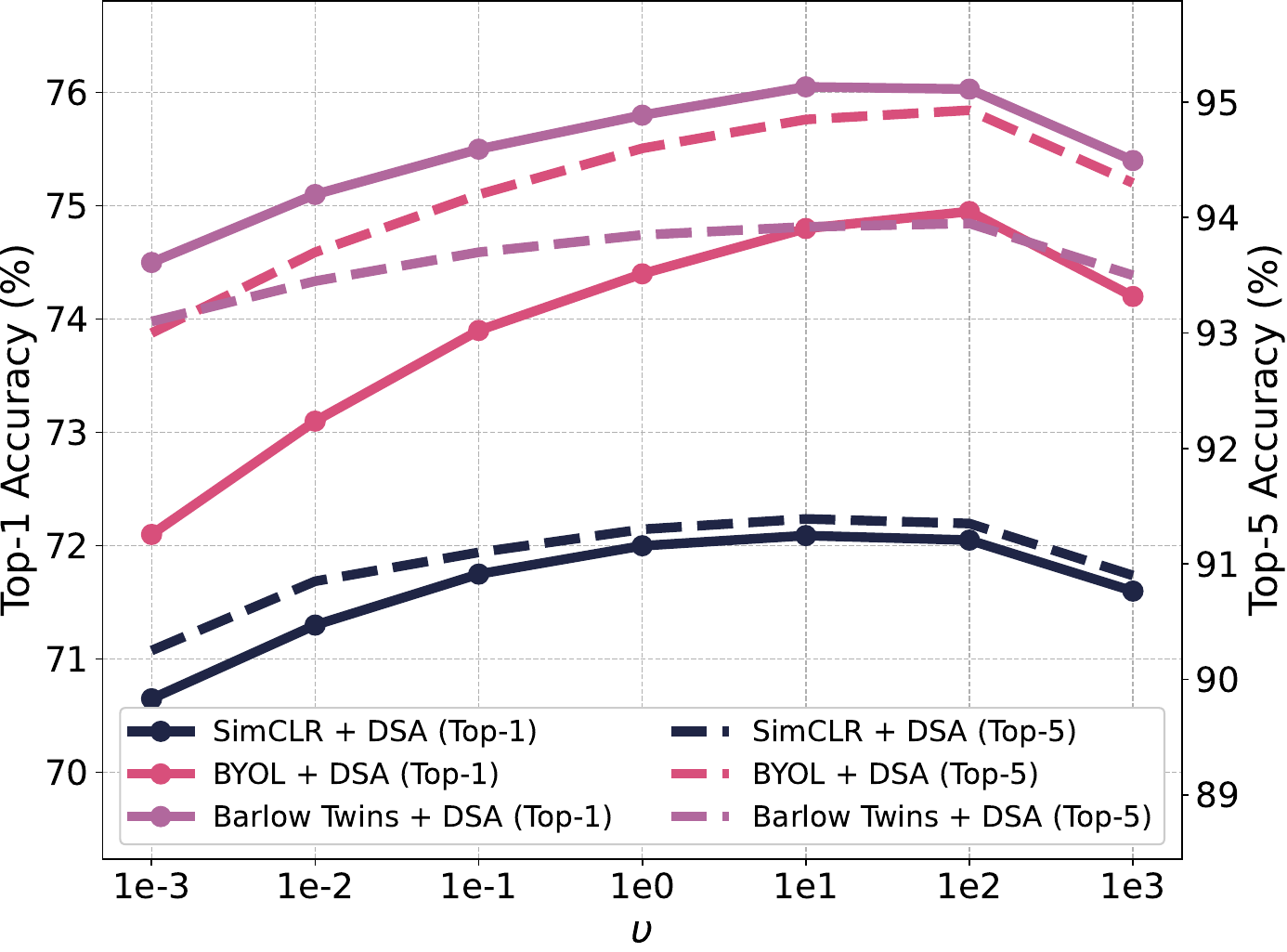}\label{fig:upsilon_imagenet}}
    \subfigure[$\alpha$]{\includegraphics[width=0.3\textwidth]{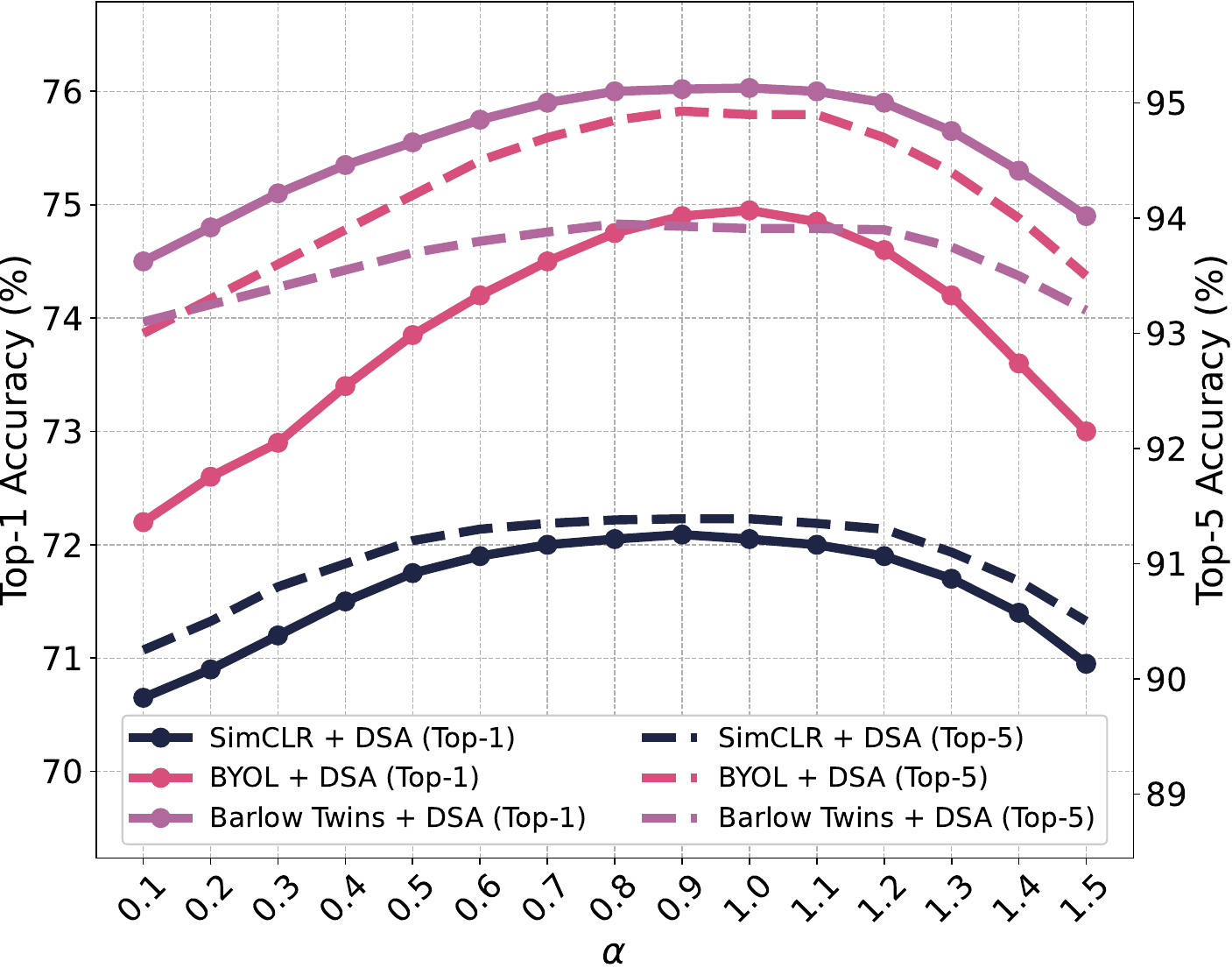}\label{fig:alpha_imagenet}}
    \subfigure[$\eta$]{\includegraphics[width=0.3\textwidth]{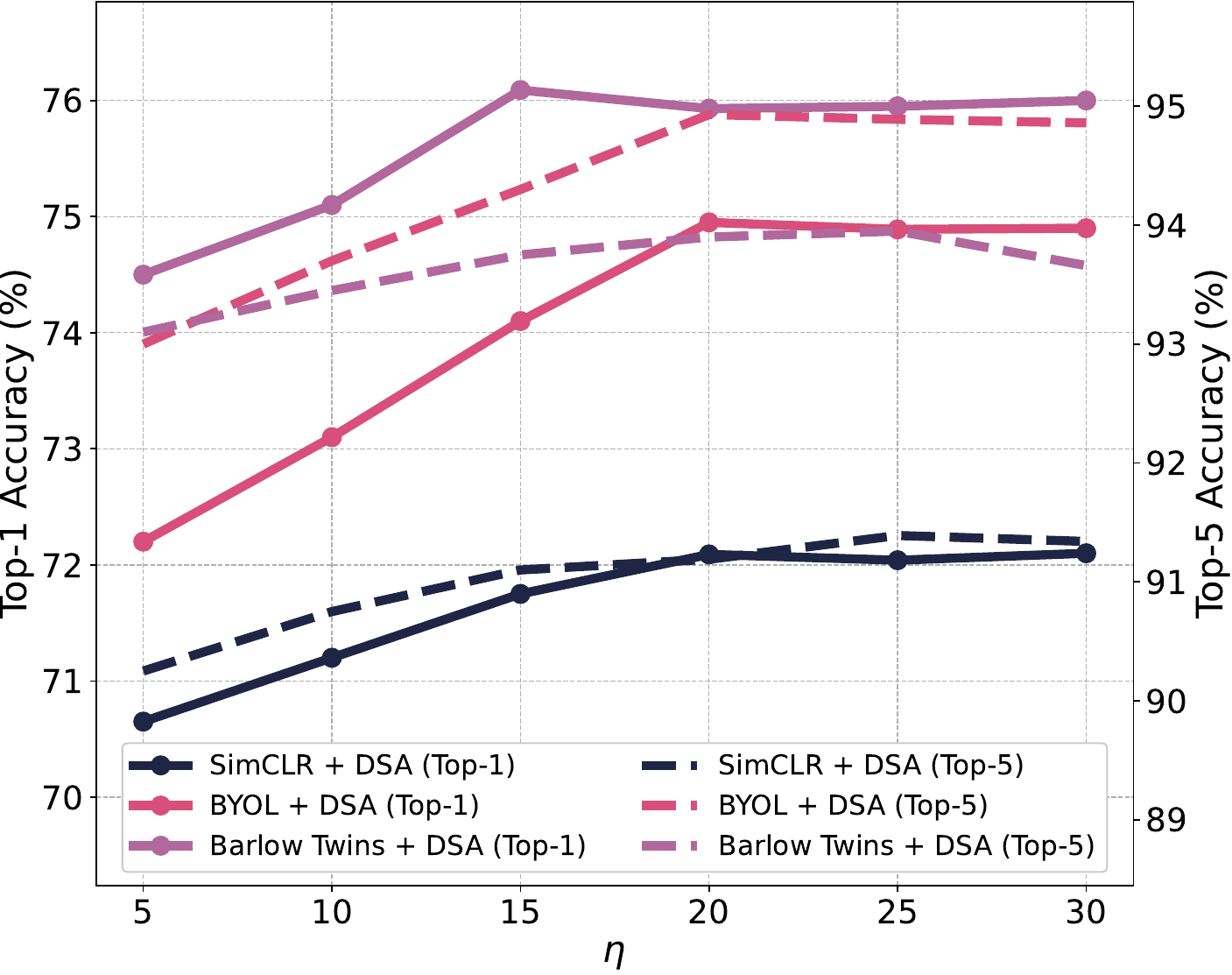}\label{fig:eta_imagenet}}
    \subfigure[$\tau$]{\includegraphics[width=0.3\textwidth]{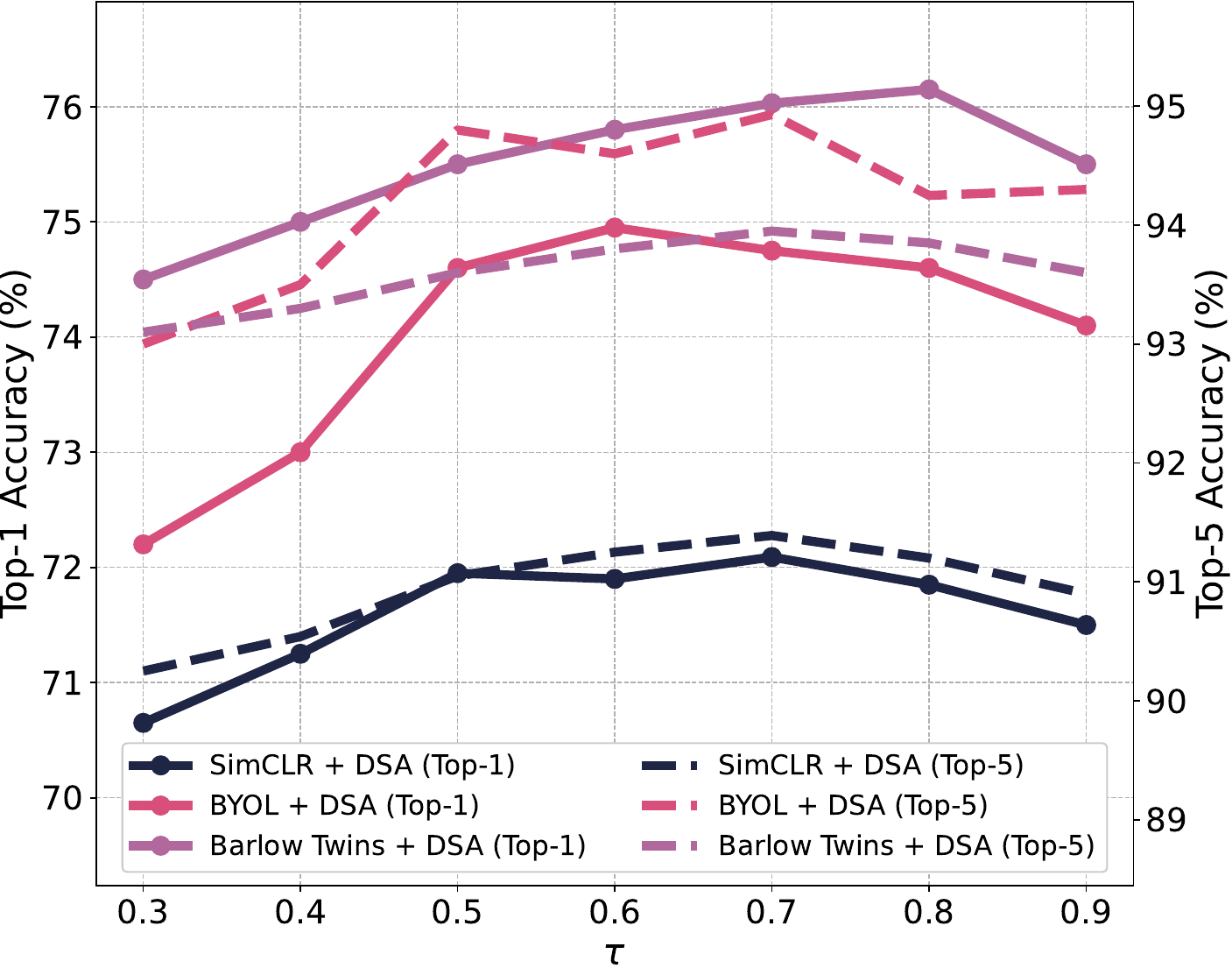}\label{fig:tau_imagenet}}

    \caption{ {ImageNet Top-1 and Top-5 linear accuracies under} different values of hyper-parameters $\nu$, $\upsilon$, $\alpha$, $\eta$, and $\tau$. The solid lines in figures (a)–(e) represent the Top-1 accuracies, while the dashed lines represent the Top-5 accuracies.}

    \label{fig:hyperparam_2}
\end{figure*}

\begin{table}[htb]
    \centering
    \caption{The inter-class distance, the intra-class variance, and the linear evaluation accuracy of SSL methods.}
    \begin{tabular}{lccc}
    \toprule
    \bf   Method  &\bf Inter-class Dist. ($\uparrow$) &\bf Intra-class Var. ($\downarrow$) \\
    \midrule
    SimCLR & 1.17 & 1.15  \\
    BYOL & 0.9 & 0.65  \\
    SwAV & 1.12 & 1.01  \\
    Barlow Twins & 1.06 & 1.11  \\
    MAE & 0.14 & 0.85  \\
    Supervised & 1.32 & 0.62 \\
    \midrule
    \rowcolor{orange!10} SimCLR + DSA &\bf 1.37 & 0.92  \\
    \rowcolor{orange!10} BYOL + DSA & 1.21 &\bf 0.55  \\
    \rowcolor{orange!10} SwAV + DSA & 1.31 & 0.57  \\
    \rowcolor{orange!10} Barlow Twins + DSA & 1.15 & 0.97  \\
    \rowcolor{orange!10} MAE + DSA & 0.64 & 0.65  \\
    \bottomrule
    \end{tabular}
    \label{tab:quantify_compare}
\end{table}

From the results in Table \ref{tab:voc_coco},  {we observe that} the inclusion of DSA significantly improves the AP in object detection and instance segmentation tasks. Specifically, in the VOC 07 detection task, the method with DSA shows an average AP improvement of 1.42\% compared to the SSL baseline. In the VOC 07+12 detection task, the average AP increases by 2.11\%. In the COCO detection task, the average AP increases by 1.65\%, and in the COCO instance segmentation task, the average AP increases by 1.78\%.

 {These results suggest that} incorporating DSA in the self-supervised pre-training process can significantly enhance its effectiveness in downstream detection and instance segmentation tasks.

\begin{table}[htb]
    \centering
    \caption{The inter-class distance, the intra-class variance, and the linear evaluation accuracy of SimCLR + DSA without individual components.}
    \resizebox{\linewidth}{!}{
    \begin{tabular}{lccc}
    \toprule
    \bf   Method  &\bf Inter-class Dist. ($\uparrow$) &\bf Intra-class Var. ($\downarrow$) &\bf ACC (\%) \\
    \midrule
    SimCLR & 1.17 & 1.15 & 70.15 \\
    ~ + DSA &\bf 1.37 &\bf 0.92 &\bf 72.09 \\
    ~ + DSA without SM & 1.24 & 1.06 & 71.06 \\
    ~ + DSA without $sc(\cdot)$ & 1.32 & 0.97 & 71.67 \\
    \midrule
    BYOL & 0.90 & 0.65 & 71.48 \\
    ~ + DSA &\bf 1.21 &\bf 0.55 &\bf 74.95 \\
    ~ + DSA without SM & 0.96 & 0.63 & 72.13 \\
    ~ + DSA without $sc(\cdot)$ & 1.16 & 0.52 & 73.74 \\
    \midrule
    SwAV & 1.12 & 1.01 & 75.78 \\
    ~ + DSA &\bf 1.31 &\bf 0.57 &\bf 77.84 \\
    ~ + DSA without SM & 1.17 & 0.92 & 76.06 \\
    ~ + DSA without $sc(\cdot)$ & 1.26 & 0.67 & 77.23 \\
    \bottomrule
    \end{tabular}
    }
    \label{tab:components}
\end{table}
\subsection{Discriminant analysis}

\paragraph{Visualization of SSL Features} 
To validate the effectiveness of DSA, we used the same experimental setup as in Section \ref{sec:moti}. We visualized the representations obtained from self-supervised learning with DSA on the ImageNet test set using t-SNE. The feature visualization is shown in Figure \ref{fig:tsne_better}. 

From the figure, it can be observed that after incorporating DSA, there are clear boundaries between clusters of different categories, and the distribution within the same category is more concentrated. Specifically, the t-SNE map of MAE + DSA in Figure \ref{fig:tsne_mae_better} shows clustering characteristics compared to MAE in Figure \ref{fig:tsne_mae}.  This indicates that the DSA algorithm effectively reduces intra-class variance and increases inter-class distance. In the next section, we further demonstrate this through quantitative experiments. 

\paragraph{Quantified Evidence of Discrimination}
We further conducted a quantitative analysis of DSA's intra-class variance and inter-class distance using the same methods as in Section \ref{sec:moti}. The results are presented in Table \ref{tab:quantify_compare}.
From the table, it can be observed that for all SSL methods, the inclusion of DSA results in a reduction in intra-class variance and an increase in inter-class distance. This demonstrates the effectiveness of DSA. Additionally, as evidenced by the performance across various downstream tasks discussed in Section \ref{sec:downstream}, reducing intra-class variance and increasing inter-class distance indeed enhance generalization. This validates the correctness of our theoretical analysis.

\subsection{Ablation Study}
\label{subsec:ablation}

\paragraph{Analysis of Hyperparameters} 

{We explore the effects of hyperparameters $\nu$, $\upsilon$, $\eta$, $\alpha$, and $\tau$ by analyzing how they influence the mean inter-class distance and intra-class variance. Specifically, $\nu$ controls the weight of the semantic arranging loss $\mathcal{L}_{AM}^s$, while $\upsilon$ weights the constraint loss $\mathcal{L}_{con}$. Both $\nu$ and $\upsilon$ do not have fixed natural ranges, so we adopt logarithmic scaling to test scenarios ranging from nearly disabling these components to allowing them to dominate the optimization process. The parameter $\alpha$ in $\mathcal{L}_{AM}$ determines whether the term $2M_{i,j} - \alpha$ is positive (causing attractive forces) or negative (causing repulsive forces), with a meaningful range strictly between 0 and 2, since $M_{i,j}$ is normalized between 0 and 1. We choose the tested range of $\alpha$ between [0.1,1.5]. The hyperparameter $\eta$ represents the number of nearest neighbors considered when computing the connectivity score $sc(z_i)$, balancing the sensitivity of local structure estimation and the risk of mistakenly treating outliers as normal samples, and is chosen in the range of 5 to 25 for computational efficiency and stability. Finally, $\tau$ in Equation (11) serves as a temperature parameter that scales the distance term $\Vert z_i - z_j \Vert_2^2$ within the loss function, where lower values amplify both the attractive force between similar samples and the repulsive force between dissimilar samples, while $\tau=1$ applies no scaling; based on observed stability, we constrain $\tau$ to the range [0.3, 1].}

The results are illustrated in Figure \ref{fig:hyperparam}. For results in Figures \ref{fig:nu_acc} to \ref{fig:eta_acc}, the results are obtained by fixing other hyperparameters while changing the corresponding hyperparameter. 

From Figures \ref{fig:nu_acc} and \ref{fig:upsilon_acc}, we can observe that in the objective function in Equation \ref{edfy}, the optimal weights for $\mathcal{L}_{AM}^s$ and $\mathcal{L}_{con}$ are achieved at $\nu=1\times 10^{-1}$ and $\upsilon=1\times 10^2$, respectively.
{From Figure~\ref{fig:alpha_acc}, we observe that as $\alpha$ increases, the inter-class distance gradually rises and stabilizes around $\alpha=1$, while the intra-class variance continuously decreases and also plateaus within this range. However, further increases in $\alpha$ cause the inter-class distance to decline and the intra-class variance to rise again. These results indicate that the threshold $\alpha$ achieves optimal performance when it effectively balances both attractive and repulsive forces.}
From Figure \ref{fig:eta_acc}, we observe that as $\eta$ increases, both the inter-class distance and intra-class variance experience significant growth from $\eta=5$ to $\eta=10$ and then level off. This suggests that when the number of nearest neighbor samples is small, the score $sc(z_i)$ obtained is not accurate. However, as $\eta$ increases, the score $sc(z_i)$ becomes more accurate but stabilizes once $\eta$ reaches a certain value. The default value for $\eta$ is set to 20. {From Figure \ref{fig:tau_acc}, we observe that as $\tau$ increases, the inter-class distance first increases, reaches an optimum around $\tau=0.7$, and then decreases. Meanwhile, the intra-class variance first decreases, reaches its minimum around $\tau=0.7$, and then increases again. Therefore, the default value of $\tau$ is set to $0.7$.}

{
To further evaluate the robustness of DSA across different hyperparameter settings, we perform ImageNet linear evaluation under varying hyperparameter values, as shown in Figure~\ref{fig:hyperparam_2}.
}

{Specifically, in Figure~\ref{fig:nu_imagenet}, although the optimal Top-5 accuracy for Barlow Twins + DSA and the optimal Top-1 accuracy for SimCLR + DSA are both achieved when $\nu=1$, we observed that, on average, all methods achieve relatively higher Top-1 and Top-5 accuracies when $\nu$ is set to 0.1. Furthermore, across the entire range of $\nu$, the DSA-enhanced models consistently outperform the baseline SSL methods.}
{In Figure~\ref{fig:upsilon_imagenet}, the best performance on ImageNet is achieved when $\upsilon$ ranged between 10 and 100.}
{As shown in Figure~\ref{fig:alpha_imagenet}, we test $\alpha$ values in the range from 0.1 to 1.5 and observe that performance declines when $\alpha$ is either too large (favoring only attractive forces) or too small (favoring only repulsive forces). The best results are obtained when $\alpha$ is in the range of 0.8 to 1.1, where attractive and repulsive forces are balanced.}
{In Figure~\ref{fig:eta_imagenet}, the performance trends on ImageNet are generally consistent with the observation in Figure~\ref{fig:eta_acc}.}
{For Figure~\ref{fig:tau_imagenet}, we observe that lower $\tau$ values lead to poorer performance, while optimal results are achieved when $\tau$ is set between 0.6 and 0.8.}

{
These results collectively indicate that although the exact optimal hyperparameter configurations vary across different SSL models, model performance remains relatively stable within certain hyperparameter ranges. Therefore, for fair comparisons across models, we selected a fixed set of hyperparameters that falls within these stable regions. Additionally, we consistently found that under various hyperparameter configurations, models with DSA outperformed their baseline counterparts. We have also provided results on Kinetics-400 in Figure~\ref{fig:hyperparam_3}, FC100 (5-way 1-shot and 5-way 5-shot) in Figure~\ref{fig:hyperparam_4}, as well as HMDB-51 and UCF-101 in Figure~\ref{fig:hyperparam_5} under different hyperparameter settings in \ref{app:ablation}. The results show similar trends, further demonstrating the robustness and generalizability of our approach.
}

\paragraph{Ablation Study on Components of DSA} 
We present an ablation study comparing different components of DSA, i.e. the SM and the scoring mechanism $sc(\cdot)$, in Table \ref{tab:components}. 

Table \ref{tab:components} shows inter-class distance, intra-class variance, and top-1 test accuracy on ImageNet for DSA without each individual component.
From Table \ref{tab:components}, it can be seen that without incorporating SM, the inter-class distance is slightly higher than that of the original SSL method, and the final accuracy is also marginally higher than the original SSL method. This indicates that SM  {plays a crucial role in enhancing the performance of} DSA. Moreover, even without $sc(\cdot)$, the performance is already significantly better than the original SSL method, though still lower than the full DSA method. This demonstrates that the weights obtained by $sc$ play an important auxiliary role in the DSA method.

\begin{table}[ht]
\centering
\caption{
Top-1 accuracies on ImageNet test set for different selection of $f_{aux}$ and $f_s$}
\resizebox{\linewidth}{!}{
\begin{tabular}{lcccc}
\toprule
\multirow{2}{*}{\textbf{Network}} & \multirow{2}{*}{\textbf{Selection}} & \multicolumn{3}{c}{\textbf{Top-1 ACC (\%)}} \\
\cmidrule{3-5}
& & \textbf{SimCLR + DSA} & \textbf{BYOL + DSA} & \textbf{Barlow Twins + DSA} \\
\midrule
\multirow{4}{*}{$f_{aux}$} & CLIP & 72.09 & 74.95 & 76.03 \\ 
 & BEiT & 72.04 & 74.87 & 75.94 \\ 
 & SiFT & 71.98 & 74.82 & 75.91 \\ 
 & ResNet-18 & 72.06 & 74.49 & 76.04 \\ 
\midrule
\multirow{2}{*}{$f_s$} & MLP & 72.09 & 74.95 & 76.03 \\ 
 & Cosine Similarity & 70.96 & 73.39 & 75.88 \\ 
\bottomrule
\end{tabular}
}
\label{tab:f_aux_fs}
\end{table}

\paragraph{The Selection of $f_{aux}$ and $f_s$} 
To evaluate the impact of different choices of $f_{aux}$ and $f_s$, we conduct ablation studies on the ImageNet test set. The default feature extractor is the pre-trained CLIP image encoder \cite{radford2021learning}. We evaluate the selection of $f_{aux}$ with three other different feature extractors, namely open-source pre-trained ResNet-18, Scale-invariant feature transform (SIFT) \cite{lowe2004distinctive}, and BEiT \cite{bao2021beit}. The default setting for $f_s$ is a two-layer MLP with ReLU activation function, we conduct experiments by substituting the network $f_s$ with cosine similarity, which yields $M_{i,j} = \text{sim}(z_i,z_j) = \frac{{z_i \cdot z_j^T}}{{\Vert z_i \Vert \Vert z_j \Vert}}$. All results are depicted in Table \ref{tab:f_aux_fs}. From the results of $f_{aux}$, we can conclude that the performance of DSA is not impacted by the choice of pre-trained feature extractor (i.e., the architecture, pre-training method, or data), while the use of a pre-trained feature extractor is essential. The results of $f_s$ demonstrate that the performance of cosine similarity deteriorates significantly compared to learnable MLP, highlighting the efficacy of $f_s$.

{
\begin{table}[htb]
    \centering
    \caption{{Component-wise runtime breakdown per training iteration during ImageNet pretraining. Experiments were conducted on NVIDIA A100 GPUs with a per-GPU batch size of 512 and image size 224$\times$224. Each value represents the average runtime measured over 50 iterations.}}
    {
    \begin{tabular}{l r}
    \toprule
      Components & Runtime per iteration  \\
    \midrule
    \multicolumn{2}{c}{Forward Pass} \\
    \midrule
    Compute $Z_{tr}^{aug}$ & 498 ms \\
    Compute $M$ & 23 ms \\
    Compute $M^{pro}$ & 69 ms \\
    Compute $sc$ & 48 ms \\
    Compute $\mathcal{L}_{ssl}$ & 15 ms \\
    Compute $\mathcal{L}_{AM}^s$ & 7 ms \\
    Compute $\mathcal{L}_{con}$ & 8 ms \\
    \midrule
    \multicolumn{2}{c}{Backward Pass} \\
    \midrule
    $\mathcal{L}_{ssl}$ & 359 ms \\
    $\mathcal{L}_{DSA}^s$ & 387 ms \\
    \midrule
    \multicolumn{2}{c}{Total time Comparison} \\
    \midrule
    SimCLR & 872 ms \\
    SimCLR + DSA & 1054 ms \\
    \bottomrule
    \end{tabular}
    }
    \label{tab:component_runtime_updated}
\end{table}
}

\begin{figure}
    \centering
    \includegraphics[width=0.7\linewidth]{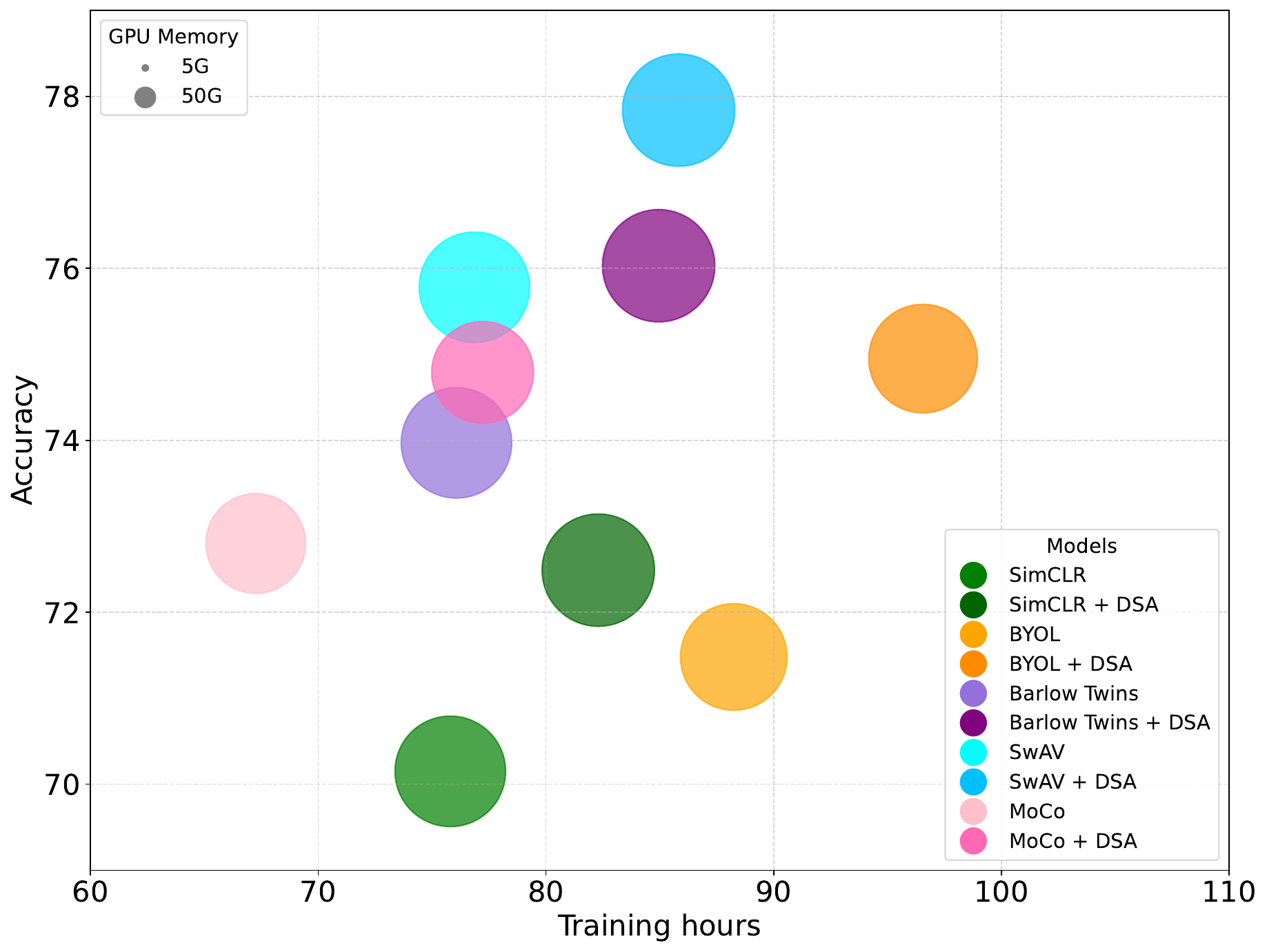}
    \caption{Relationship between training time, accuracy, and memory footprint across various SSL methods and their DSA-enhanced versions.}
    \label{fig:ablation_resource}
\end{figure}

\paragraph{{Computation Complexity Analysis}} 

{
Table~\ref{tab:component_runtime_updated} reports the detailed runtime per training iteration for each component of the proposed DSA framework during ImageNet pretraining. All experiments were conducted on NVIDIA A100 GPUs with a per-GPU batch size of 512 and an image size of 224×224, with runtimes averaged over 50 iterations.
In the forward pass, most of the computation time is spent generating the augmented features $Z_{tr}^{aug}$, which takes 498 ms per iteration. Additional operations introduced by DSA, such as computing the similarity matrices $M$ and $M^{pro}$ and the connection score $sc$, contribute further overhead but remain moderate, with $M^{pro}$ and $sc$ requiring 69 ms and 48 ms per iteration, respectively. Calculating the losses $\mathcal{L}_{ssl}$, $\mathcal{L}_{AM}^s$, and $\mathcal{L}_{con}$ in the forward pass collectively adds another 30 ms.
In the backward pass, the baseline SimCLR loss $\mathcal{L}_{ssl}$ requires 359 ms per iteration. The backward time for SimCLR + DSA, captured as $\mathcal{L}_{DSA}^s$, is 387 ms, indicating only 28 ms of additional computation over the baseline. Overall, compared to SimCLR's runtime of 872 ms per iteration, incorporating DSA results in a total runtime of 1054 ms per iteration.}

{Furthermore, Figure~\ref{fig:ablation_resource} visualizes the trade-offs among computational cost, memory consumption, and performance across various SSL methods and their DSA-enhanced versions. The figure demonstrates that although integrating DSA leads to longer total training times and slightly higher memory usage, it consistently improves accuracy across all baseline methods.}

{These results confirm that while DSA introduces some additional computational and memory overhead, the increase remains modest and manageable for large-scale pretraining scenarios, offering favorable trade-offs for the observed performance gains.}

\section{Conclusion}
\label{sec:conclu}
In this paper, we present empirical evidence of the existence of the crowding problem in self-supervised learning and provide theoretical proof that minimizing intra-class variance and maximizing inter-class separation effectively compresses the upper bound of classification error. Based on these findings, we propose a novel approach called ``Dynamic Semantic Adjuster" (DSA), which aims to attract samples with similar semantics while repelling others. We evaluate the effectiveness of our proposed methods through experiments in comprehensive downstream tasks. We also provide visualization and quantified evidence to analyze the discriminability of the learned representation. The experimental results and ablation study demonstrate the effectiveness of our proposed DSA.

\section{{Limitation and Future Work}}
\label{sec:limit}
{A notable limitation of our proposed DSA lies in its sensitivity to hyperparameter choices. Although our ablation study reveals broad stable regions where DSA consistently improves performance, suboptimal configurations may still lead to degraded semantic separation. Furthermore, in our current experiments, we used a unified set of DSA hyperparameters across all SSL models to ensure fair comparisons and to demonstrate the general applicability of the method. We acknowledge that model-specific tuning could further optimize performance for individual SSL backbones. Future work will explore automatic strategies for balancing the arrangement and scoring objectives, as well as adaptive hyperparameter tuning for different SSL architectures, potentially via meta-gradient methods or adaptive scheduling.}

{Our approach holds potential for extension to broader application domains beyond the current setting.}  
The structure-aware regularization inherent in our approach is well-suited to capture geometric consistency .  
Moreover, its ability to model discriminative features may also benefit the alignment of  {complex patterns in diverse data types}.  
In future work, we will explore these  {potential directions} to further validate the generality of our method.

\newpage
\bibliographystyle{elsarticle-num} 
\bibliography{main}

\newpage
\appendix
\renewcommand{\thetheorem}{}
\section{Proof of Theorem \ref{fghjk}}
\label{app:theorem1}
\begin{theorem}
{\rm{\textbf{IV.1.}}}
    If Assumption \textbf{IV.1.} holds, then, for any $f \in \mathcal{F}$, $\mathcal{L}_{CE}^\mu \left( f \right)$ can be bounded by $\mathcal{L}_{\rm NCE}^\mu \left( f \right)$ as:
\begin{equation}
        \begin{array}{l}
\mathcal{L}_{CE}^\mu \left( f \right)  \le {\mathcal{L}_{{\rm{NCE}}}}\left( f \right) - const + \sum\limits_{i = 1,j = 1i \ne j}^K {\mu _i^{\rm{T}}{\mu _j}} \\
 \quad\quad\quad \quad \quad+ \sqrt {{\rm{Var}}\left( {f\left( x \right)\left| y \right.} \right)}  + O\left( {cons{t^{{{ - 1} \mathord{\left/
 {\vphantom {{ - 1} 2}} \right.
 \kern-\nulldelimiterspace} 2}}}} \right)
\end{array}
    \end{equation}
where const is a constant that is related to the number of negative samples, ${\mu _i} = {\mathbbm{E}_{p\left( {x\left| i \right.} \right)}}\left[ {f\left( x \right)} \right]$, and ${\rm{Var}}\left( {f\left( x \right)\left| y \right.} \right) = {\mathbbm{E}_{p\left( y \right)}}\left[ {{\mathbbm{E}_{p\left( {x\left| y \right.} \right)}}{{\left\| {f\left( x \right) - {\mathbbm{E}_{p\left( {x\left| y \right.} \right)}}f\left( x \right)} \right\|}^2}} \right]$.
\end{theorem}
\begin{proof} \label{fghasdk}
Suppose that $Q$ represents the number of negative samples. Then, we have:
\begin{align}
    &\quad \mathbbm{E}_{p(x_i,x_j)}[ \log\frac{1}{Q}\sum_{i=1}^Q\exp(f^{\rm{T}}(x_i)f(x_j))  \nonumber\\
    &\quad\quad\quad\quad\quad\quad\quad\quad\quad - \log\mathbbm{E}_{p(x_i)}\exp(f^{\rm{T}}(x_i)f(x_j)] \nonumber\\
    &\leq e\mathbbm{E}_{p(x_i,x_j)}[\frac{1}{Q}\sum_{i=1}^Q\exp(f^{\rm{T}}(x_i)f(x_j)) \nonumber\\
    &\quad\quad\quad\quad\quad\quad\quad\quad\quad -\mathbbm{E}_{p(x_i)}\exp(f^{\rm{T}}(x_i)f(x_j))] \nonumber\\
    &= \mathcal{O}\left( {{Q^{ - \frac{1}{2}}}} \right)
\end{align}
where the first inequality follows the Intermediate Value Theorem and $e$ (the natural number) is the upper bound of the absolute derivative of log between two points when $\left| {{f^{\rm{T}}}\left( {{x_i}} \right)f\left( {{x_j}} \right)} \right| \le 1$. The second inequality follows the Berry-Esseen Theorem given the bounded support of ${\exp \left( {{f^{\rm{T}}}\left( {{x_i}} \right)f\left( {{x_j}} \right)} \right)}$ as following: for i.i.d random variables ${\psi _i}$ with bounded support $supp\left( \psi  \right) \subset \left[ { - a,a} \right]$, zero mean and bounded variance $\sigma _\psi ^2 < {a^2}$, we have:
\begin{align}
    \mathbbm{E}\left[\frac{1}{Q}\sum_{i = 1}^Q \psi_i \right] &= \frac{\sigma_\psi}{\sqrt{Q}}\int_0^{\frac{a\sqrt{Q}}{\sigma_\psi}}p\left[\frac{1}{\sigma_\psi\sqrt{Q}}\sum_{i=1}^Q\psi_i > x\right]dx \nonumber\\
    &\leq \frac{\sigma_\psi}{\sqrt{Q}}\int_0^{\frac{a\sqrt{Q}}{\sigma_\psi}}p[\vert N(0,1)\vert>x]dx + \frac{C_a}{\sqrt{Q}}dx \nonumber\\
    &\leq \frac{C_a}{\sqrt{Q}} + \frac{a}{\sqrt{Q}}\mathbbm{E}[N(0,1)] \nonumber\\
    &= \mathcal{O}\left( {{Q^{ - \frac{1}{2}}}} \right)
\end{align}
where ${{C_a}}$ is the constant that only depends on $a$ and ${\psi _i} = \exp \left( {{f^{\rm{T}}}\left( {{x_i}} \right)f\left( {{x_j}} \right)} \right)$.

Then, we suppose that the classification task consists of $K$ categories and denote that ${u_y}$ as the center of features of class $y, y \in \left\{ {1,...,K} \right\}$. We denote ${p\left( {x,{x^ + },y} \right)}$ as the joint distribution of the positive pairs ${x,{x^ + },y}$, denote ${x^ + }$ as the random variable of positive sample, denote ${x^ +_i }$ as a negative sample, denote ${x^-}$ as the random variable of negative sample, and denote $y^-$ as the negative class. We have:
\begin{align}
    \mathcal{L}_{\rm{NCE}} &= -\mathbbm{E}_{p(x,x^+)}f(x)^{\rm{T}} f(x^+) \nonumber\\
                 &\quad + \mathbbm{E}_{p(x)}\mathbbm{E}_{p(x_i^-)}\log\sum_{i=1}^Q \exp(f(x)^{\rm{T}}f(x_i^-))  + \log Q\nonumber\\
                 &\geq -\mathbbm{E}_{p(x,x^+)}f(x)^{\rm{T}} f(x^+) \nonumber\\
                 &\quad + \mathbbm{E}_{p(x)}\log \frac{1}{Q}\mathbbm{E}_{p(x_i^-)}\exp(f(x)^{\rm{T}}f(x_i^-))   + \log Q\nonumber\\
                 &\geq -\mathbbm{E}_{p(x,x^+,y)}f(x)^{\rm{T}} f(x^+) - const(Q) + \log Q \nonumber\\
                 &\quad + \mathbbm{E}_{p(x)}\log \frac{1}{Q}\mathbbm{E}_{p(y^-)}\mathbbm{E}_{p(x_i^-|y^-)}\exp(f(x)^{\rm{T}}f(x_i^-)) \nonumber\\ 
                 &\geq -\mathbbm{E}_{p(x,x^+,y)}[f(x)^{\rm{T}} u_y + \Vert f(x^+) - u_y \Vert] \nonumber\\
                 &\quad + \mathbbm{E}_{p(x)}\log\mathbbm{E}_{p(y^-)}\exp(f(x)^{\rm{T}}u_{y^-}) \nonumber\\ 
                 &\quad - const(Q) + \log Q- \sum\limits_{i = 1,j = 1,i \ne j}^K {\mu _i^{\rm{T}}{\mu _j}}\nonumber\\
                 &\geq -\mathbbm{E}_{p(x,y)}f(x)^{\rm{T}} u_y - \sqrt{\mathbbm{E}_{p(x,y)}\Vert f(x) - u_y \Vert^2} \nonumber\\
                 &\quad + \mathbbm{E}_{p(x)}\log\mathbbm{E}_{p(y^-)}\exp(f(x)^{\rm{T}}u_{y^-}) \nonumber\\
                 &\quad - const(Q) + \log Q- \sum\limits_{i = 1,j = 1,i \ne j}^K {\mu _i^{\rm{T}}{\mu _j}} \nonumber\\
                 &= \mathbbm{E}_{p(x,y)}[-f(x)^{\rm{T}} u_y + \log\sum_{k=1}^K\exp(f(x)^{\rm{T}}u_k)] + \log M \nonumber\\
                 & -\sqrt{{\rm{Var}} (f(x)|y)}  - const(Q) - \sum\limits_{i = 1,j = 1,i \ne j}^K {\mu _i^{\rm{T}}{\mu _j}}\nonumber\\
                 &= \mathcal{L}_{CE}^{\mu}(f) -\sqrt{{\rm{Var}} (f(x)|y)}  - const(Q) \nonumber\\
                 &\quad + \log \frac{Q}{K}- \sum\limits_{i = 1,j = 1,i \ne j}^K {\mu _i^{\rm{T}}{\mu _j}}\nonumber\\
                 &= \mathcal{L}_{CE}^{\mu}(f) -\sqrt{{\rm{Var}} (f(x)|y)} - \sum\limits_{i = 1,j = 1,i \ne j}^K {\mu _i^{\rm{T}}{\mu _j}} \nonumber\\
                 &\quad + \mathcal{O}(const(Q)^{-\frac{1}{2}})
\end{align}
This ends the proof.

\end{proof}

\section{{Additional Results}}
\label{app:ablation}

\subsection{{Configuration of T-SNE Visualization}}
\label{app:config}
{To ensure fair and reliable visualization comparisons, we carefully tuned the hyperparameters of the t-SNE algorithm for each method. Specifically, for every SSL method and the supervised baseline, we conducted a grid search over perplexity values $\{10, 20, 30, 40, 50, 60, 70, 80, 90, 100\}$.}

{For each perplexity value, we ran t-SNE five times with different random seeds and computed two quantitative metrics:
\begin{itemize}
    \item \textbf{KL divergence}, which measures how well the low-dimensional embedding preserves the pairwise similarities of the original high-dimensional data.
    \item \textbf{Trustworthiness at 12 neighbors (T@12)}, a widely-used metric that quantifies the local neighborhood preservation in the lower-dimensional space.
\end{itemize}}

{We selected the perplexity value that achieved the highest T@12 score among those configurations whose KL divergence remained below $1.0$, ensuring both local neighborhood preservation and stable embedding quality. Table~\ref{tab:tsne_config} summarizes the selected perplexity values, as well as the corresponding averaged KL divergence and trustworthiness scores over five runs.}

\begin{table}[htb]
    \centering
    \caption{{Best configuration (perplexity) and corresponding KL divergence and trustworthiness (for 12 neighbors) in the t-SNE 2D visualization of each method, averaged over 5 runs.}}
    {\begin{tabular}{lccc}
    \toprule
    Method    &  Perplexity  &  KL-Divergence & T@12  \\
    \midrule
    SimCLR    &  50 & 0.94$\pm$0.03 & 0.91$\pm$0.016 \\
    BYOL      &  40 & 0.87$\pm$0.07 & 0.90$\pm$0.008 \\
    Barlow Twins & 30 & 0.78$\pm$0.08 & 0.91$\pm$0.012 \\
    SwAV      &  70 & 0.71$\pm$0.10 & 0.92$\pm$0.014 \\
    MAE       &  80 & 0.68$\pm$0.13 & 0.93$\pm$0.007 \\ 
    Supervised & 60 & 0.58$\pm$0.02 & 0.94$\pm$0.008 \\
    \midrule
    SimCLR + DSA   &  40 & 0.90$\pm$0.24 & 0.93$\pm$0.023 \\
    BYOL + DSA     &  50 & 0.83$\pm$0.11 & 0.92$\pm$0.012 \\
    Barlow Twins + DSA & 70 & 0.74$\pm$0.05 & 0.92$\pm$0.008 \\
    SwAV + DSA      &  60 & 0.76$\pm$0.12 & 0.91$\pm$0.011 \\
    MAE + DSA      &  60 & 0.65$\pm$0.06 & 0.94$\pm$0.012 \\ 
    \bottomrule
    \end{tabular}}
    \label{tab:tsne_config}
\end{table}

\subsection{{Additional Visualization Results}}
\label{app:more_results}
{To further verify the effectiveness and generality of our method, we conducted additional visualization experiments using 3D t-SNE, PCA, and UMAP, complementing the 2D t-SNE results reported in Figure~\ref{fig:tsne}.}

{Specifically, we visualized feature embeddings from all methods (both with and without DSA) on the same subset of eight randomly selected classes from the ImageNet test set. All features were $\ell_2$-normalized prior to dimensionality reduction to ensure consistency across methods. } 

{For the 3D t-SNE visualizations, we performed a grid search over perplexity values in $\{10, 20, 30, 40, 50, 60, 70, 80, 90, 100\}$ for each method, and selected the value yielding the lowest KL divergence as the best configuration. We also repeated each t-SNE run five times and reported the average KL divergence and the trustworthiness score at 12 neighbors ($T@12$) to quantify the embedding stability, as summarized in Table~\ref{tab:tsne_config_3d}.}  

{In addition, we generated PCA visualizations by projecting the features onto the first two principal components, and we visualized the same features with UMAP using $n_\text{neighbors}=15$ and a minimum distance of 0.1. The results for PCA and UMAP are shown in Figures~\ref{fig:pca}--\ref{fig:pca_better} and Figures~\ref{fig:umap_simclr}--\ref{fig:umap_mae_better}, respectively.  }

{Across all three visualization techniques, we consistently observed that the feature distributions of SSL methods without DSA exhibit larger intra-class variance and greater overlap between different classes, which reflects the crowding problem. In contrast, the SSL methods integrated with DSA demonstrate significantly more compact intra-class clusters and clearer inter-class separation in the embedding space. These qualitative observations align well with our quantitative measurements of intra-class variance and inter-class distance presented earlier in Table~\ref{tab:quantify} and Table~\ref{tab:quantify_compare}.}

\begin{table}[htb]
    \centering
    \caption{{Best configuration (perplexity) and corresponding KL divergence and trustworthiness (for 12 neighbors) in the t-SNE 3D visualization of each method, averaged over 5 runs.}}
    {\begin{tabular}{lccc}
    \toprule
    Method    &  Perplexity  &  KL-Divergence & T@12  \\
    \midrule
    SimCLR    &  30 & 0.81$\pm$0.23 & 0.95$\pm$0.015 \\
    BYOL      &  50 & 0.91$\pm$0.13 & 0.91$\pm$0.008 \\
    Barlow Twins & 40 & 0.95$\pm$0.08 & 0.90$\pm$0.005 \\
    SwAV      &  60 & 0.83$\pm$0.12 & 0.92$\pm$0.015 \\
    MAE       &  80 & 0.92$\pm$0.05 & 0.91$\pm$0.011 \\ 
    Supervised & 50 & 0.69$\pm$0.02 & 0.90$\pm$0.008 \\
    \midrule
    SimCLR + DSA   &  40 & 0.91$\pm$0.23 & 0.91$\pm$0.015 \\
    BYOL + DSA     &  20 & 0.89$\pm$0.13 & 0.90$\pm$0.008 \\
    Barlow Twins + DSA & 20 & 0.85$\pm$0.08 & 0.92$\pm$0.005 \\
    SwAV + DSA      &  20 & 0.93$\pm$0.12 & 0.93$\pm$0.015 \\
    MAE + DSA      &  80 & 0.82$\pm$0.05 & 0.93$\pm$0.011 \\ 
    \bottomrule
    \end{tabular}}
    \label{tab:tsne_config_3d}
\end{table}

\begin{figure*}[t]
     \centering
     \subfigure[SimCLR]{\includegraphics[width=0.3\textwidth]{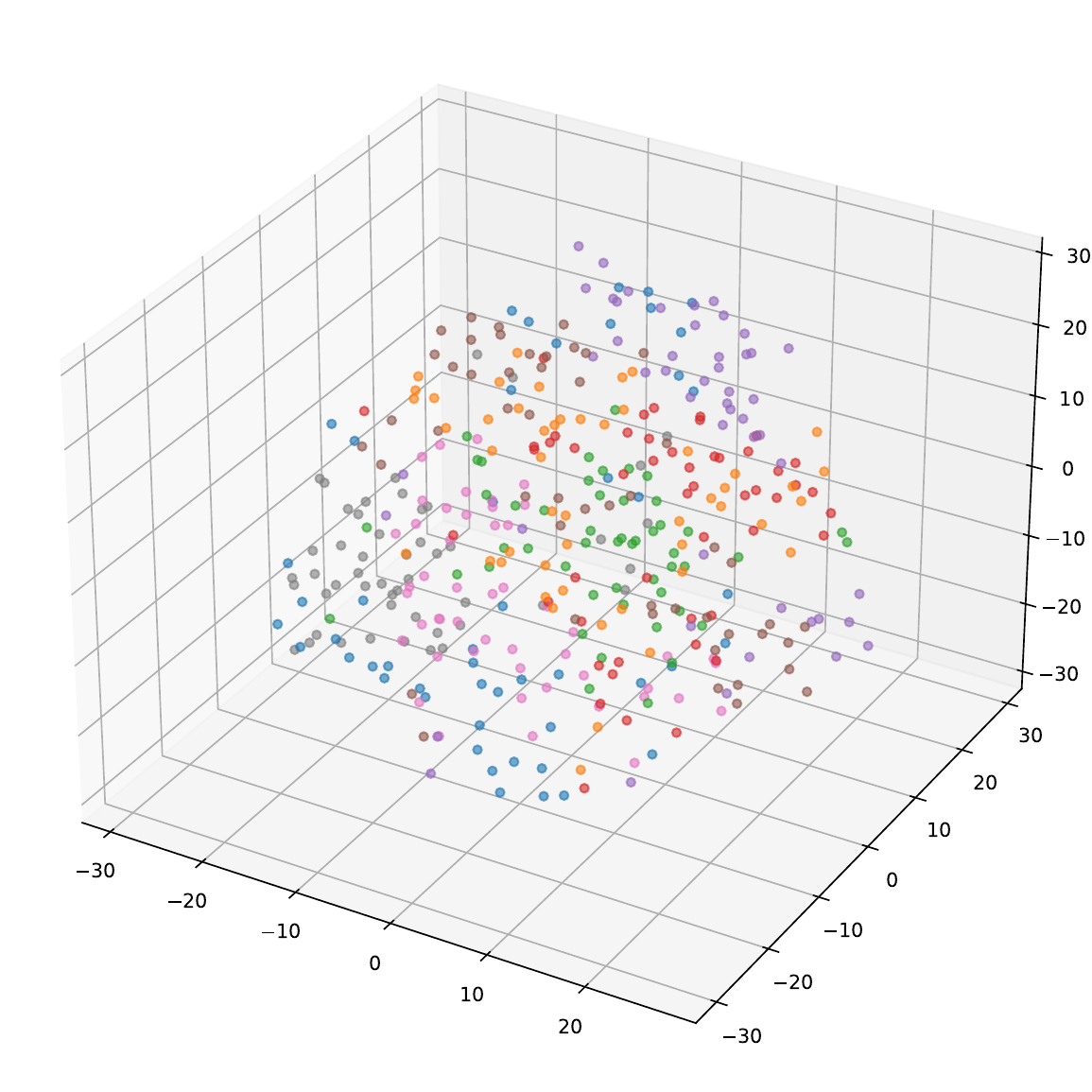}\label{fig:tsne_3d_simclr}}
     \subfigure[BYOL]{\includegraphics[width=0.3\textwidth]{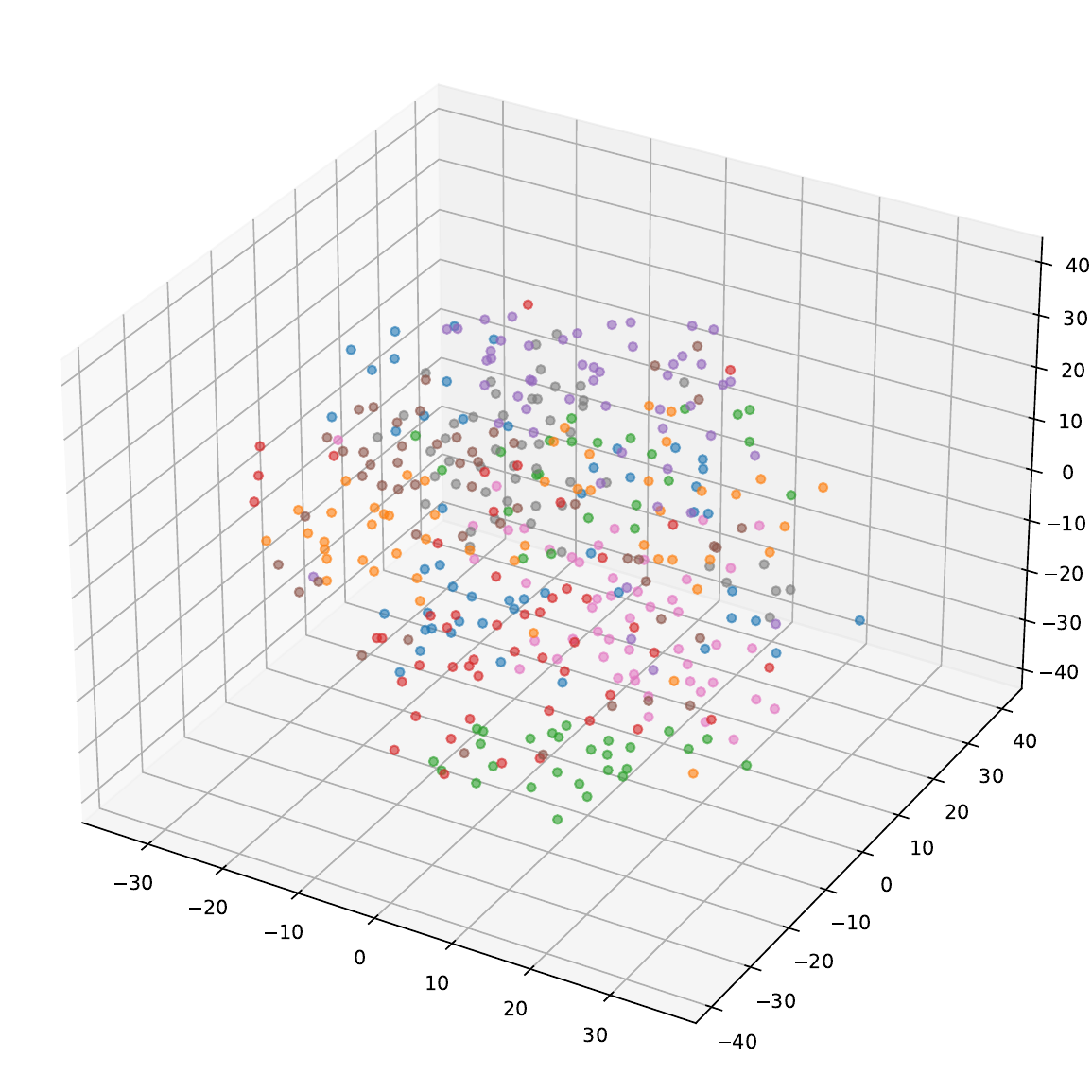}\label{fig:tsne_3d_byol}}
     \subfigure[Barlow Twins]{\includegraphics[width=0.3\textwidth]{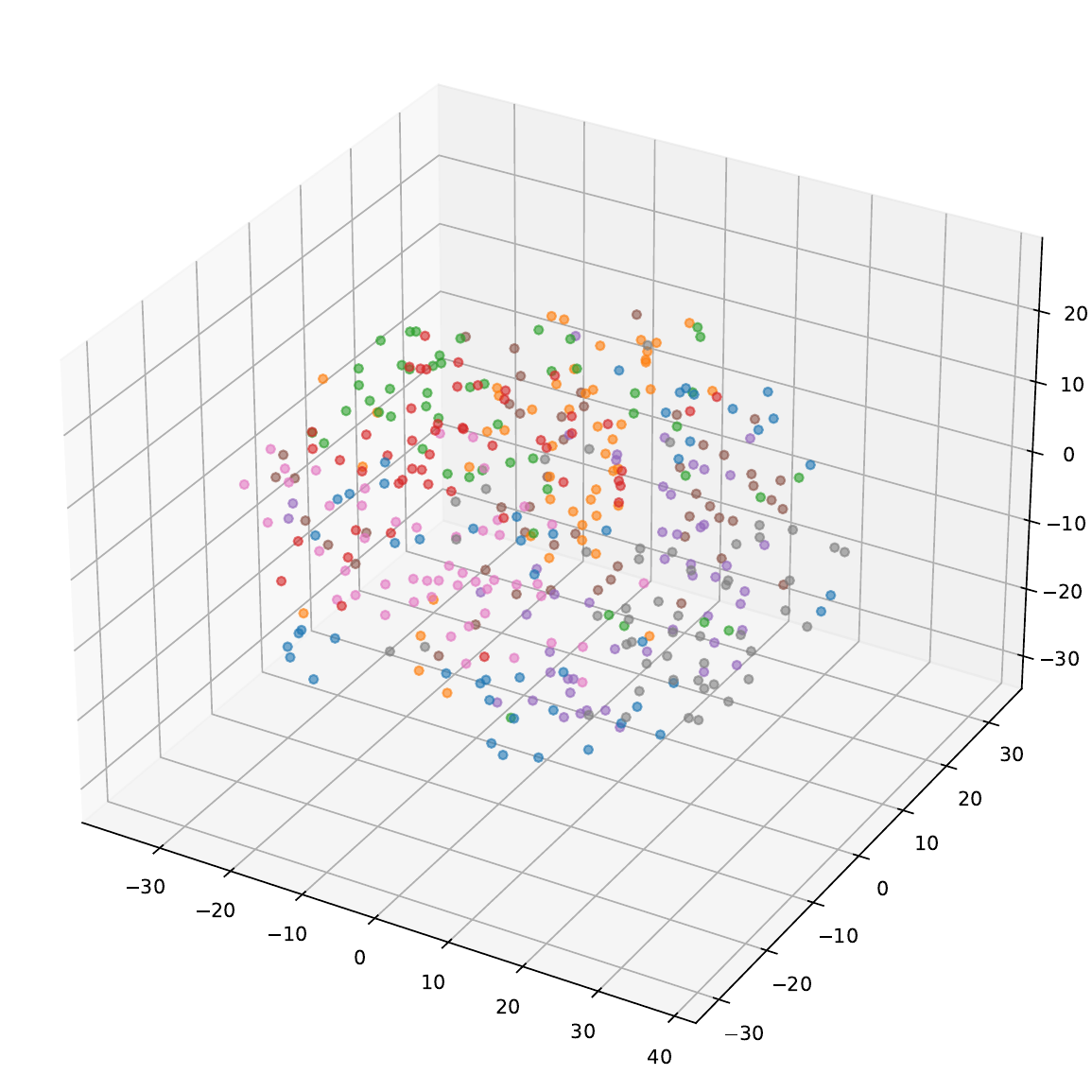}\label{fig:tsne_3d_barlow}}
     \subfigure[SwAV]{\includegraphics[width=0.3\textwidth]{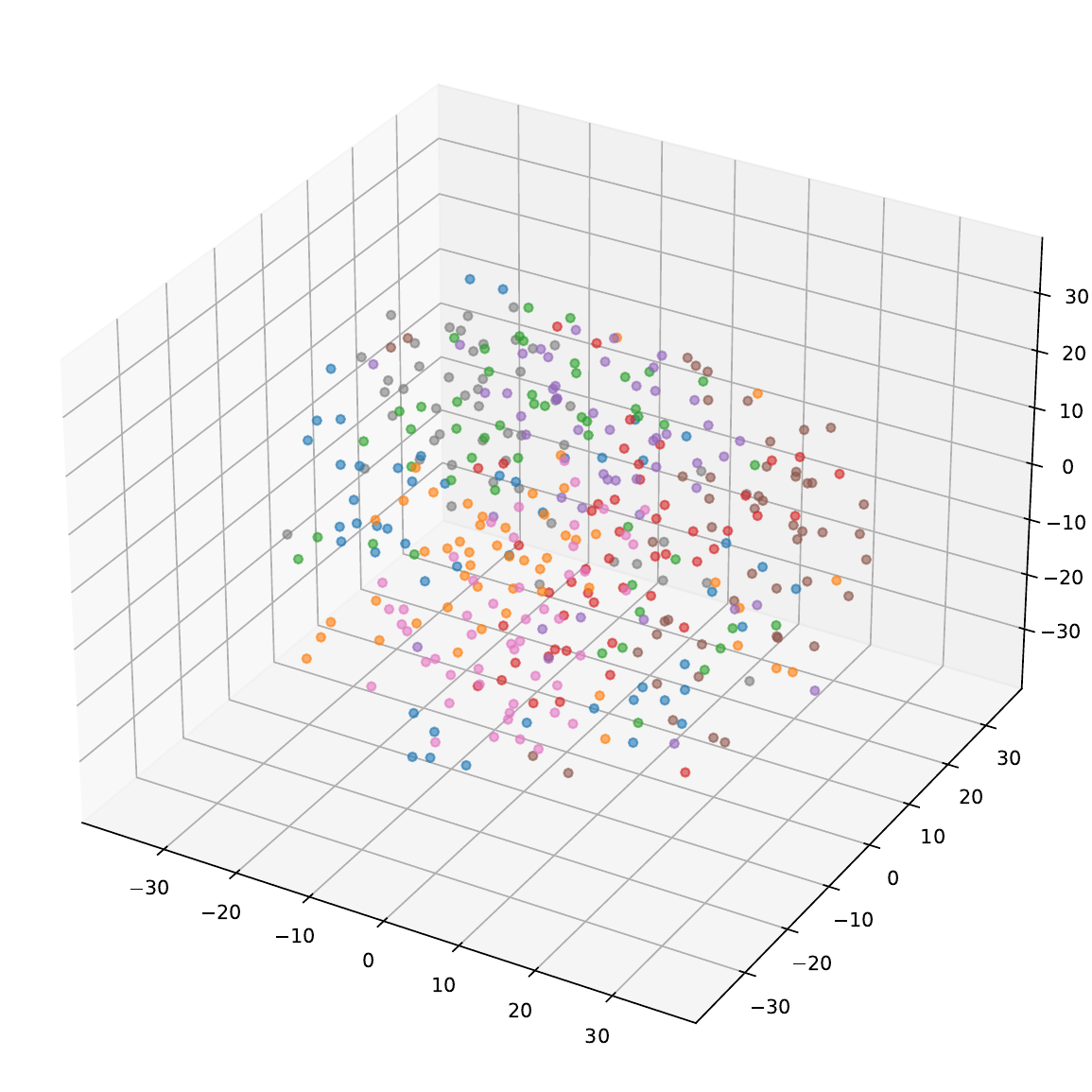}\label{fig:tsne_3d_swav}}
     \subfigure[MAE]{\includegraphics[width=0.3\textwidth]{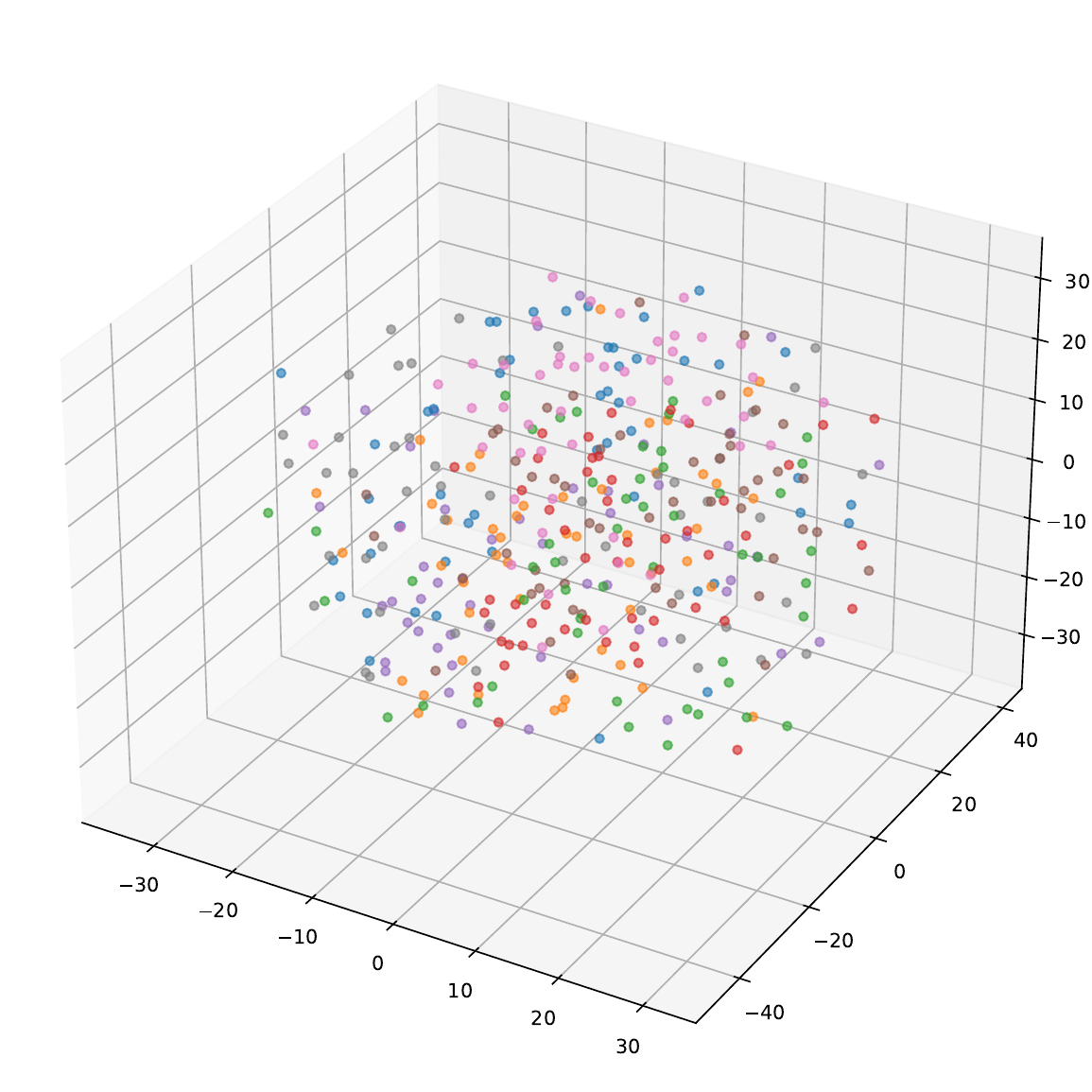}\label{fig:tsne_3d_mae}}
     \subfigure[Supervised]{\includegraphics[width=0.3\textwidth]{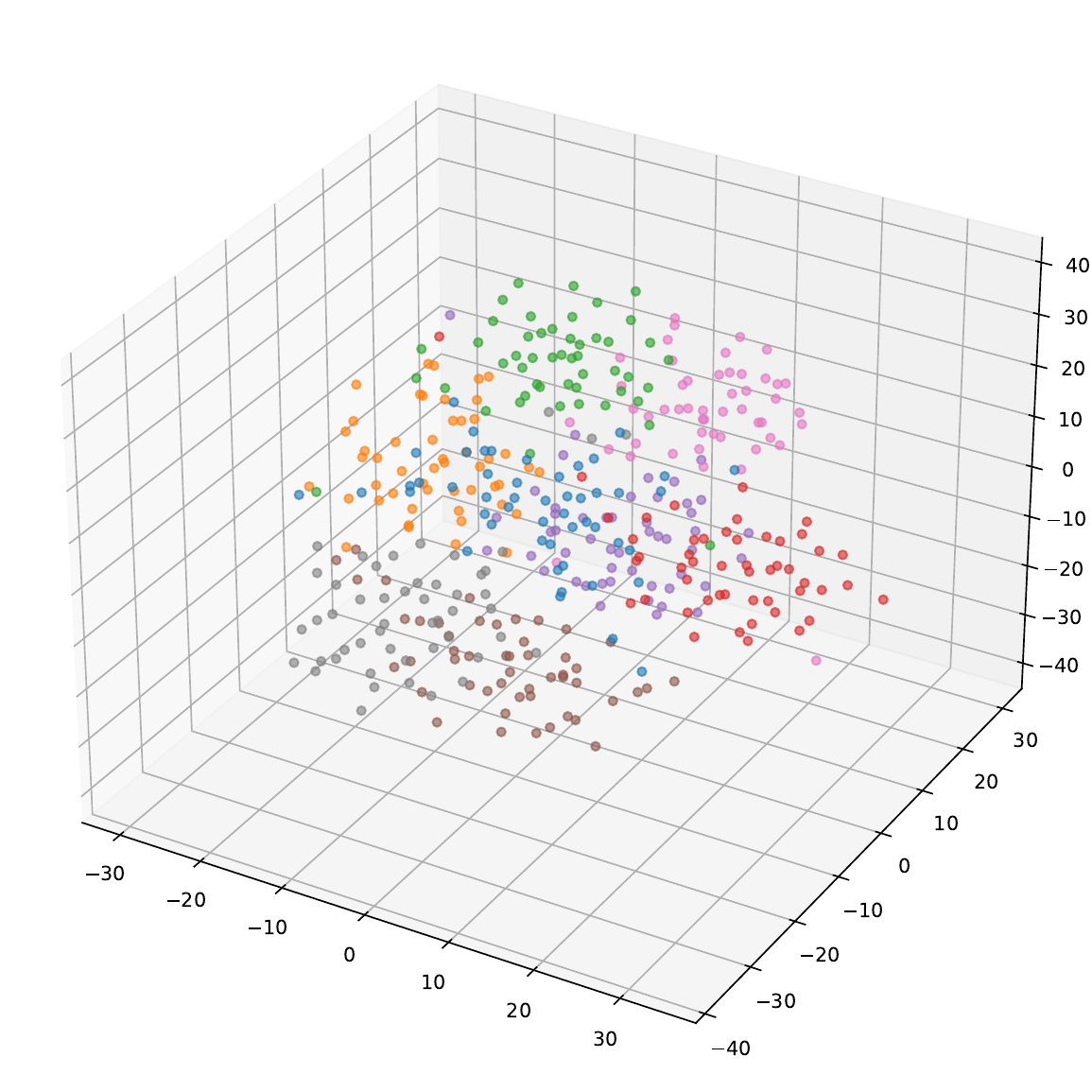}\label{fig:tsne_3d_supervised}}
    \caption{{Data distribution visualization with 3D t-SNE based on 8 random classes of the test set of ImageNet in the feature space. (a) - (e) corresponds to the visualization results of the self-supervised method, while (f) corresponds to the visualization results of the supervised method. }}
    \label{fig:tsne_3d}
\end{figure*}

\begin{figure*}[htb]
    \centering
    \subfigure[SimCLR + DSA]{\includegraphics[width=0.3\textwidth]{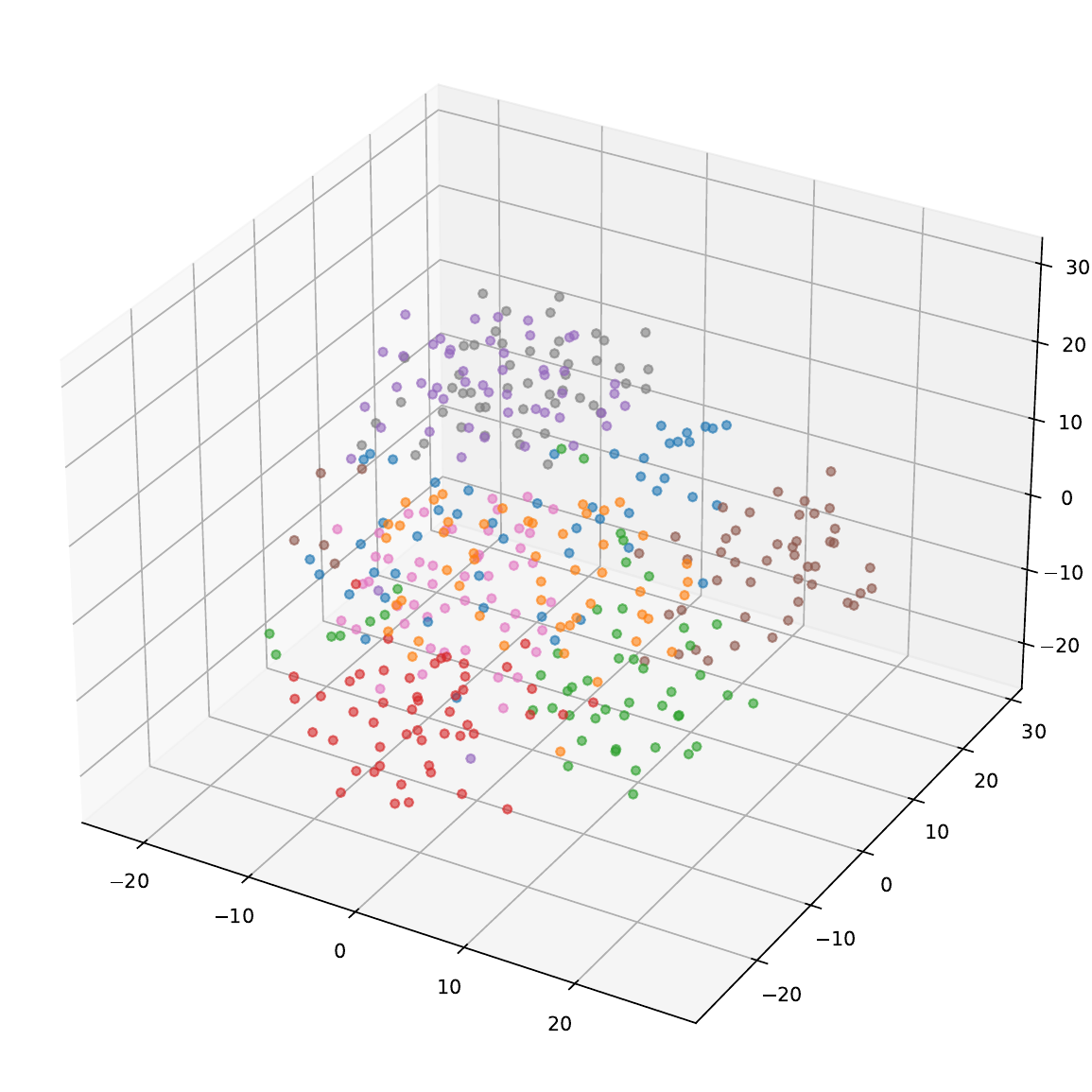}\label{fig:tsne_3d_simclr_better}}
     \subfigure[BYOL + DSA]{\includegraphics[width=0.3\textwidth]{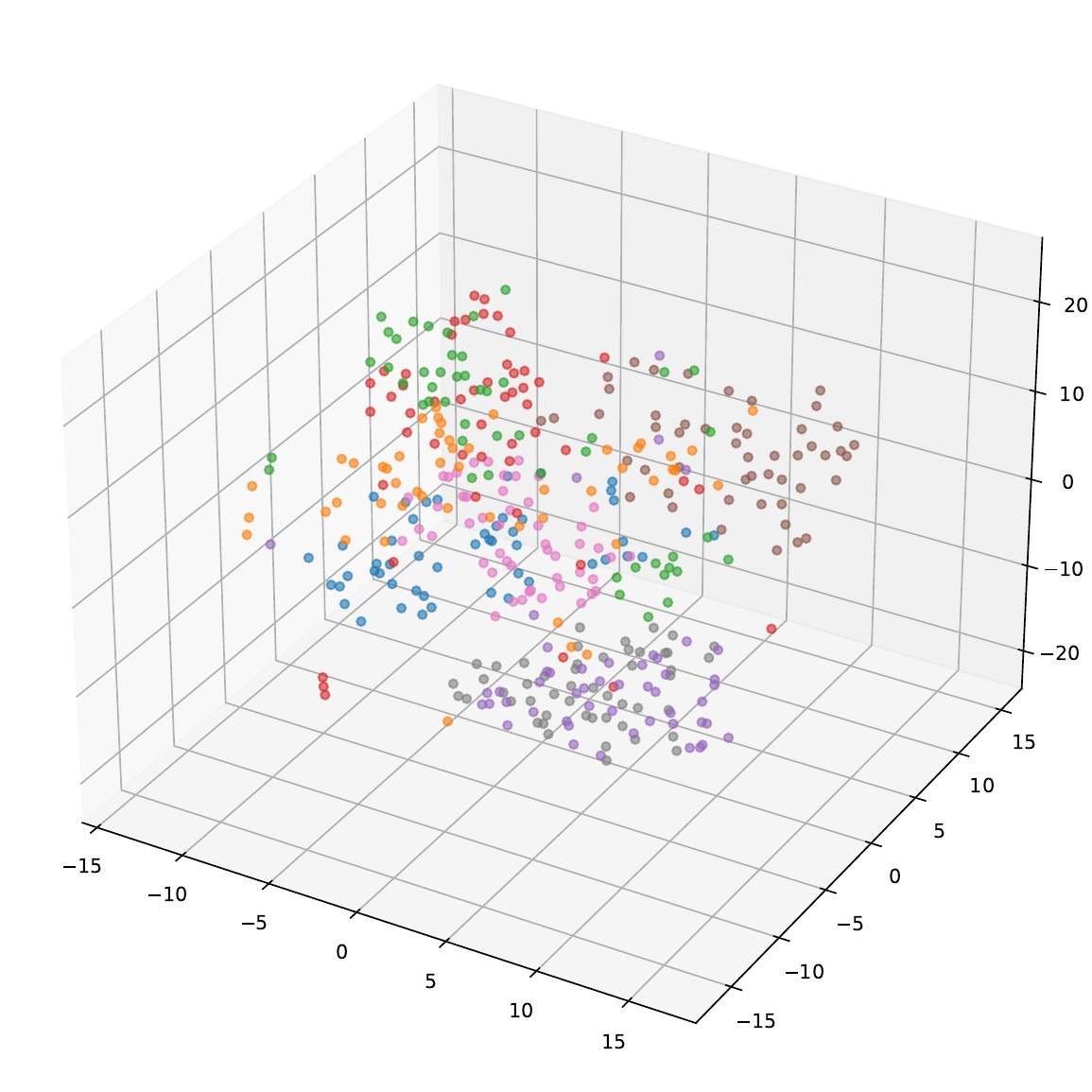}\label{fig:tsne_3d_byol_better}}
     \subfigure[Barlow Twins + DSA]{\includegraphics[width=0.3\textwidth]{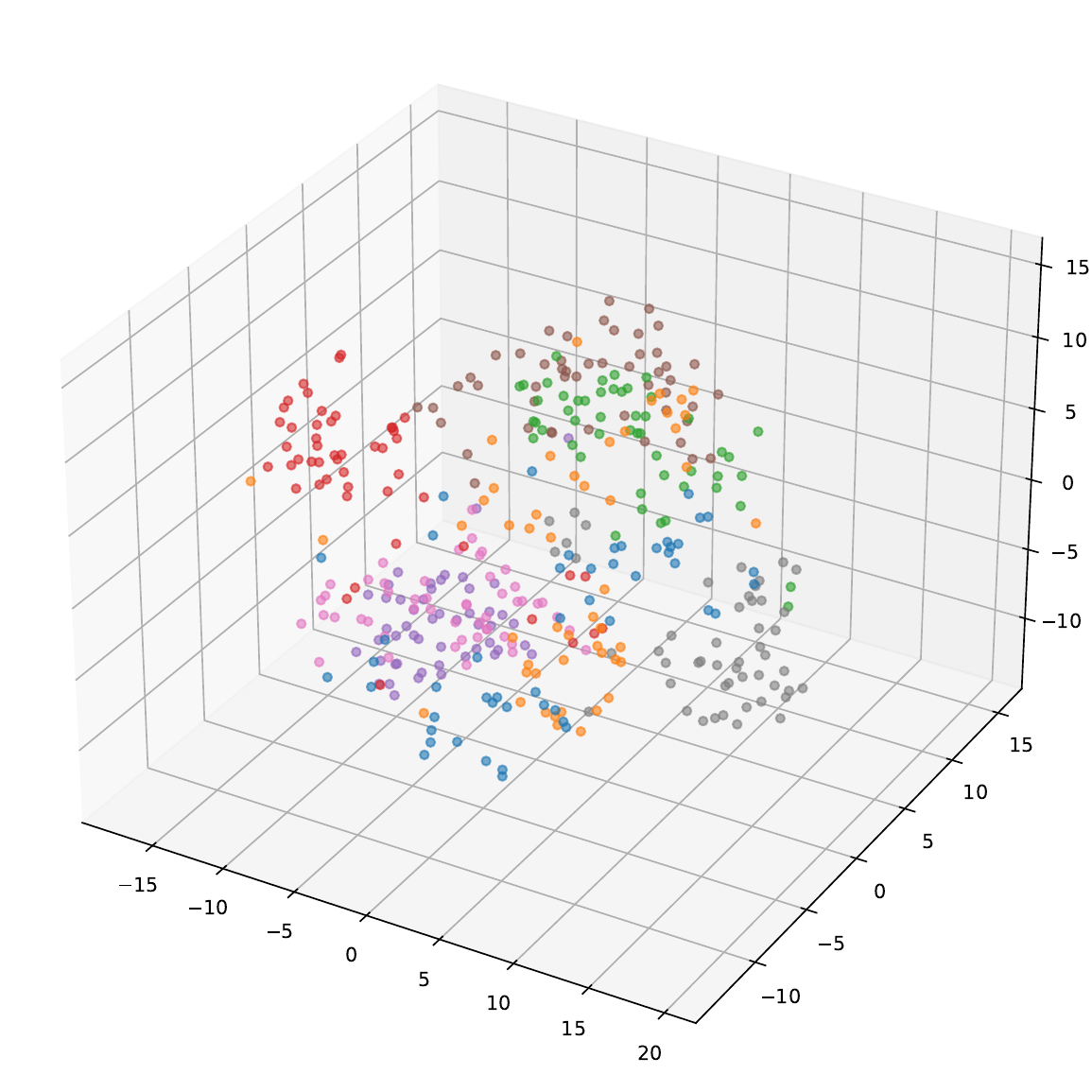}\label{fig:tsne_3d_barlow_better}}
     \subfigure[SwAV + DSA]{\includegraphics[width=0.3\textwidth]{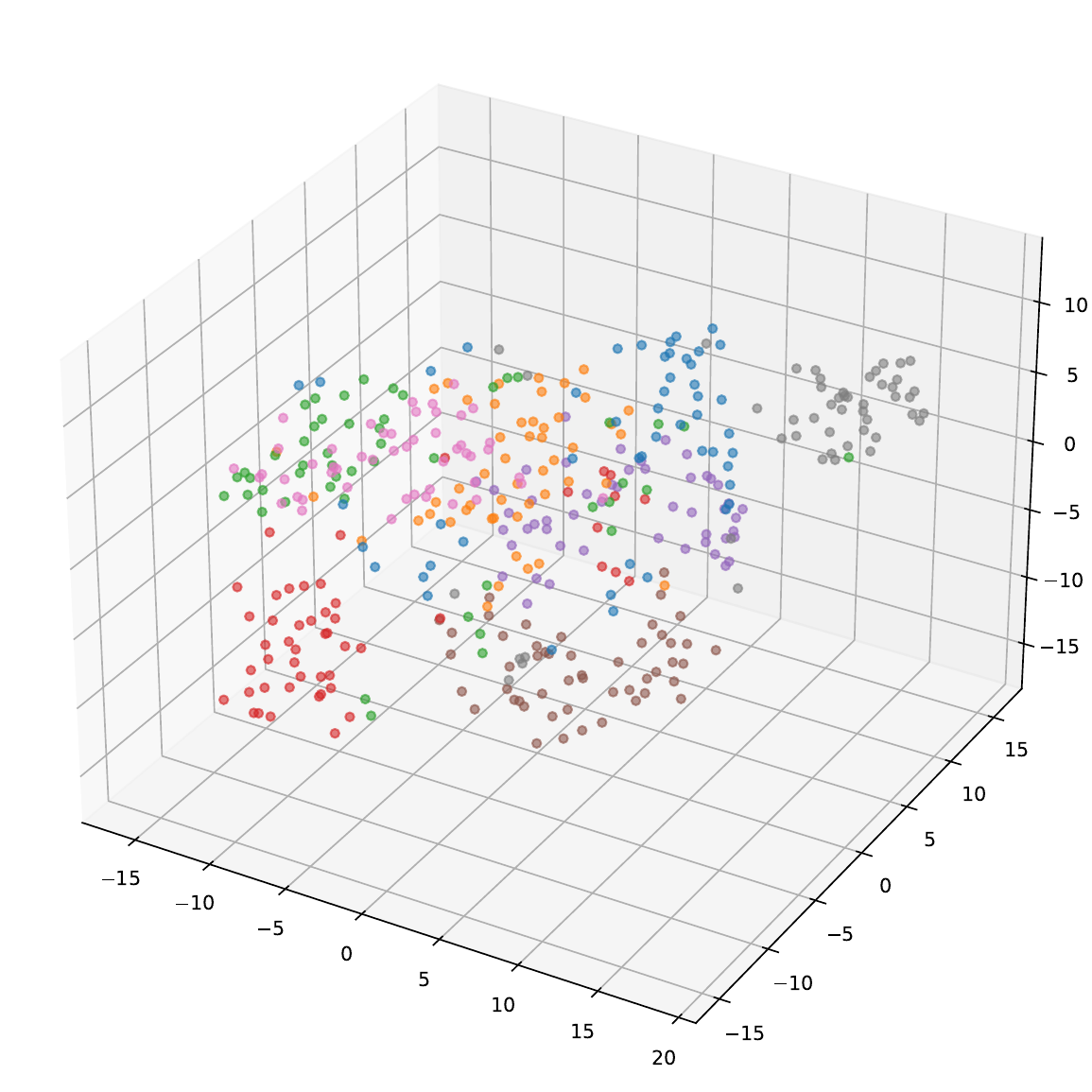}\label{fig:tsne_3d_swav_better}}
     \subfigure[MAE + DSA]{\includegraphics[width=0.3\textwidth]{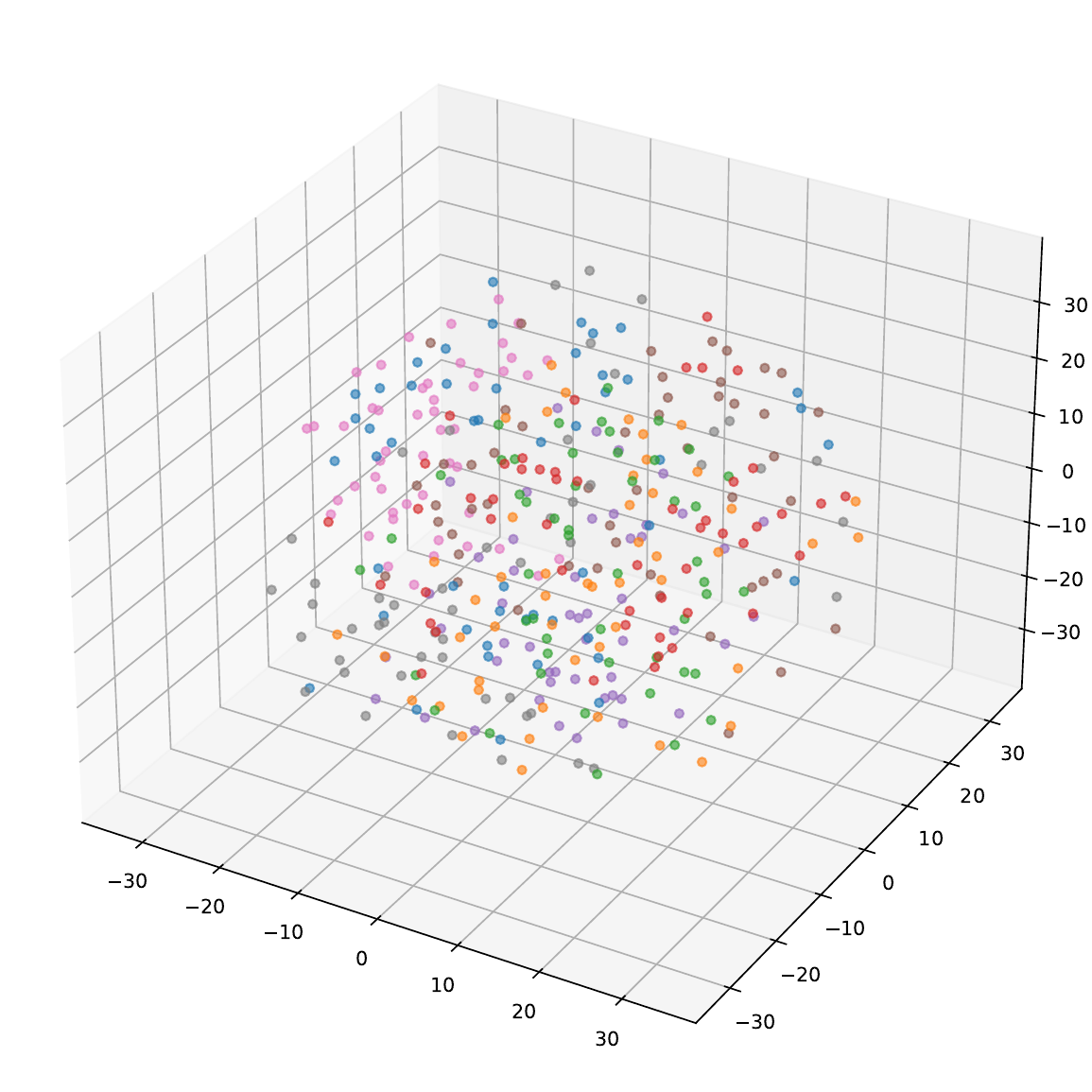}\label{fig:tsne_3d_mae_better}}
    \caption{{Data distribution visualization with 3D t-SNE based on the same set of classes as Figure \ref{fig:tsne} of the test set of ImageNet in the feature space. (a) - (e) corresponds to the visualization results of the self-supervised methods integrated with DSA. }}
    \label{fig:tsne_3d_better}
\end{figure*}

\begin{figure*}[t]
     \centering
     \subfigure[SimCLR]{\includegraphics[width=0.3\textwidth]{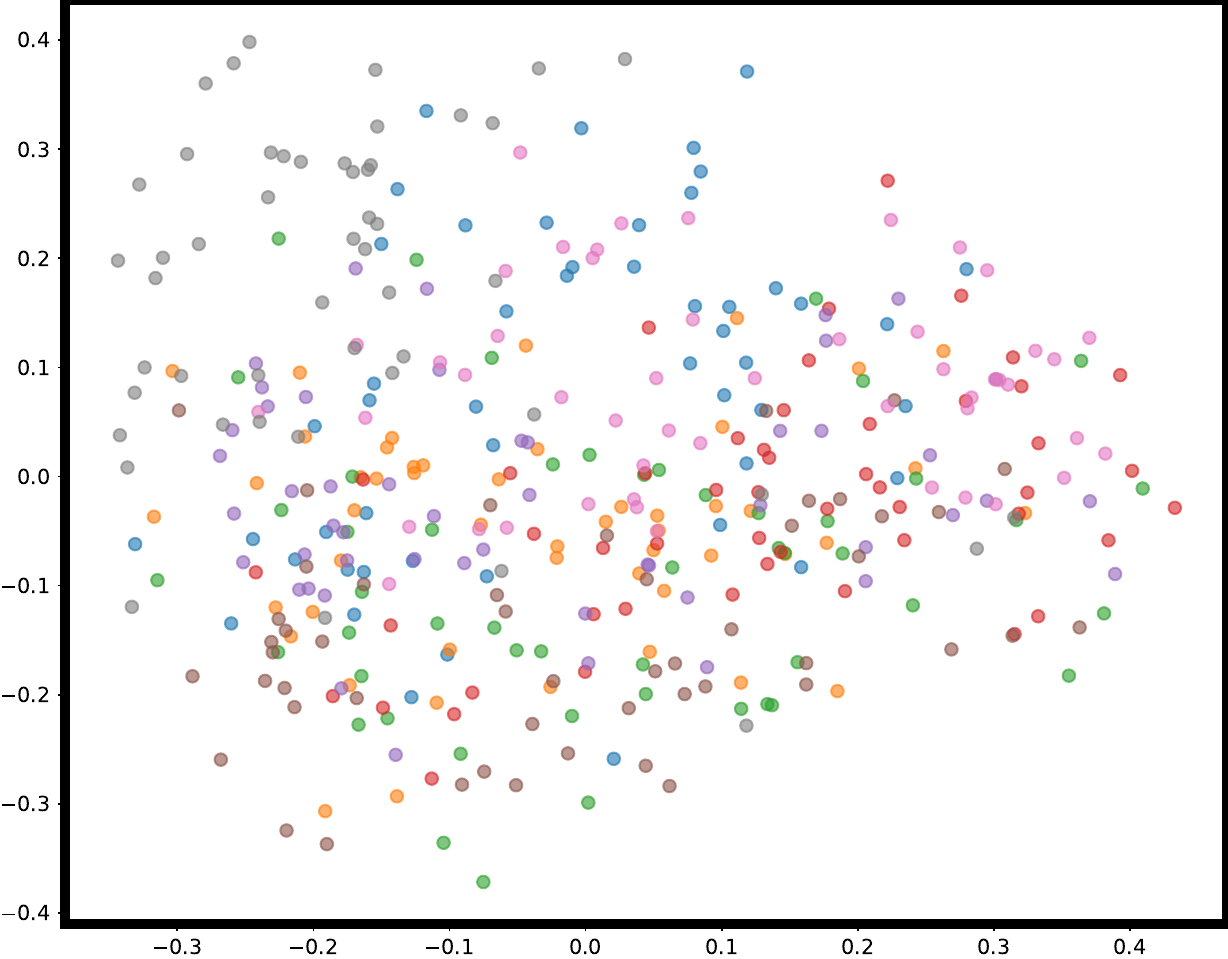}\label{fig:pca_simclr}}
     \subfigure[BYOL]{\includegraphics[width=0.3\textwidth]{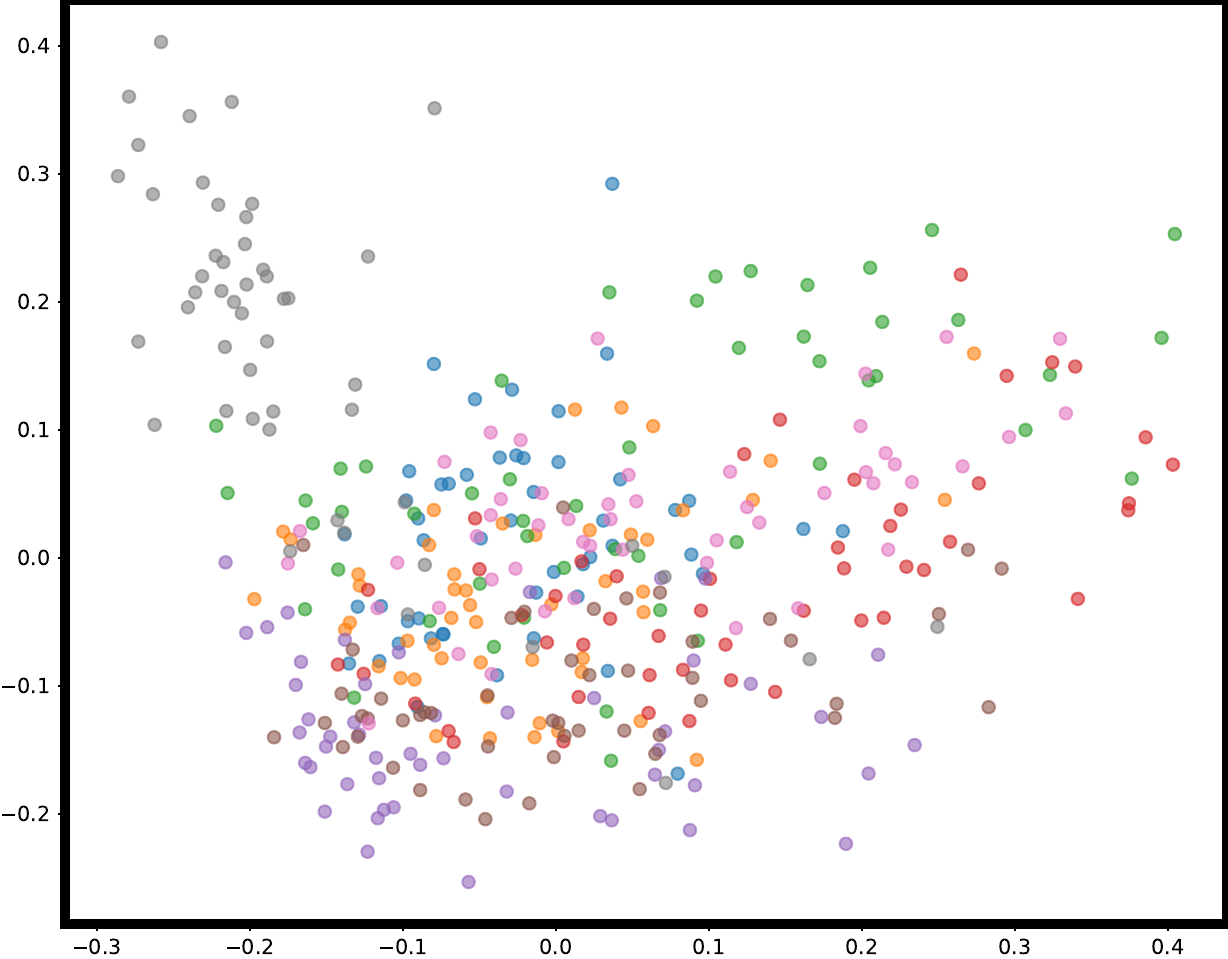}\label{fig:pca_byol}}
     \subfigure[Barlow Twins]{\includegraphics[width=0.3\textwidth]{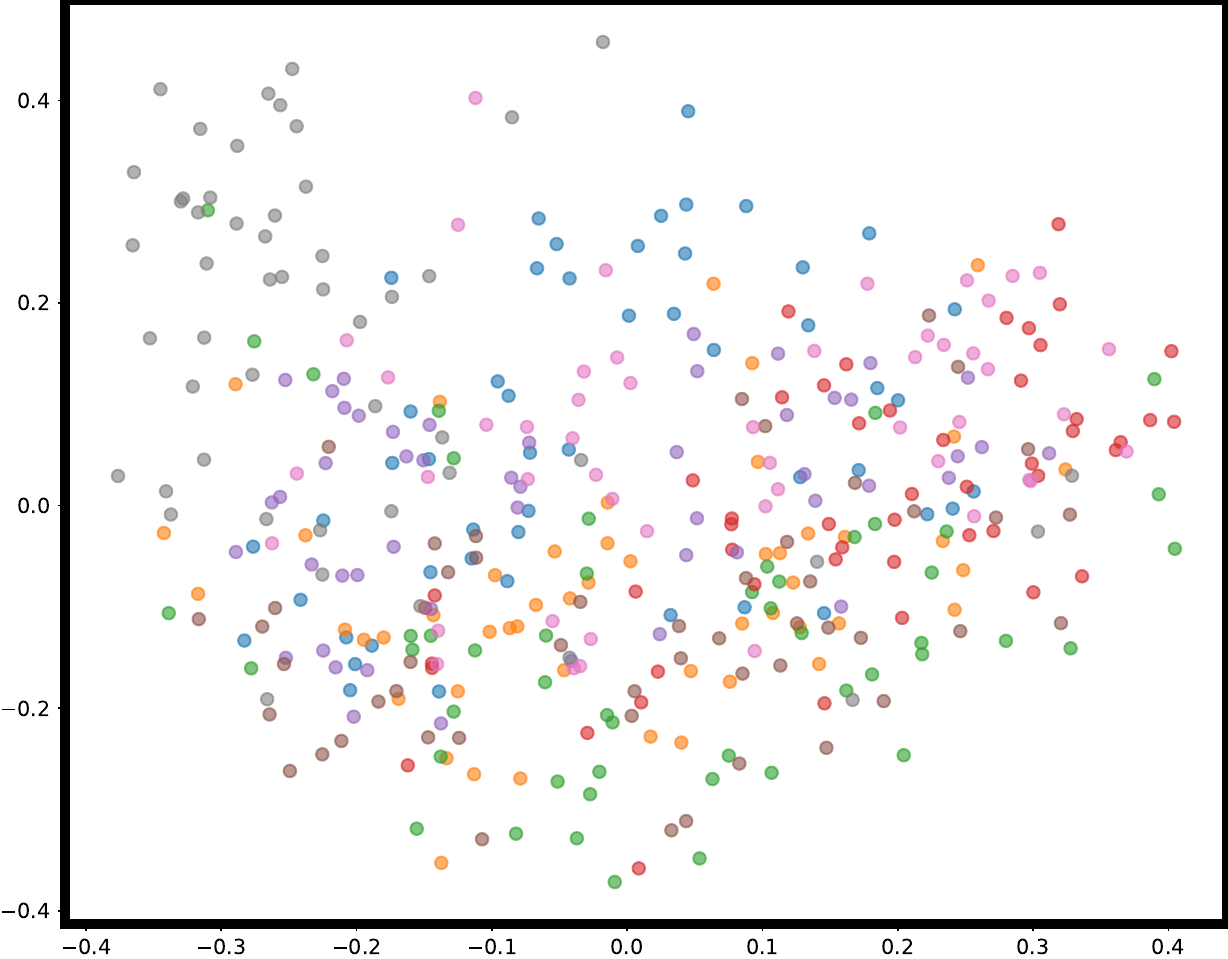}\label{fig:pca_barlow}}
     \subfigure[SwAV]{\includegraphics[width=0.3\textwidth]{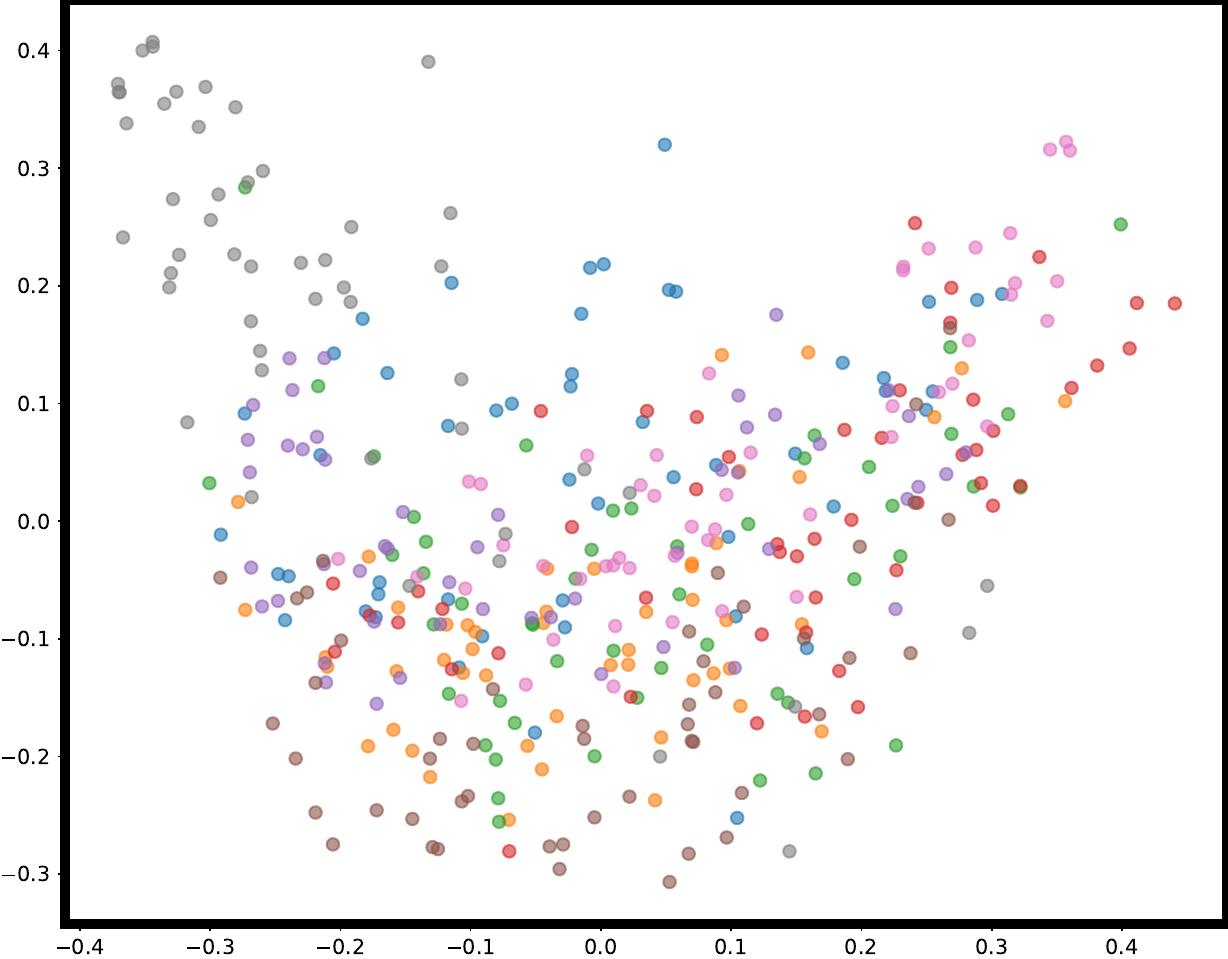}\label{fig:pca_swav}}
     \subfigure[MAE]{\includegraphics[width=0.3\textwidth]{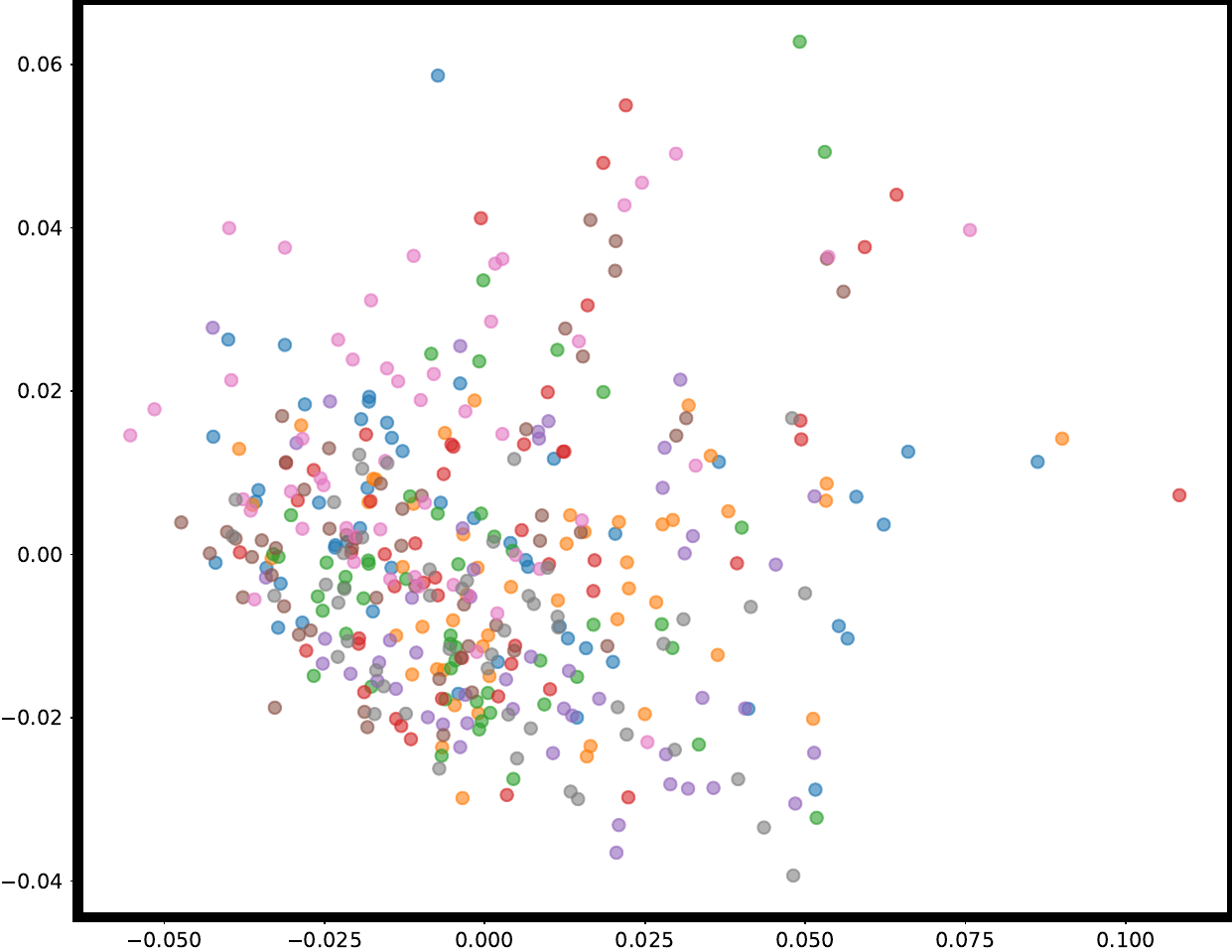}\label{fig:pca_mae}}
     \subfigure[Supervised]{\includegraphics[width=0.3\textwidth]{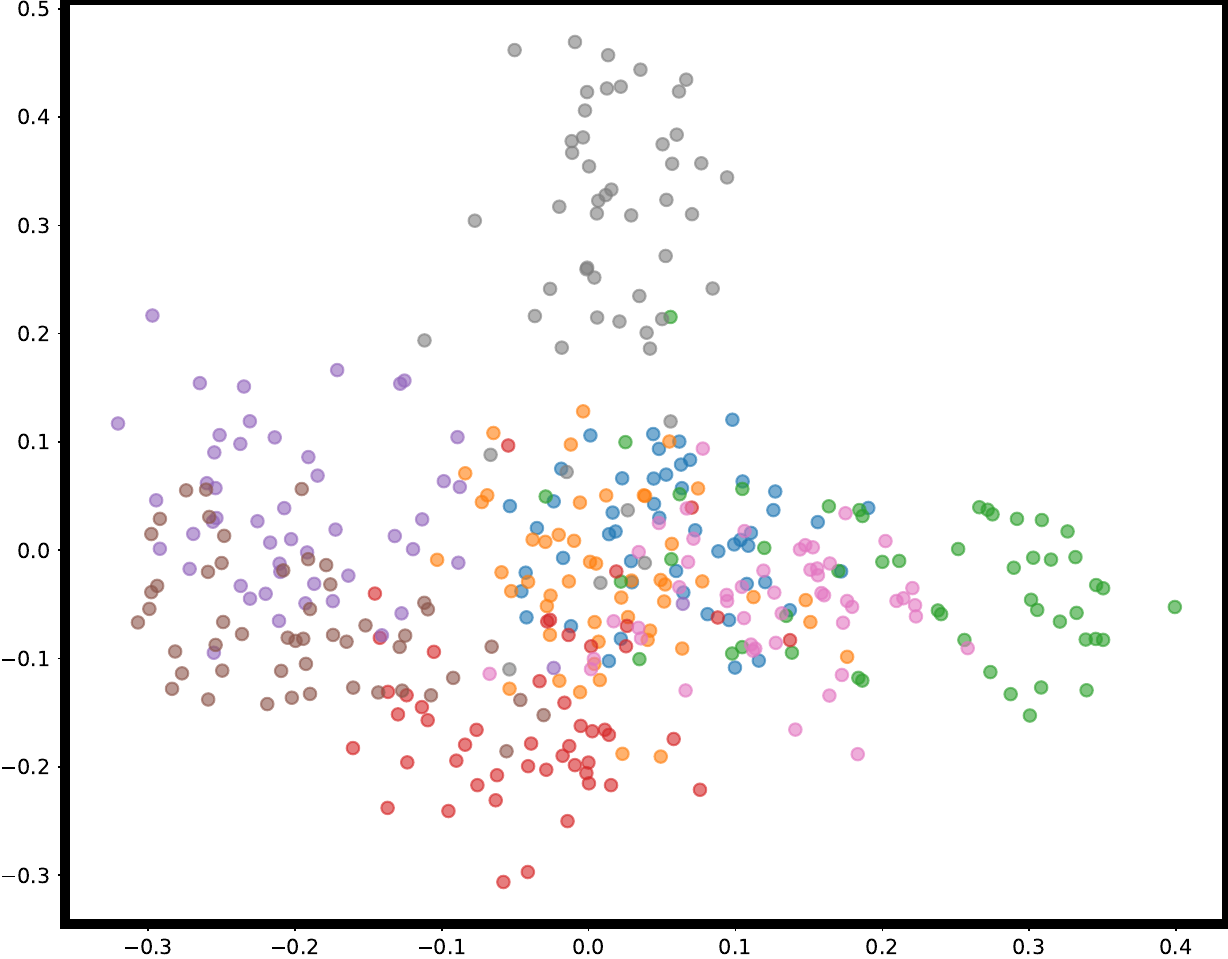}\label{fig:pca_supervised}}
    \caption{{Data distribution visualization with PCA based on the same set of classes as Figure \ref{fig:tsne} of the test set of ImageNet in the feature space. (a) - (e) corresponds to the visualization results of the self-supervised method, while (f) corresponds to the visualization results of the supervised method. }}
    \label{fig:pca}
\end{figure*}

\begin{figure*}[htb]
    \centering
    \subfigure[SimCLR + DSA]{\includegraphics[width=0.3\textwidth]{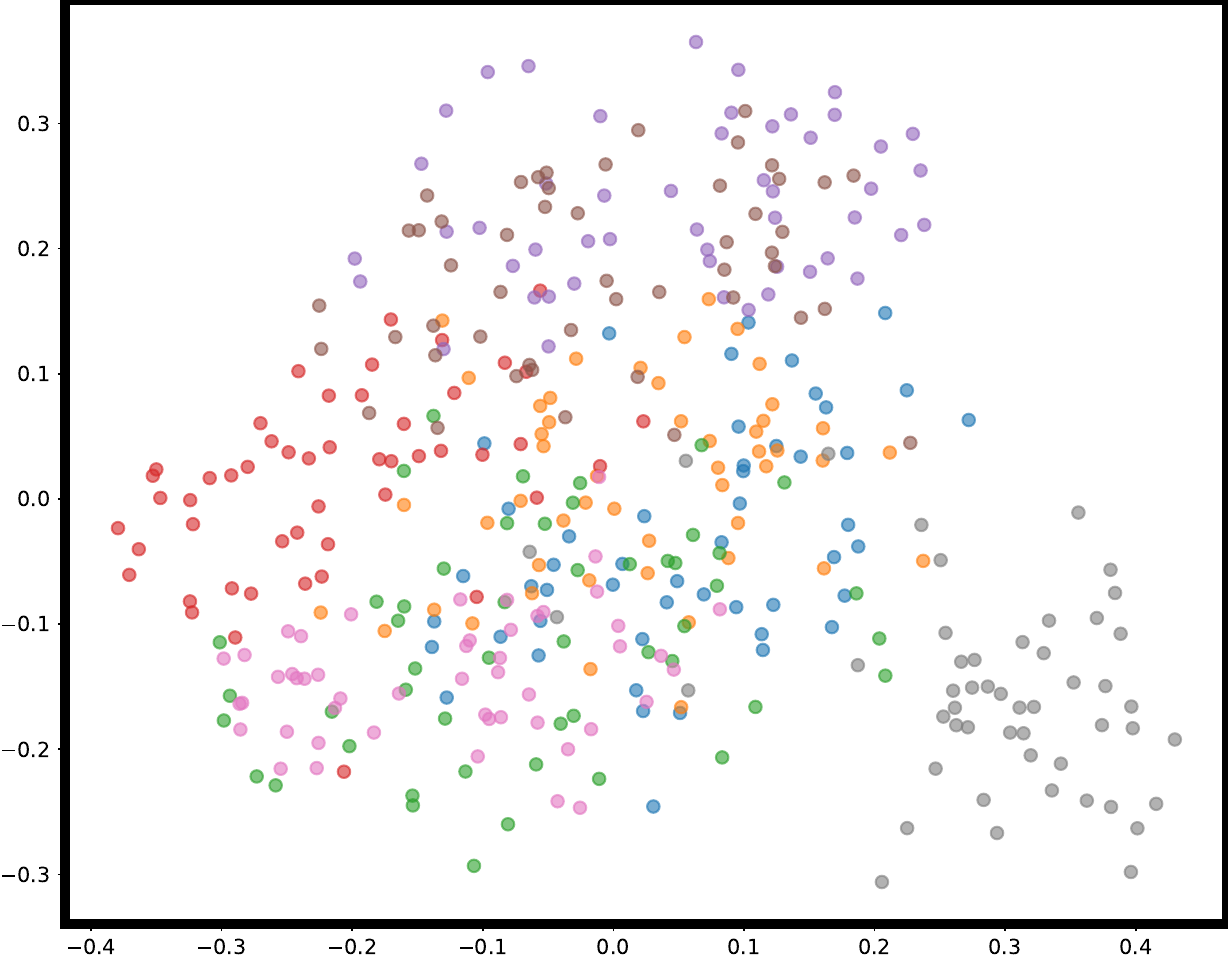}\label{fig:pca_simclr_better}}
     \subfigure[BYOL + DSA]{\includegraphics[width=0.3\textwidth]{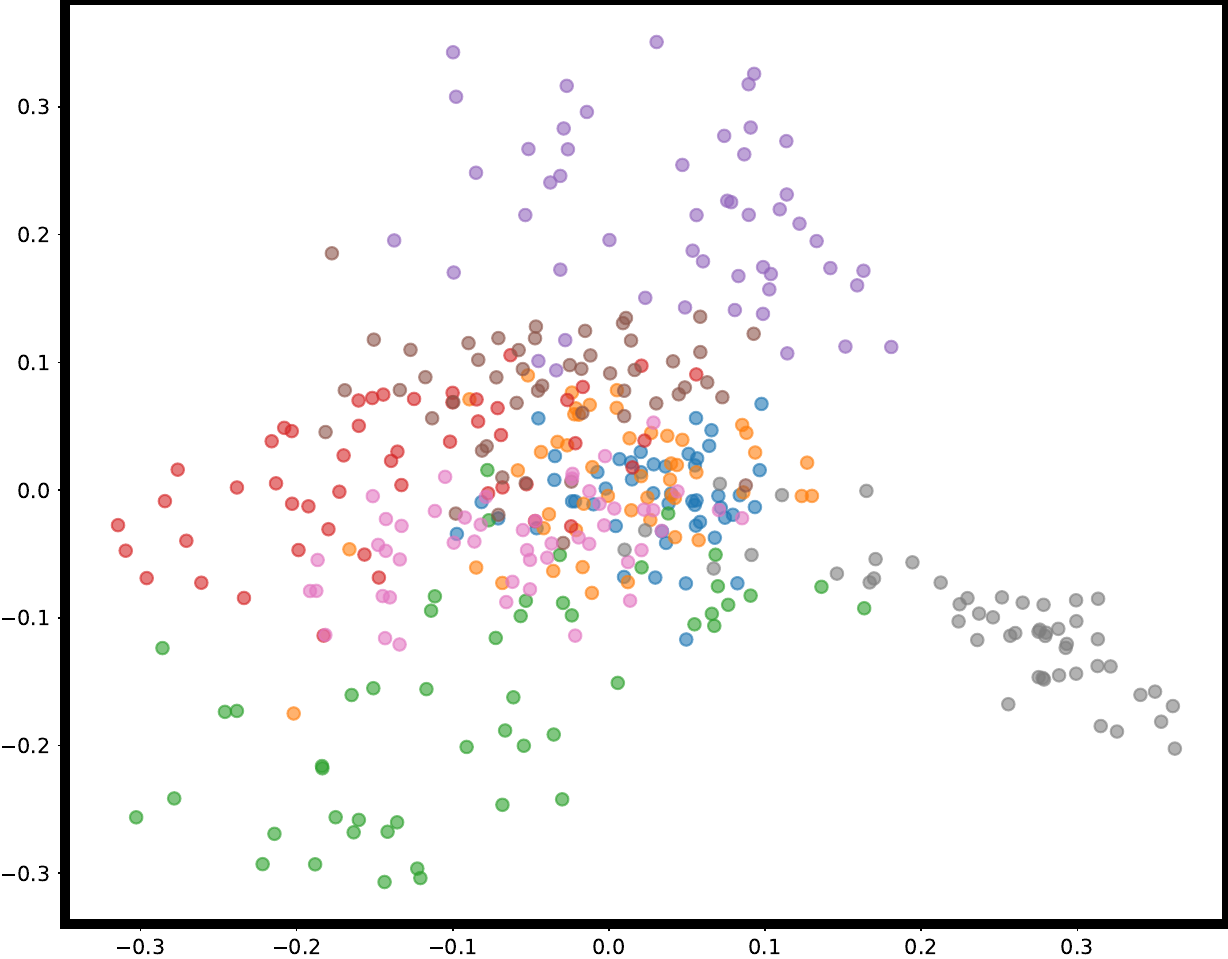}\label{fig:pca_byol_better}}
     \subfigure[Barlow Twins + DSA]{\includegraphics[width=0.3\textwidth]{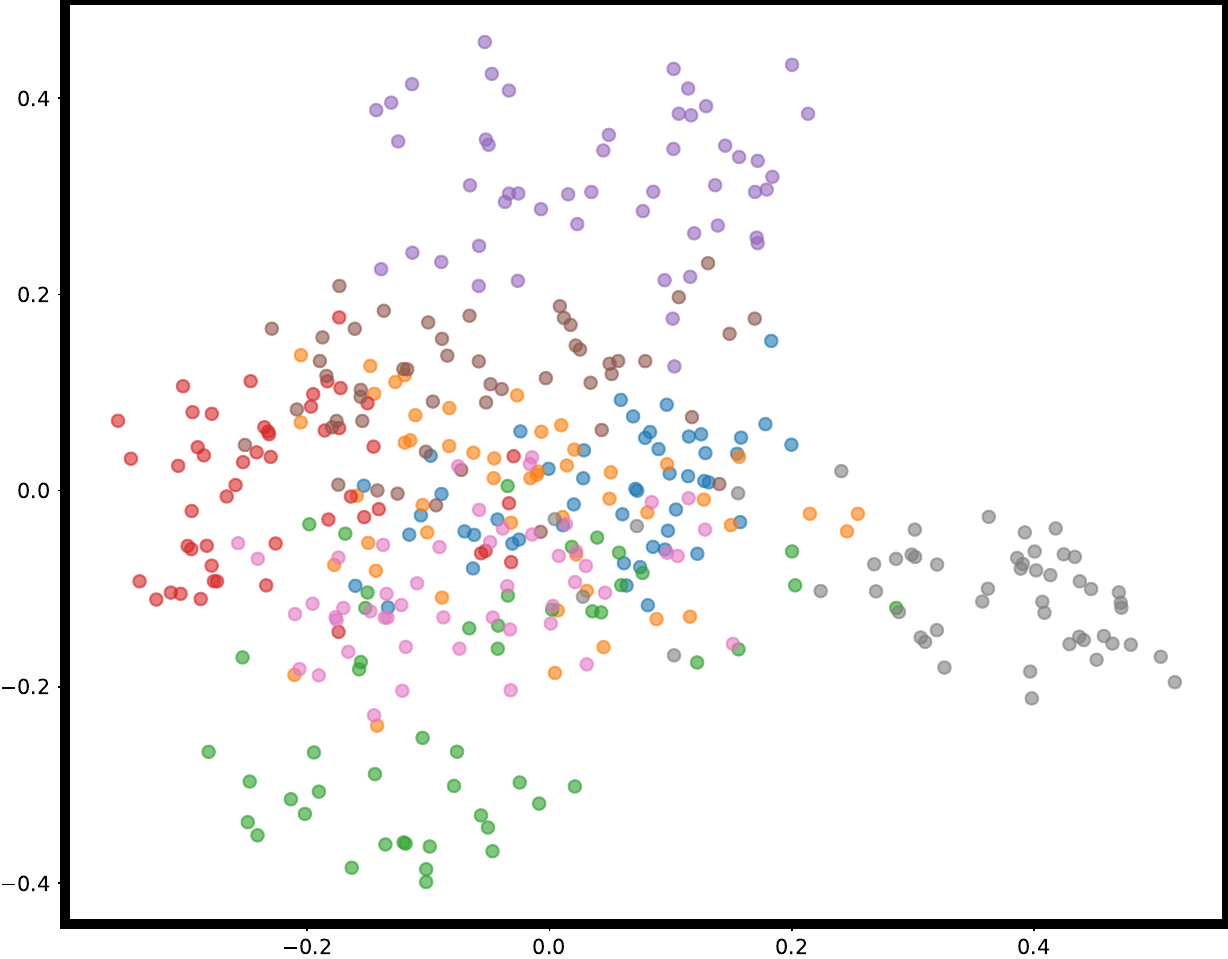}\label{fig:pca_barlow_better}}
     \subfigure[SwAV + DSA]{\includegraphics[width=0.3\textwidth]{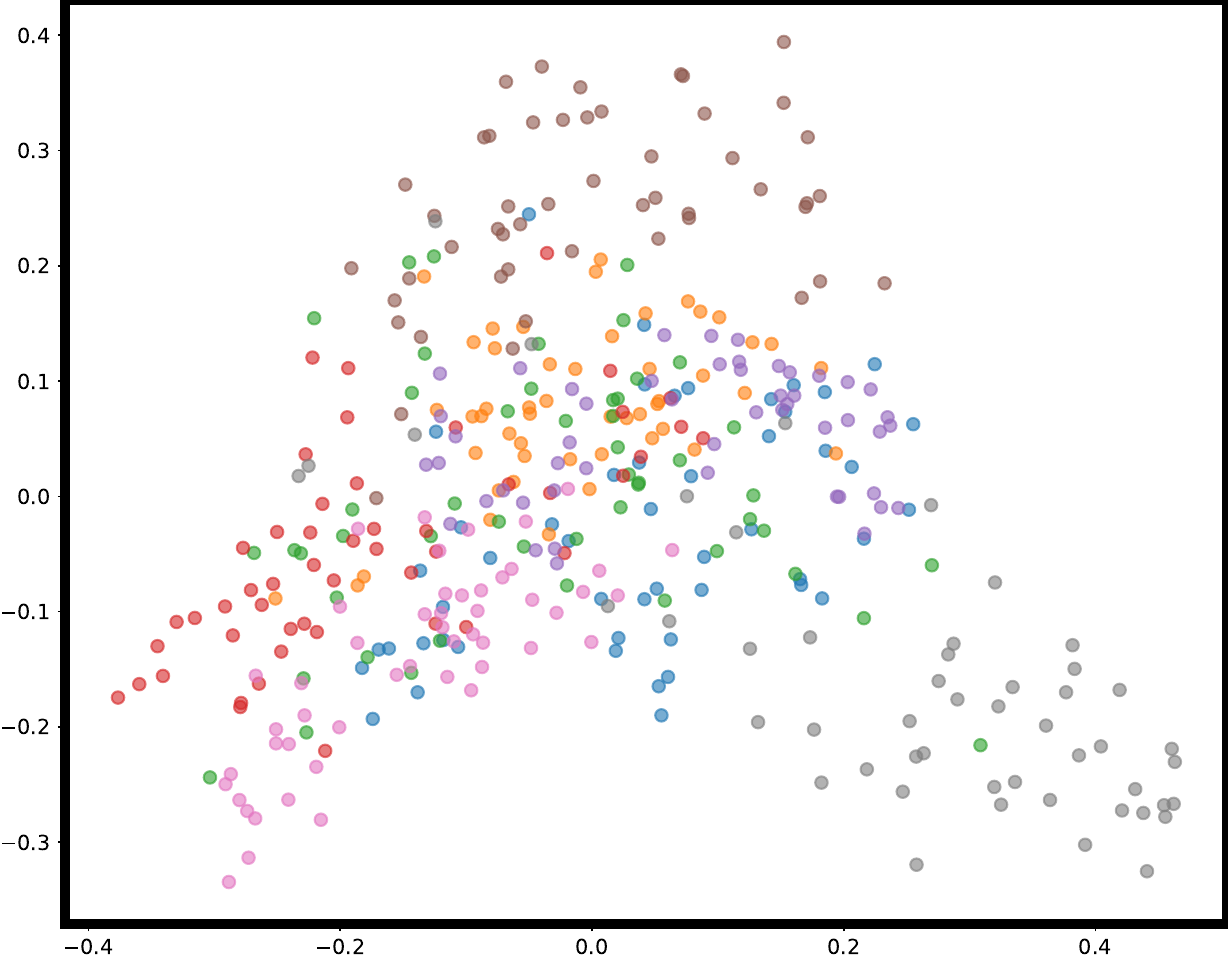}\label{fig:pca_swav_better}}
     \subfigure[MAE + DSA]{\includegraphics[width=0.3\textwidth]{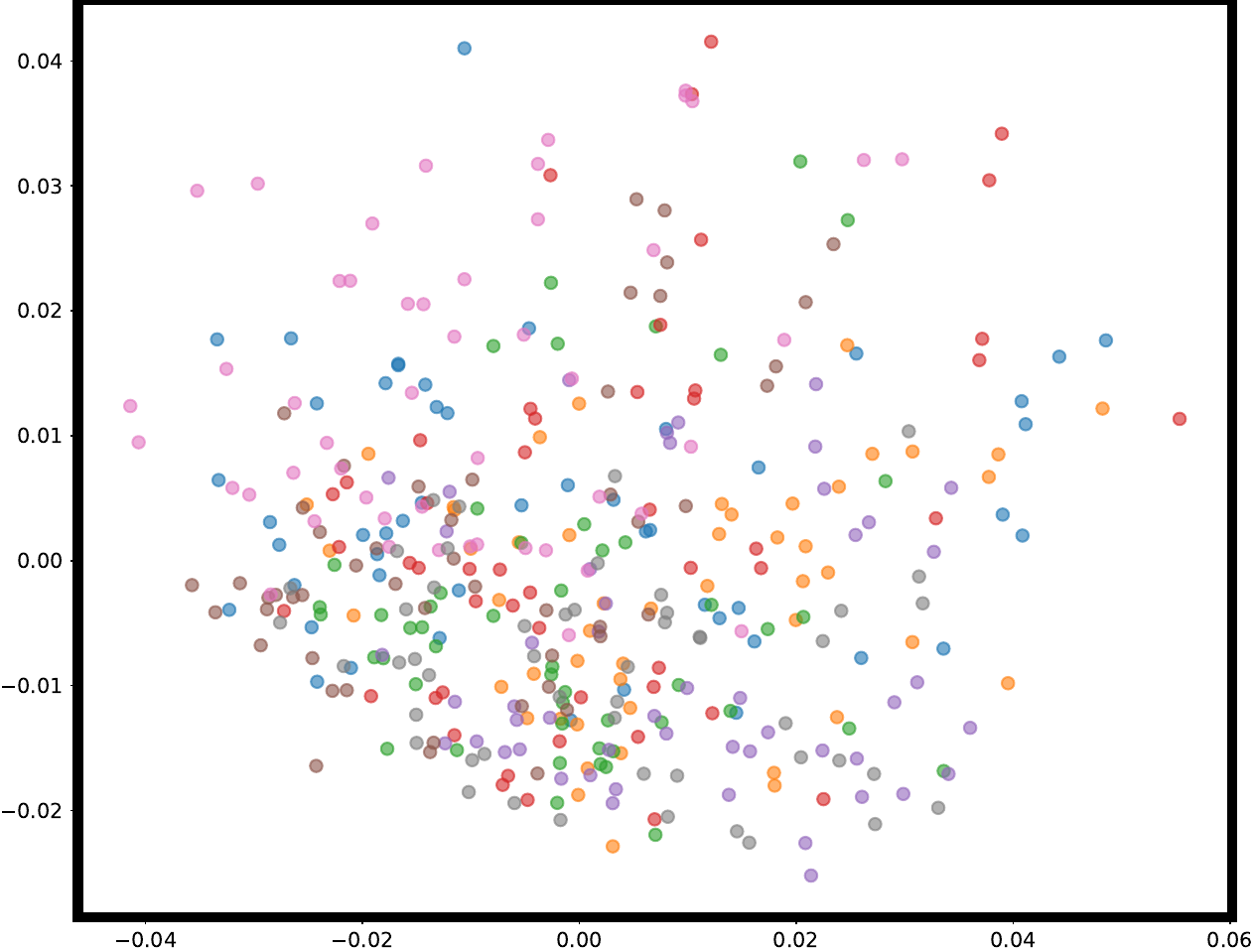}\label{fig:pca_mae_better}}
    \caption{{Data distribution visualization with PCA based on the same set of classes as Figure \ref{fig:tsne} of the test set of ImageNet in the feature space. (a) - (e) corresponds to the visualization results of the self-supervised methods integrated with DSA. }}
    \label{fig:pca_better}
\end{figure*}

\begin{figure*}[t]
     \centering
     \subfigure[SimCLR]{\includegraphics[width=0.3\textwidth]{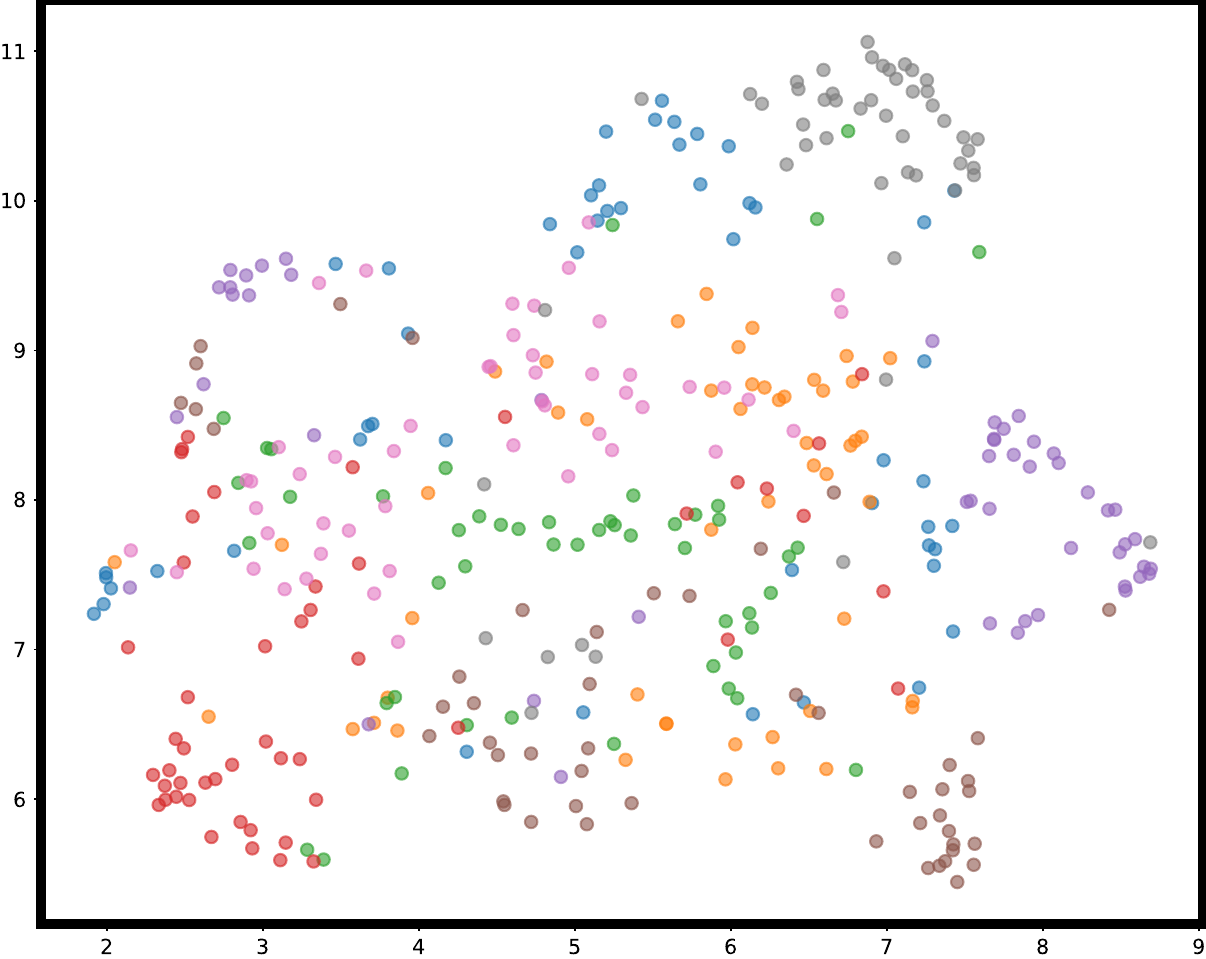}\label{fig:umap_simclr}}
     \subfigure[BYOL]{\includegraphics[width=0.3\textwidth]{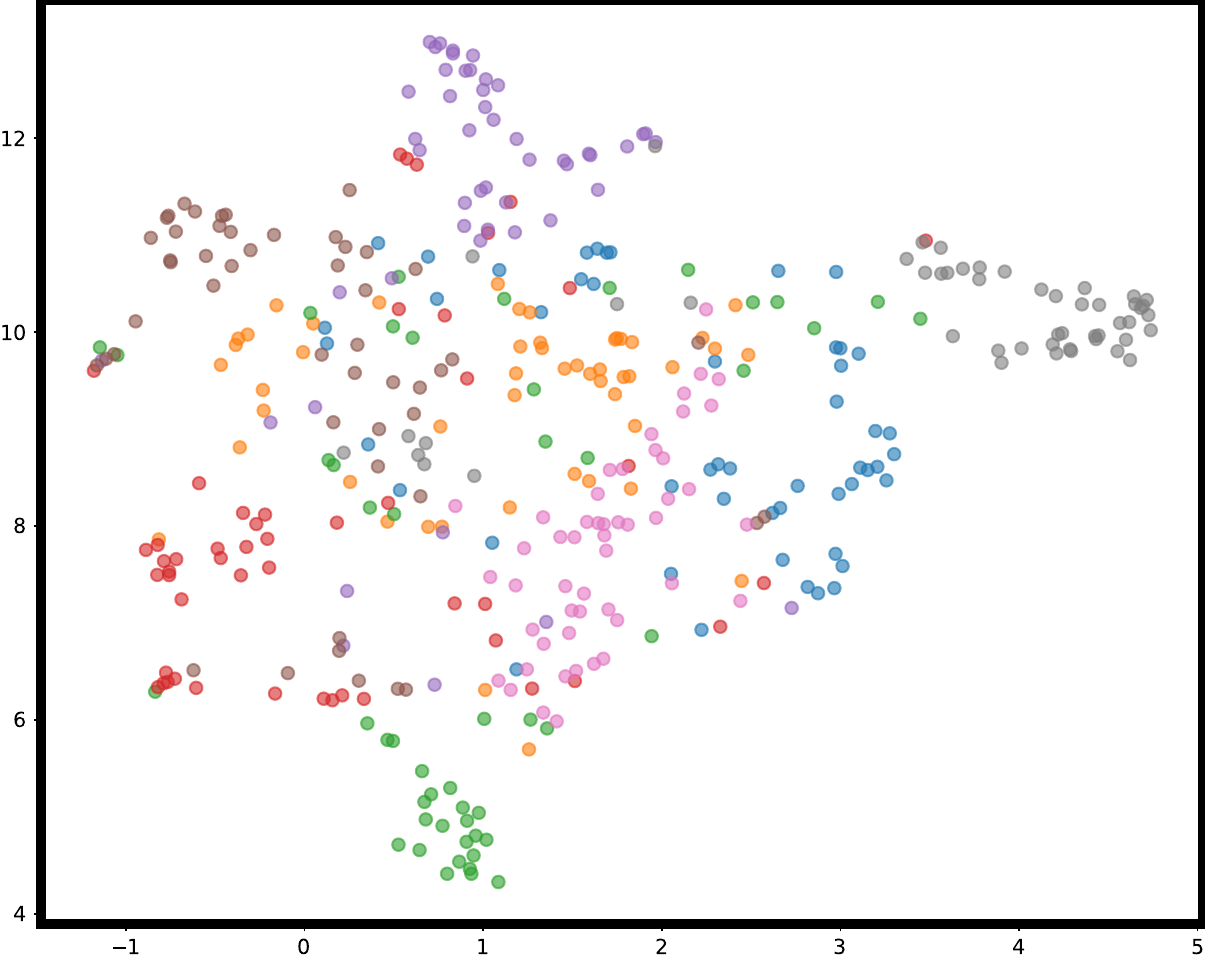}\label{fig:umap_byol}}
     \subfigure[Barlow Twins]{\includegraphics[width=0.3\textwidth]{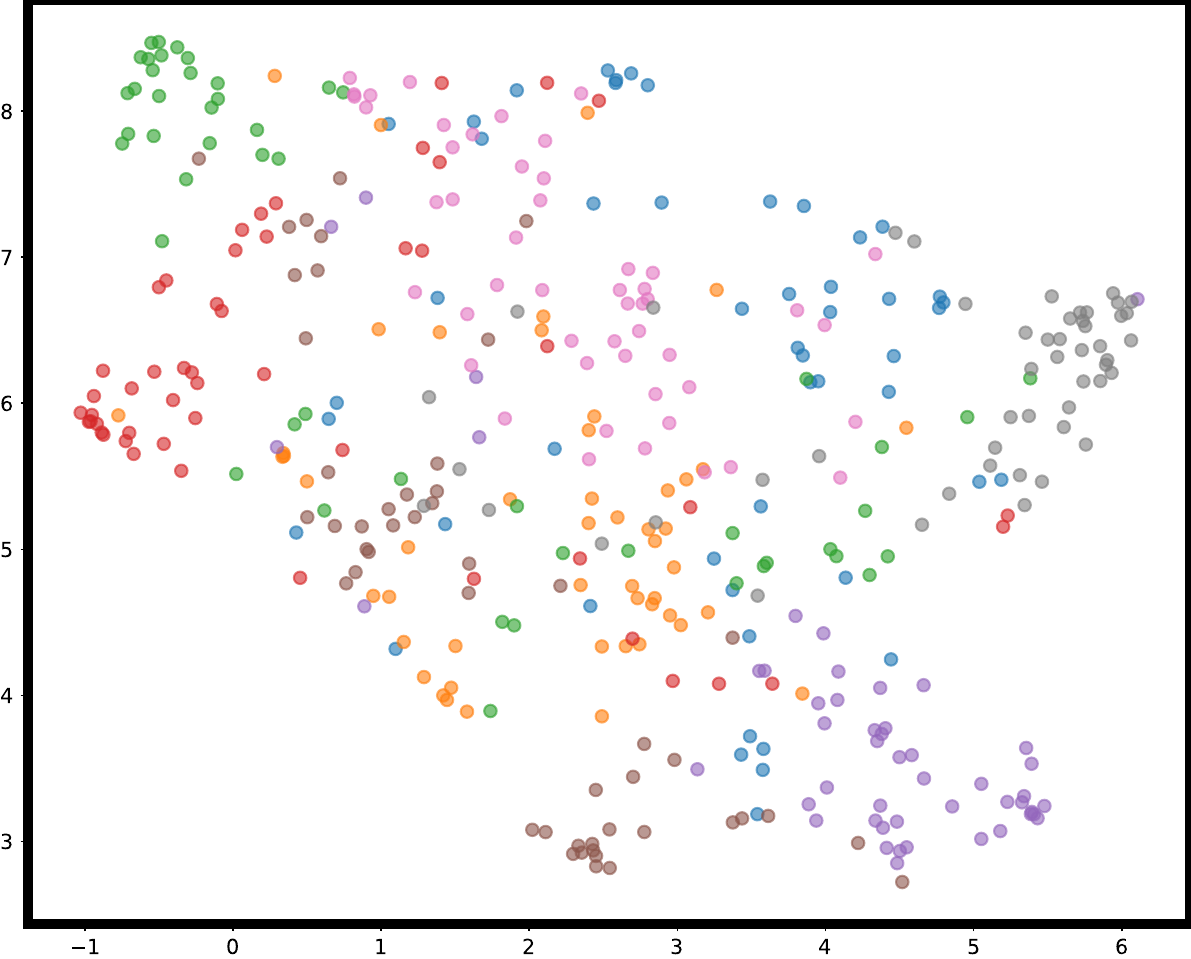}\label{fig:umap_barlow}}
     \subfigure[SwAV]{\includegraphics[width=0.3\textwidth]{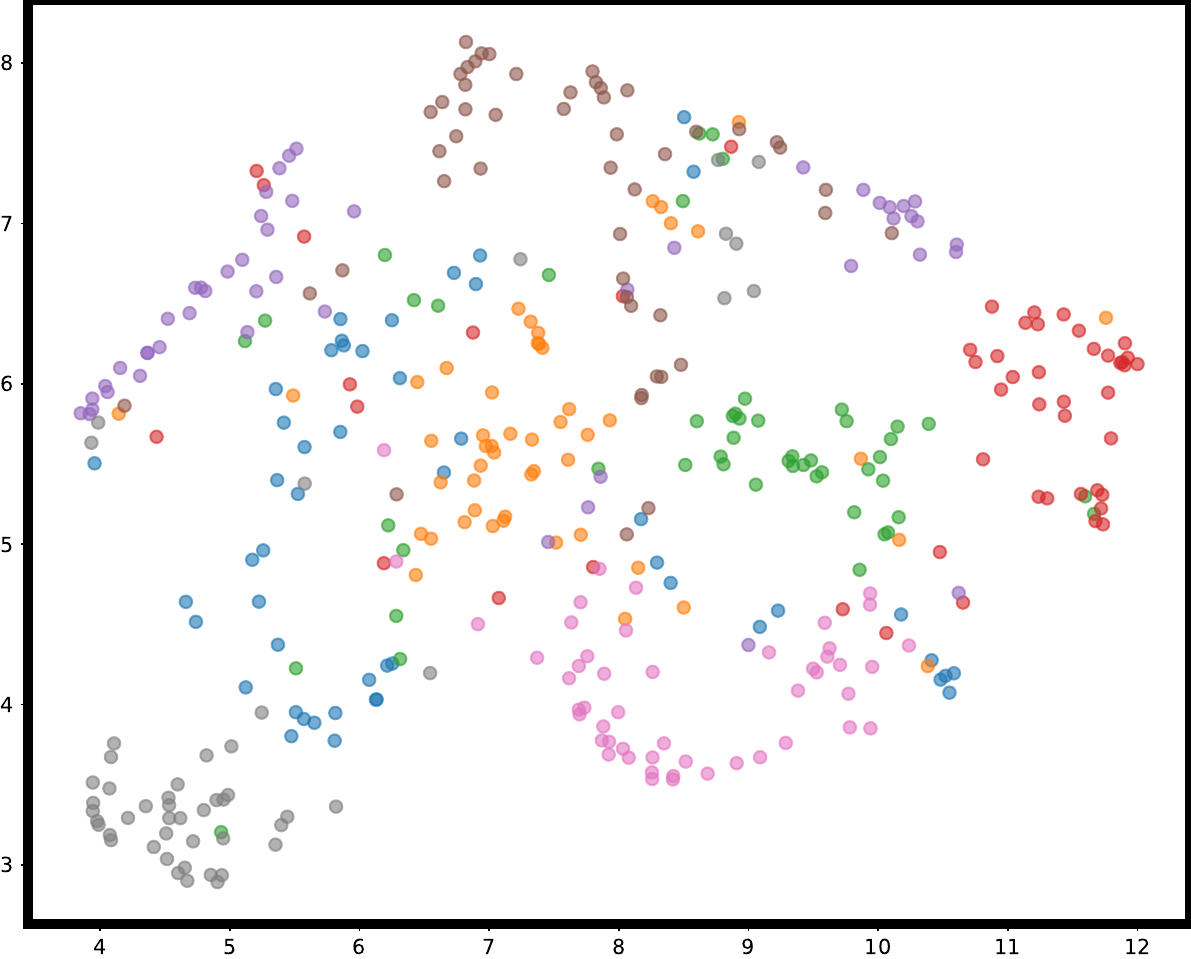}\label{fig:umap_swav}}
     \subfigure[MAE]{\includegraphics[width=0.3\textwidth]{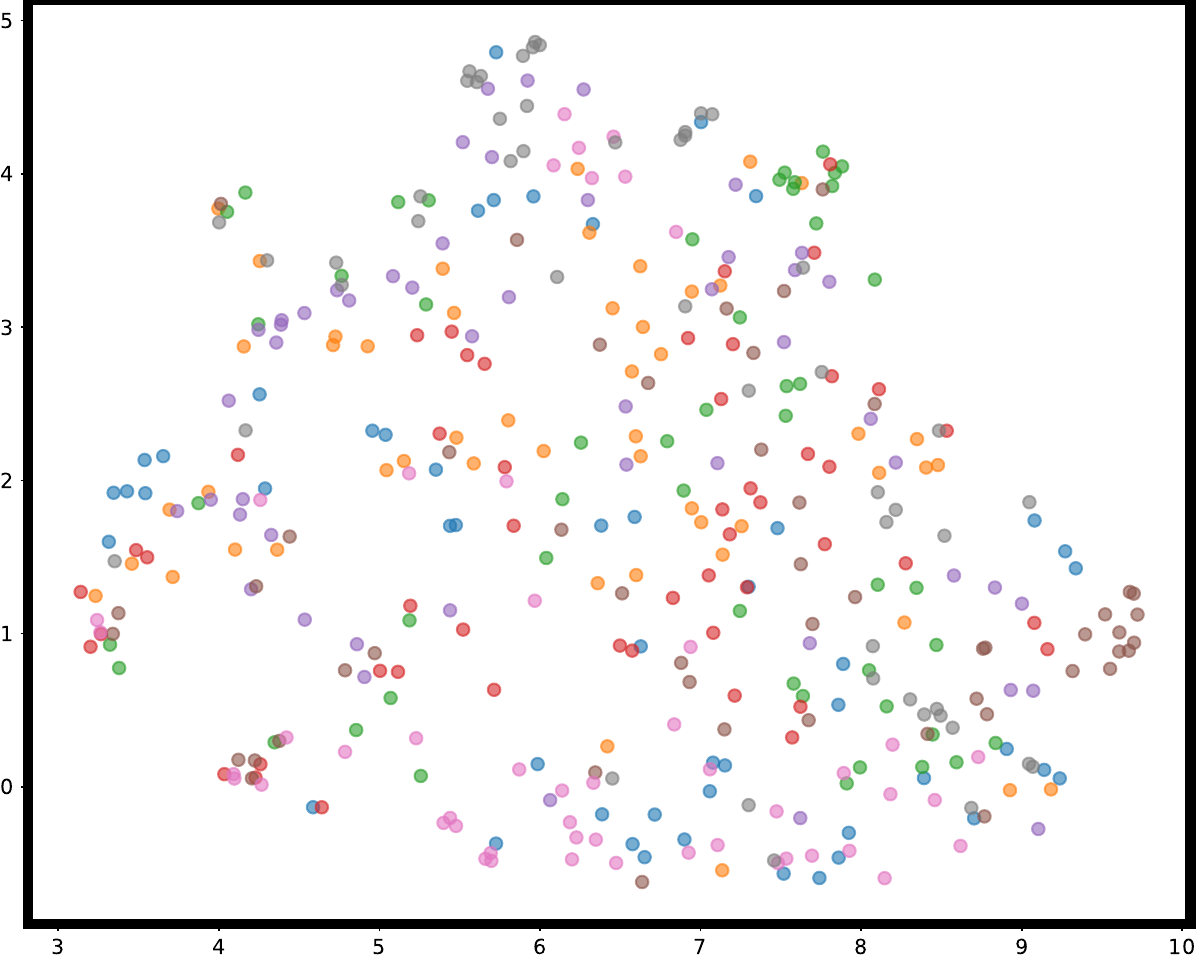}\label{fig:umap_mae}}
     \subfigure[Supervised]{\includegraphics[width=0.3\textwidth]{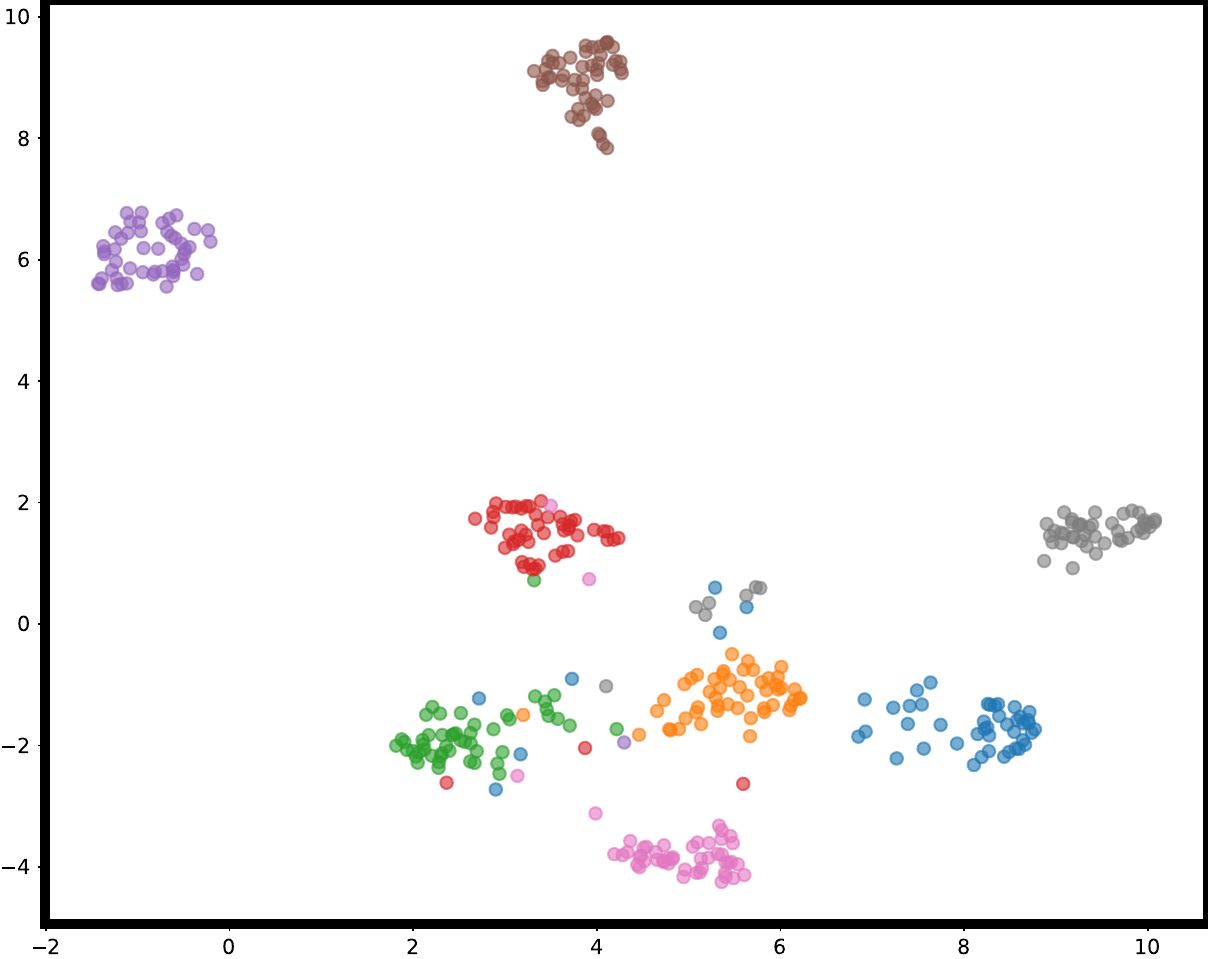}\label{fig:umap_supervised}}
    \caption{{Data distribution visualization with UMAP based on the same set of classes as Figure \ref{fig:tsne} of the test set of ImageNet in the feature space. (a) - (e) corresponds to the visualization results of the self-supervised method, while (f) corresponds to the visualization results of the supervised method. }}
    \label{fig:umap}
\end{figure*}

\begin{figure*}[htb]
    \centering
    \subfigure[SimCLR + DSA]{\includegraphics[width=0.3\textwidth]{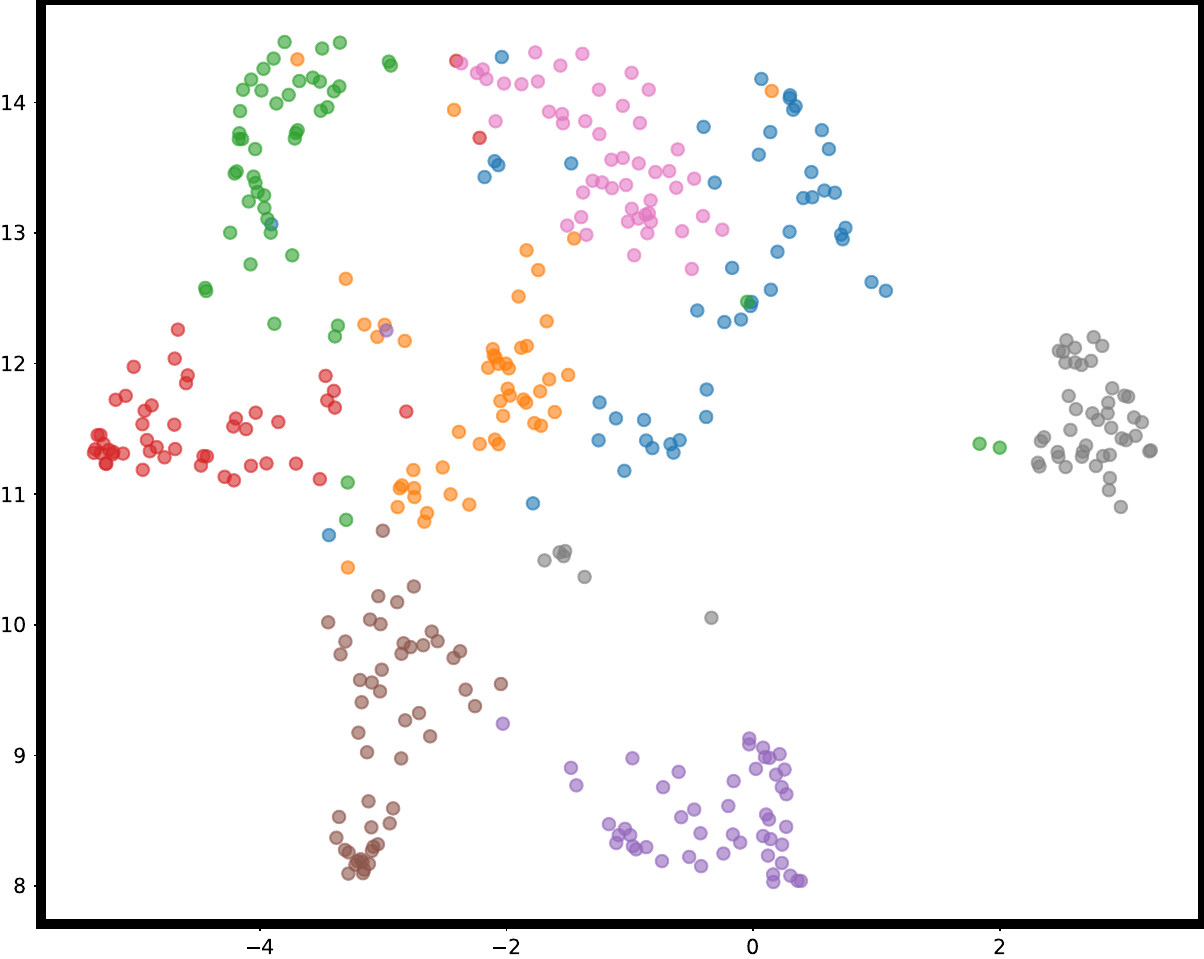}\label{fig:umap_simclr_better}}
     \subfigure[BYOL + DSA]{\includegraphics[width=0.3\textwidth]{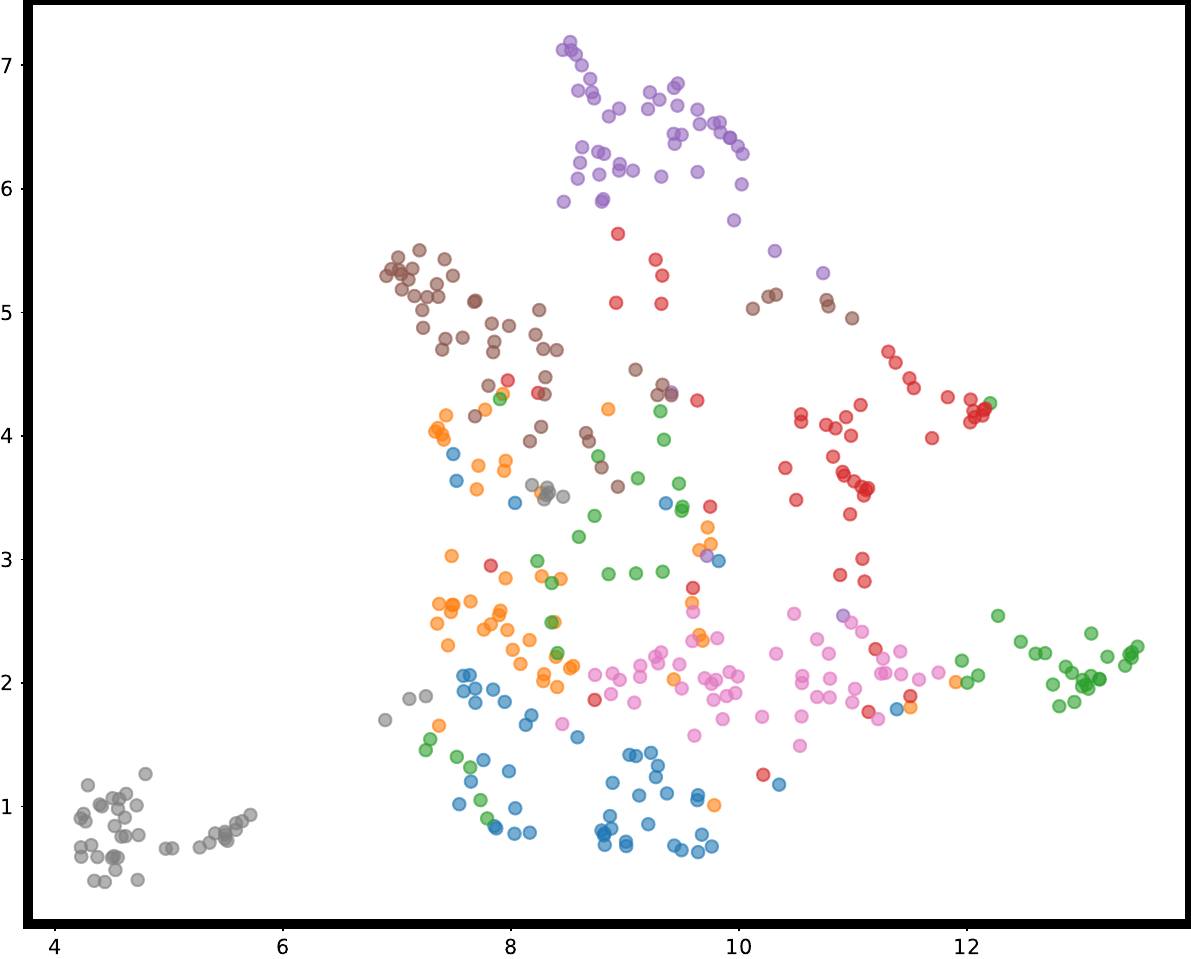}\label{fig:umap_byol_better}}
     \subfigure[Barlow Twins + DSA]{\includegraphics[width=0.3\textwidth]{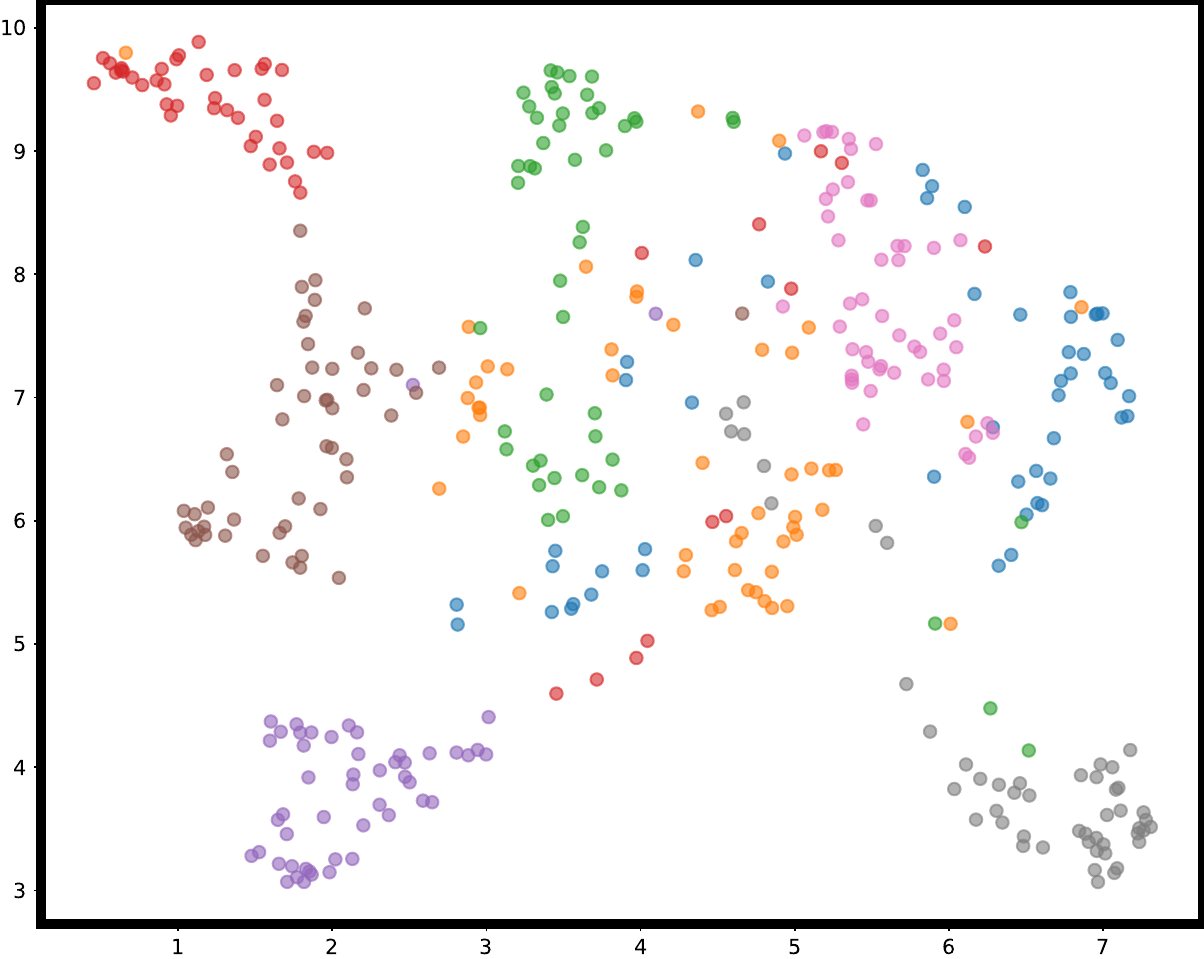}\label{fig:umap_barlow_better}}
     \subfigure[SwAV + DSA]{\includegraphics[width=0.3\textwidth]{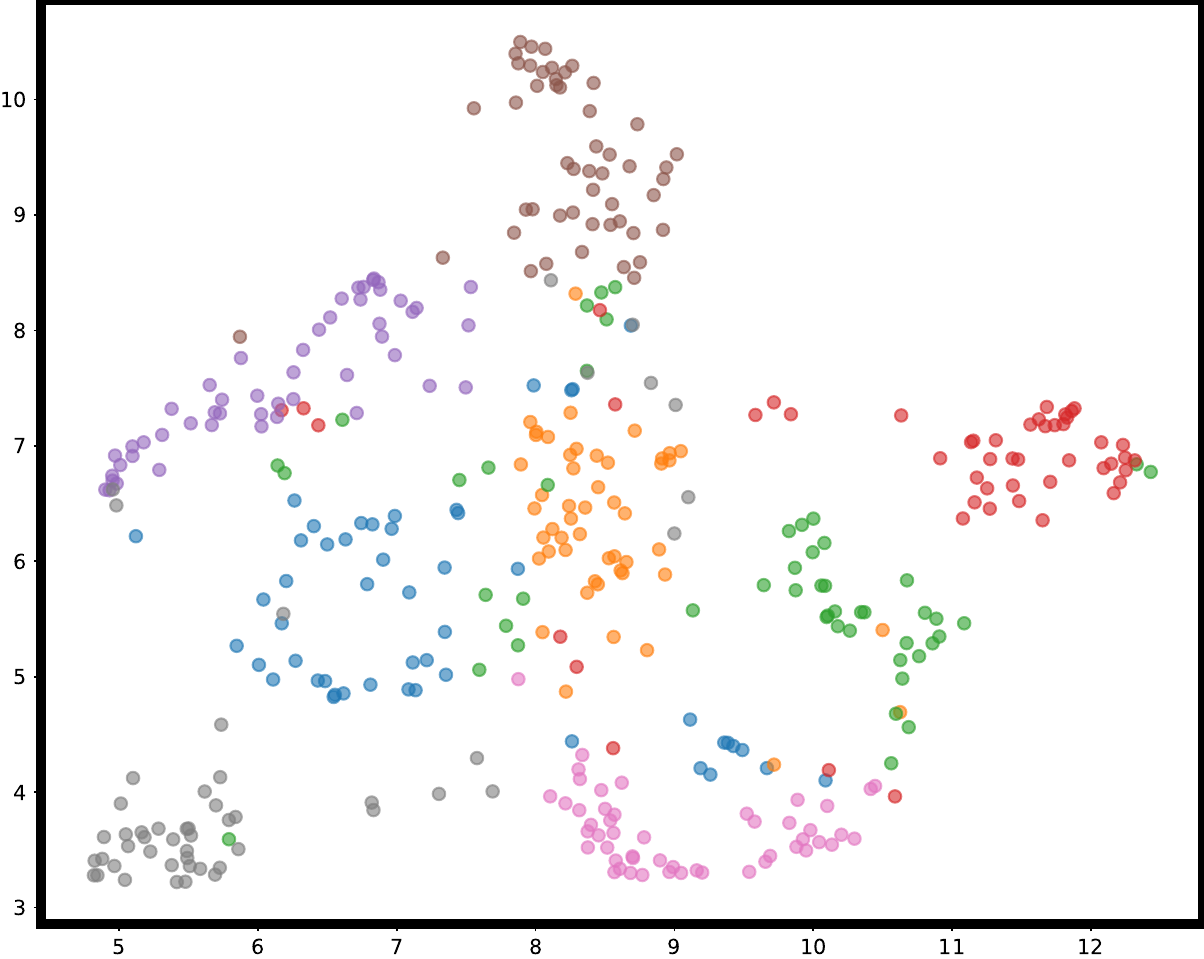}\label{fig:umap_swav_better}}
     \subfigure[MAE + DSA]{\includegraphics[width=0.3\textwidth]{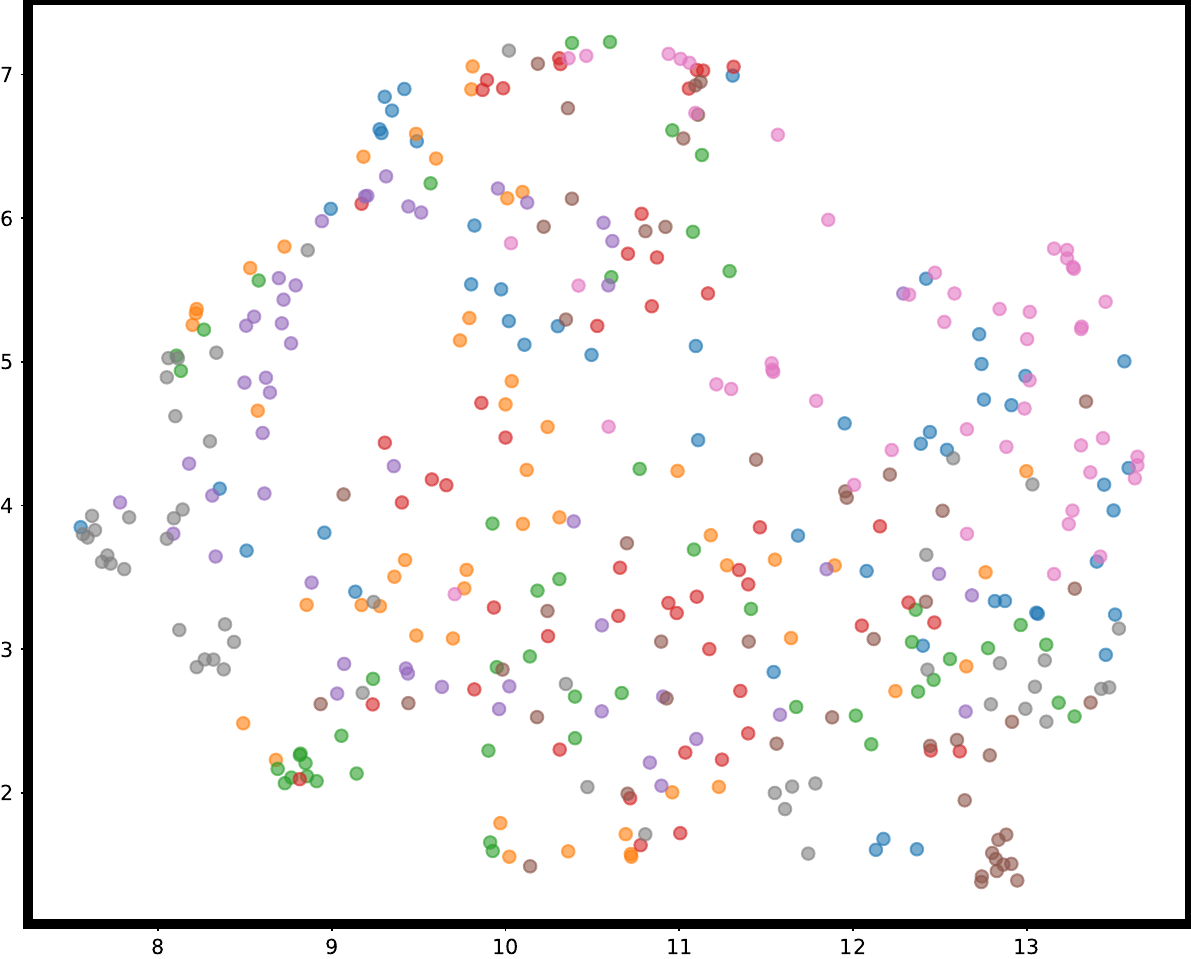}\label{fig:umap_mae_better}}
    \caption{{Data distribution visualization with UMAP based on the same set of classes as Figure \ref{fig:tsne} of the test set of ImageNet in the feature space. (a) - (e) corresponds to the visualization results of the self-supervised methods integrated with DSA. }}
    \label{fig:umap_better}
\end{figure*}

\subsection{{More Results of Hyperparameters Analysis}}

{To evaluate the robustness and generalizability of the proposed DSA framework across different downstream tasks, we conducted extensive hyperparameter sensitivity analyses on several datasets. Specifically, Figures~\ref{fig:hyperparam_3},~\ref{fig:hyperparam_4}, and~\ref{fig:hyperparam_5} illustrate how variations in hyperparameters: $\nu$, $\upsilon$, $\alpha$, $\eta$, and $\tau$ affect model performance.}

{Figure~\ref{fig:hyperparam_3} presents the Top-1 linear classification accuracy on the Kinetics-400 dataset under different hyperparameter settings. Figure~\ref{fig:hyperparam_4} shows the 5-way 1-shot and 5-way 5-shot classification accuracies on the FC100 dataset, where solid lines denote the 1-shot results and dashed lines denote the 5-shot results. Figure~\ref{fig:hyperparam_5} displays the performance on UCF-101 and HMDB-51, with solid lines representing UCF-101 and dashed lines representing HMDB-51.}

{From these experiments, we observe several consistent patterns across datasets. Although the precise optimal value for each hyperparameter may vary depending on the specific task, the performance generally remains stable within certain ranges for all hyperparameters tested. Notably, DSA-enhanced models consistently outperform their baseline counterparts across a wide range of hyperparameter values, confirming that DSA is robust and does not rely on fine-tuned parameter settings to achieve improvements. These findings provide further evidence that DSA can generalize effectively to diverse downstream scenarios without the need for exhaustive hyperparameter tuning.}

\begin{figure*}[htb]
    \centering
    \subfigure[$\nu$]{\includegraphics[width=0.3\textwidth]{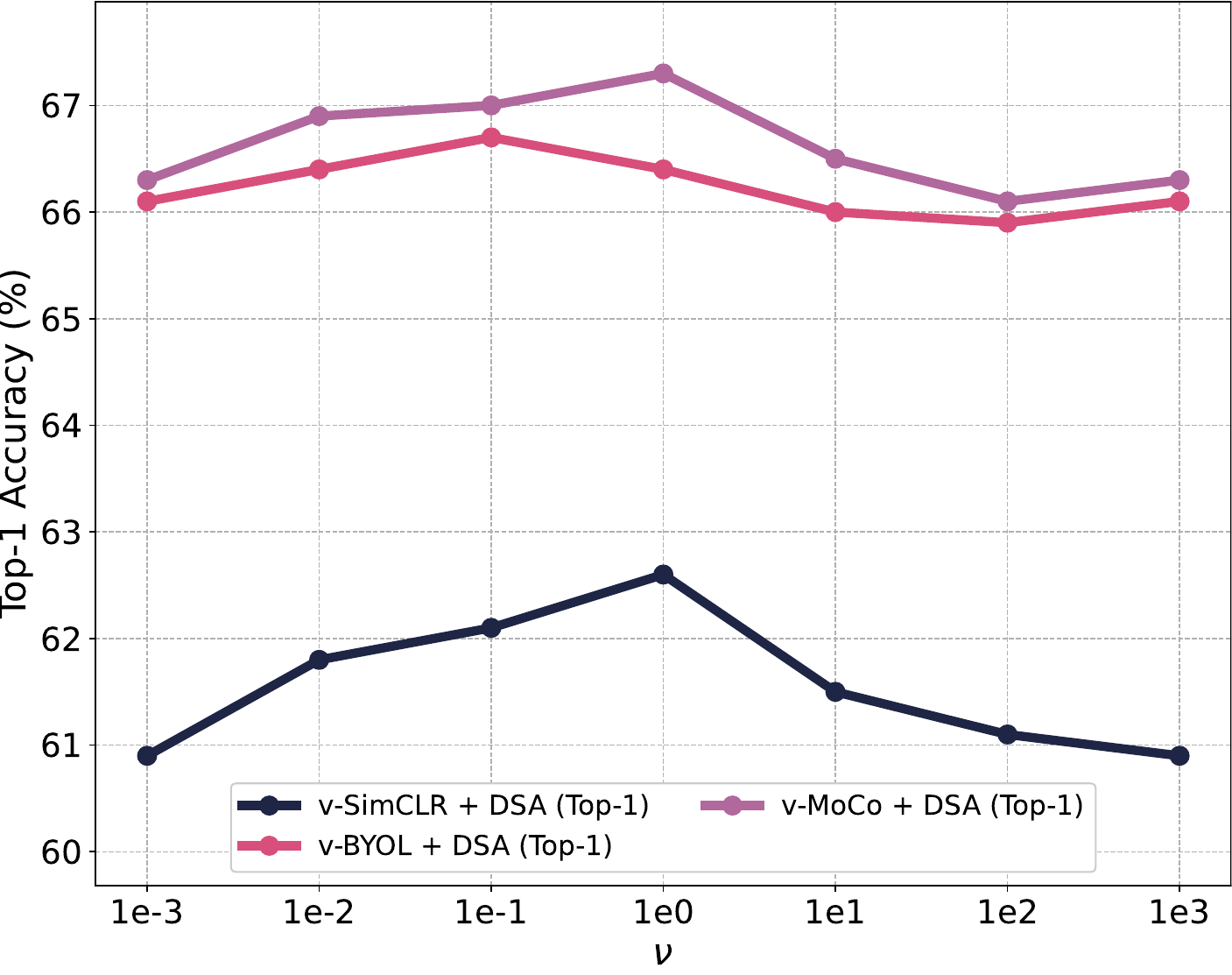}\label{fig:nu_k400}}
    \subfigure[$\upsilon$]{\includegraphics[width=0.3\textwidth]{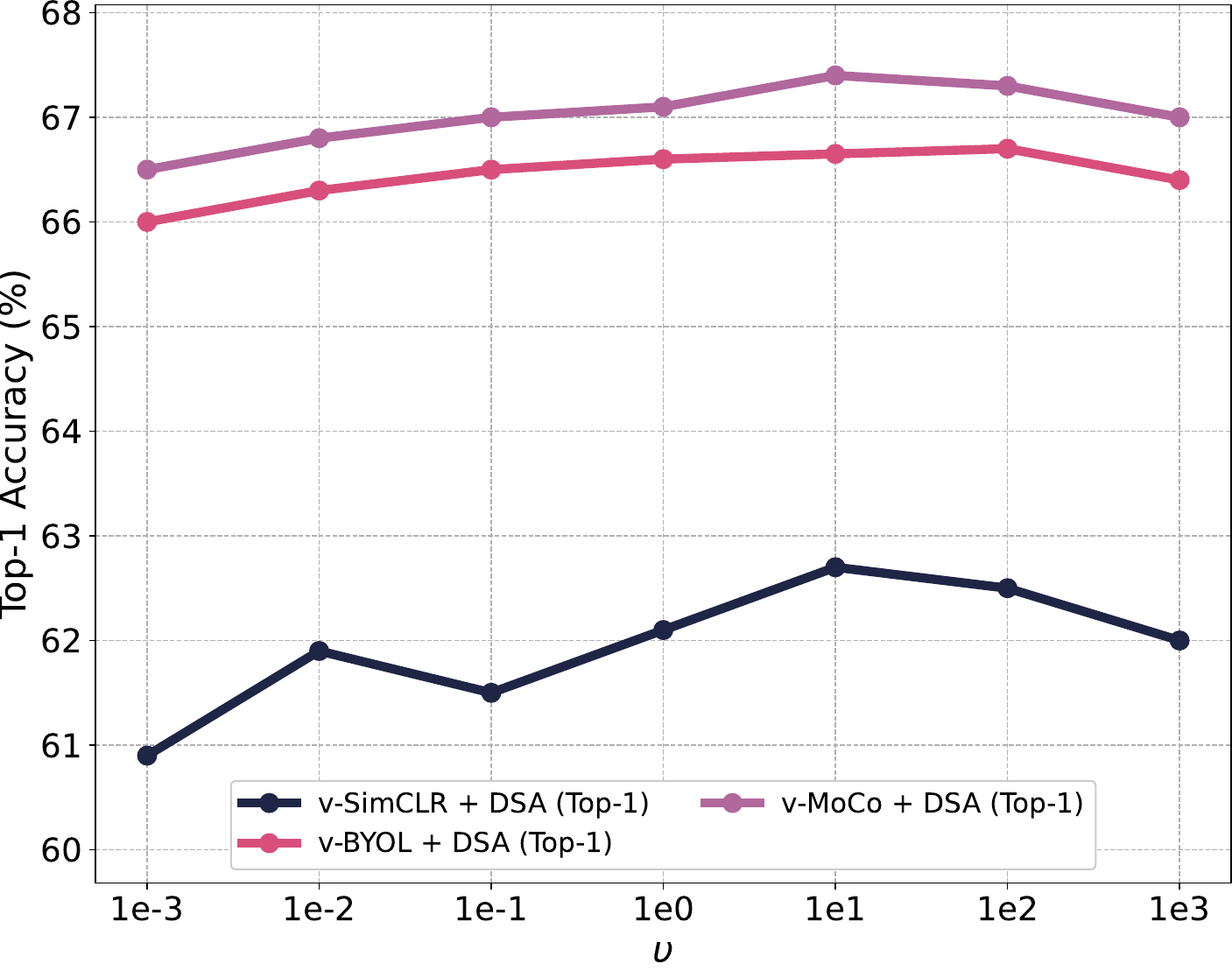}\label{fig:upsilon_k400}}
    \subfigure[$\alpha$]{\includegraphics[width=0.3\textwidth]{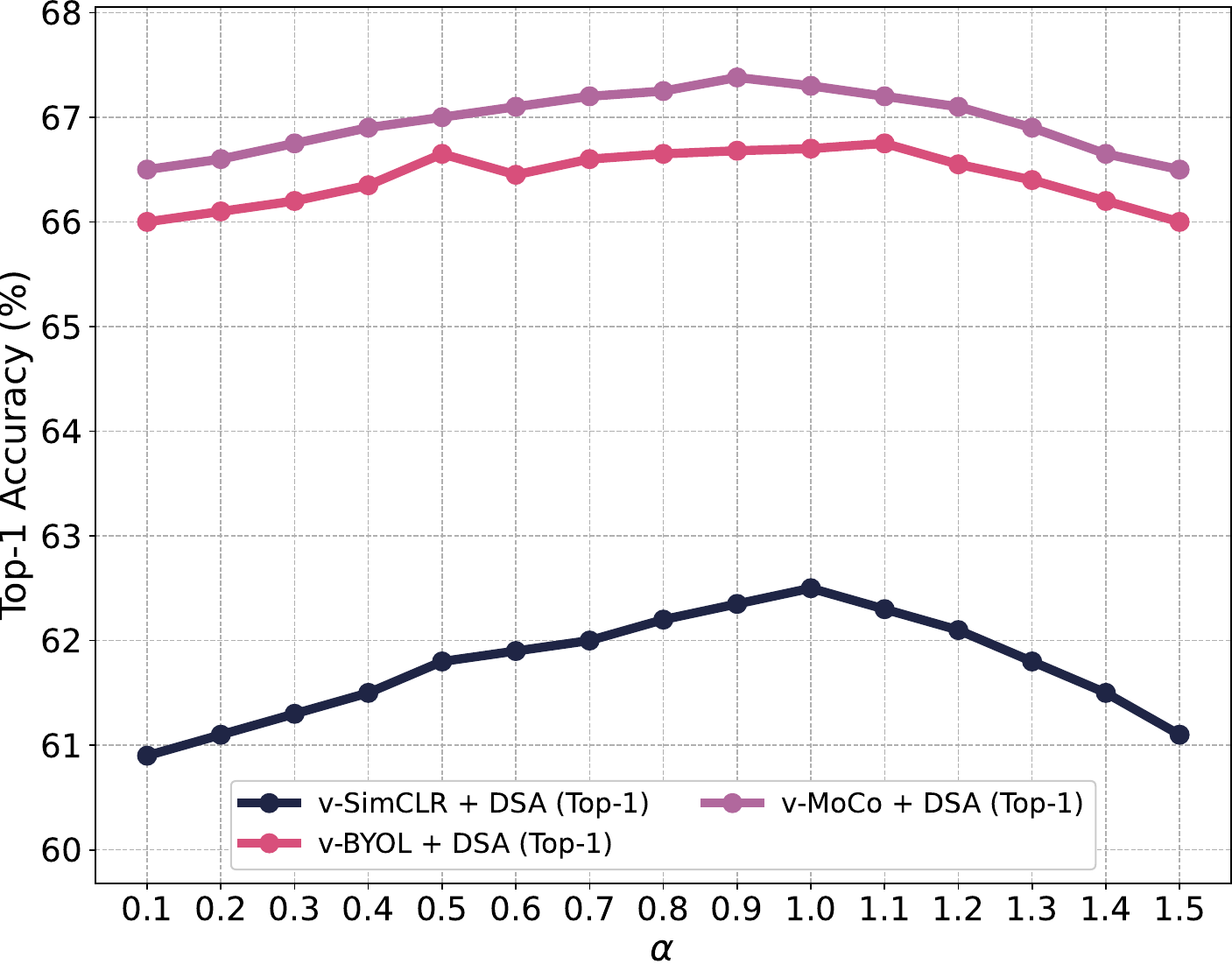}\label{fig:alpha_k400}}
    \subfigure[$\eta$]{\includegraphics[width=0.3\textwidth]{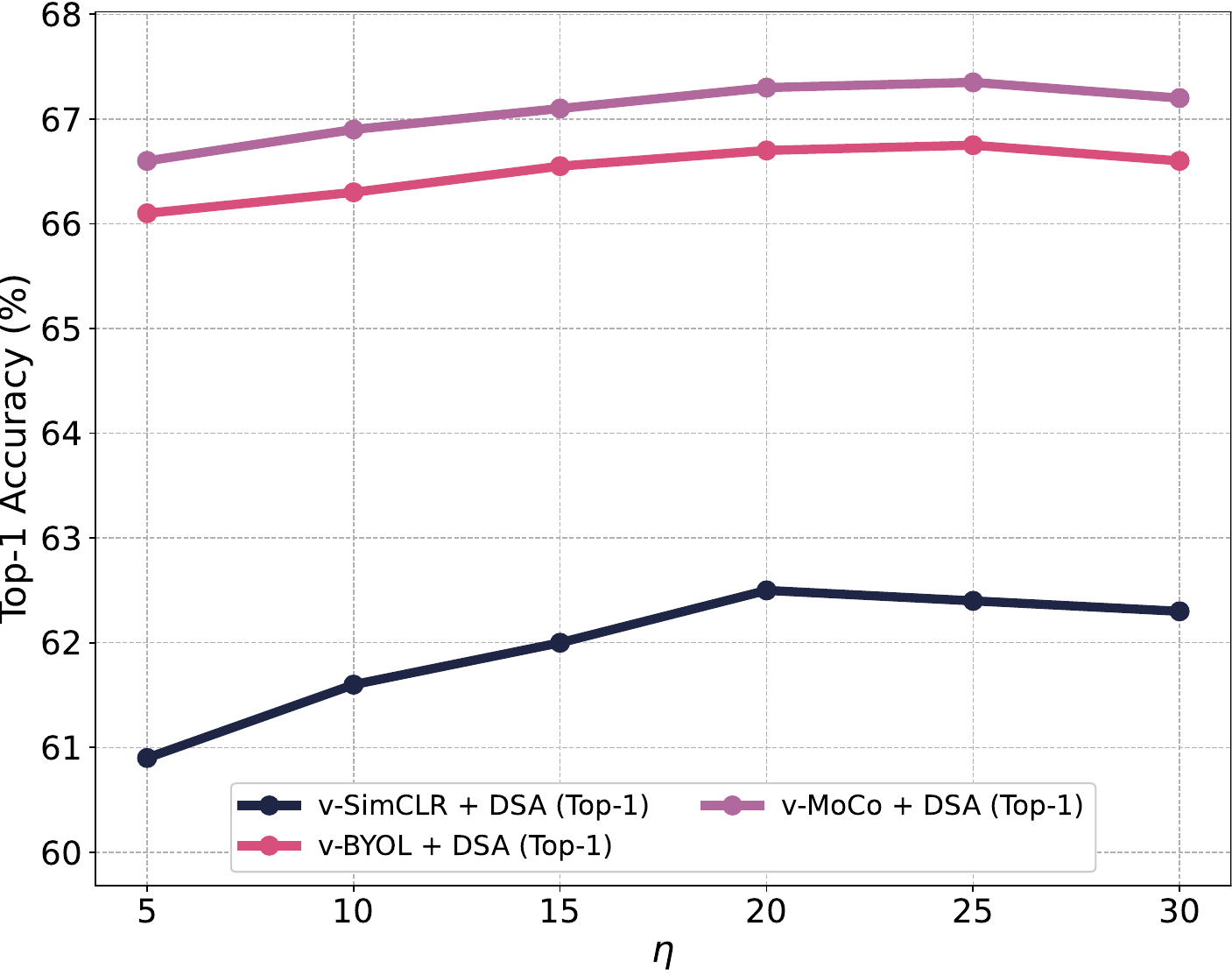}\label{fig:eta_k400}}
    \subfigure[$\tau$]{\includegraphics[width=0.3\textwidth]{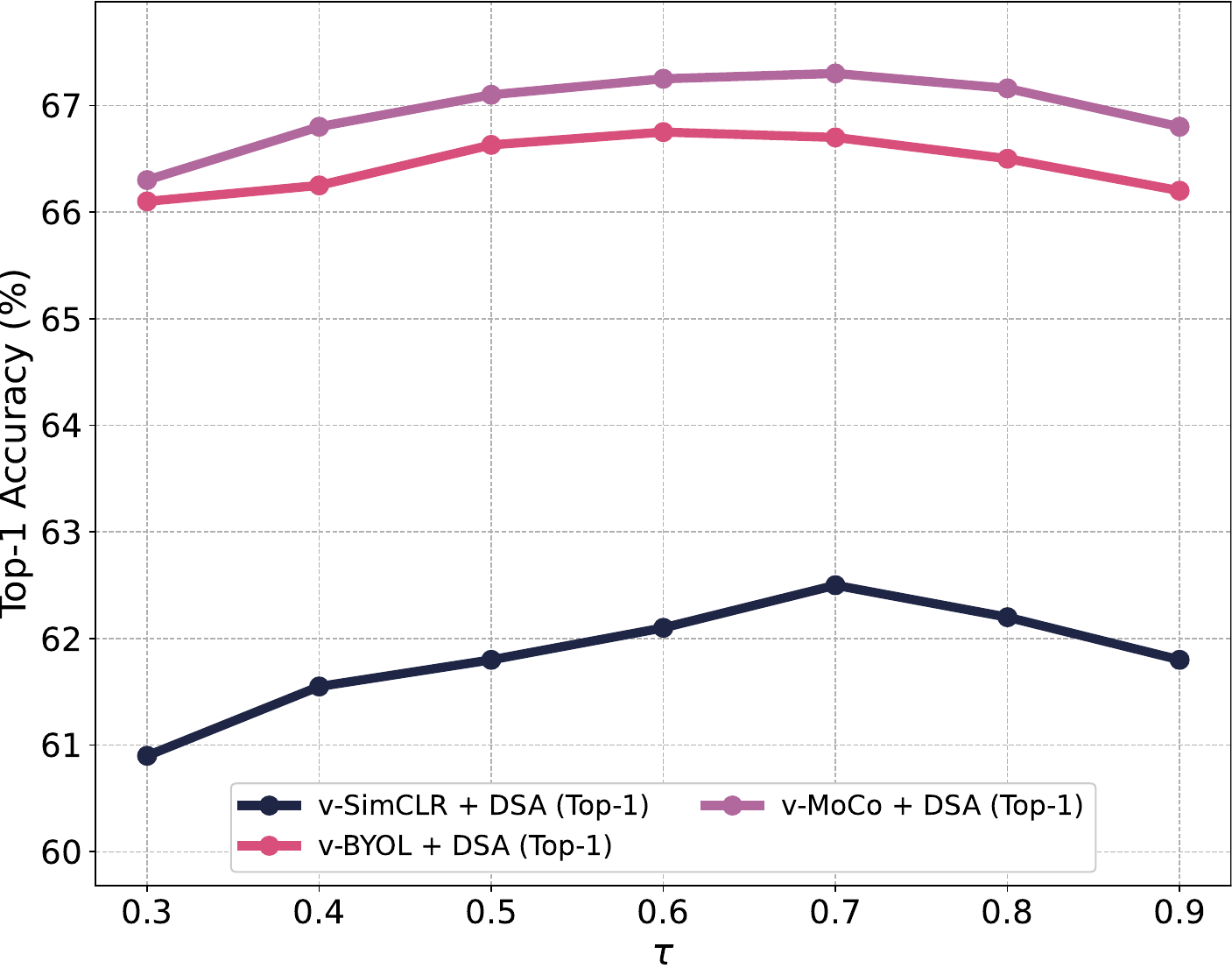}\label{fig:tau_k400}}

    \caption{{The Top-1 linear classification accuracies of Kinetics 400 correspond to different values of hyper-parameters $\nu$, $\upsilon$, $\alpha$, $\eta$, and $\tau$. }}
    \label{fig:hyperparam_3}
\end{figure*}

\begin{figure*}[htb]
    \centering
    \subfigure[$\nu$]{\includegraphics[width=0.3\textwidth]{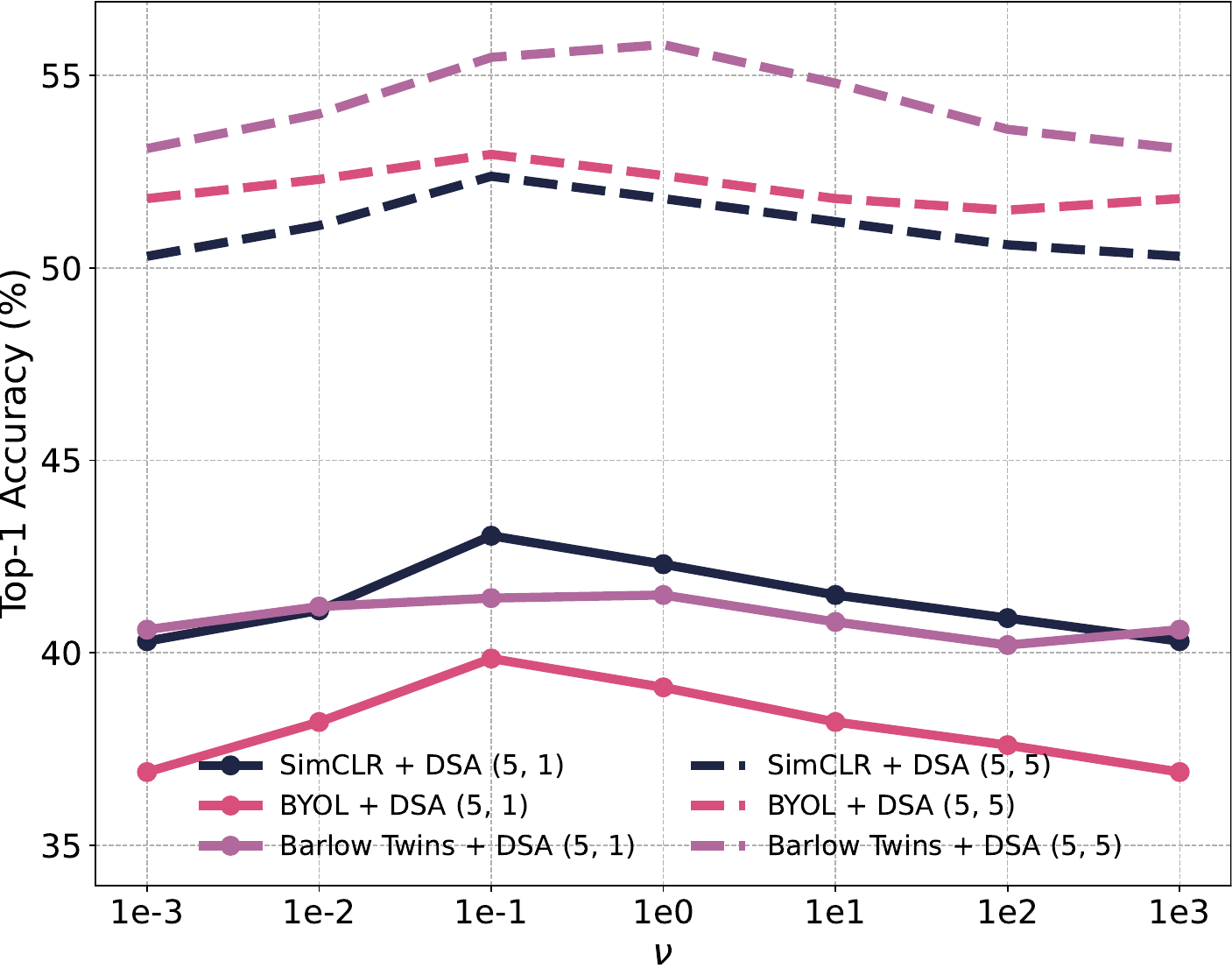}\label{fig:nu_fc100}}
    \subfigure[$\upsilon$]{\includegraphics[width=0.3\textwidth]{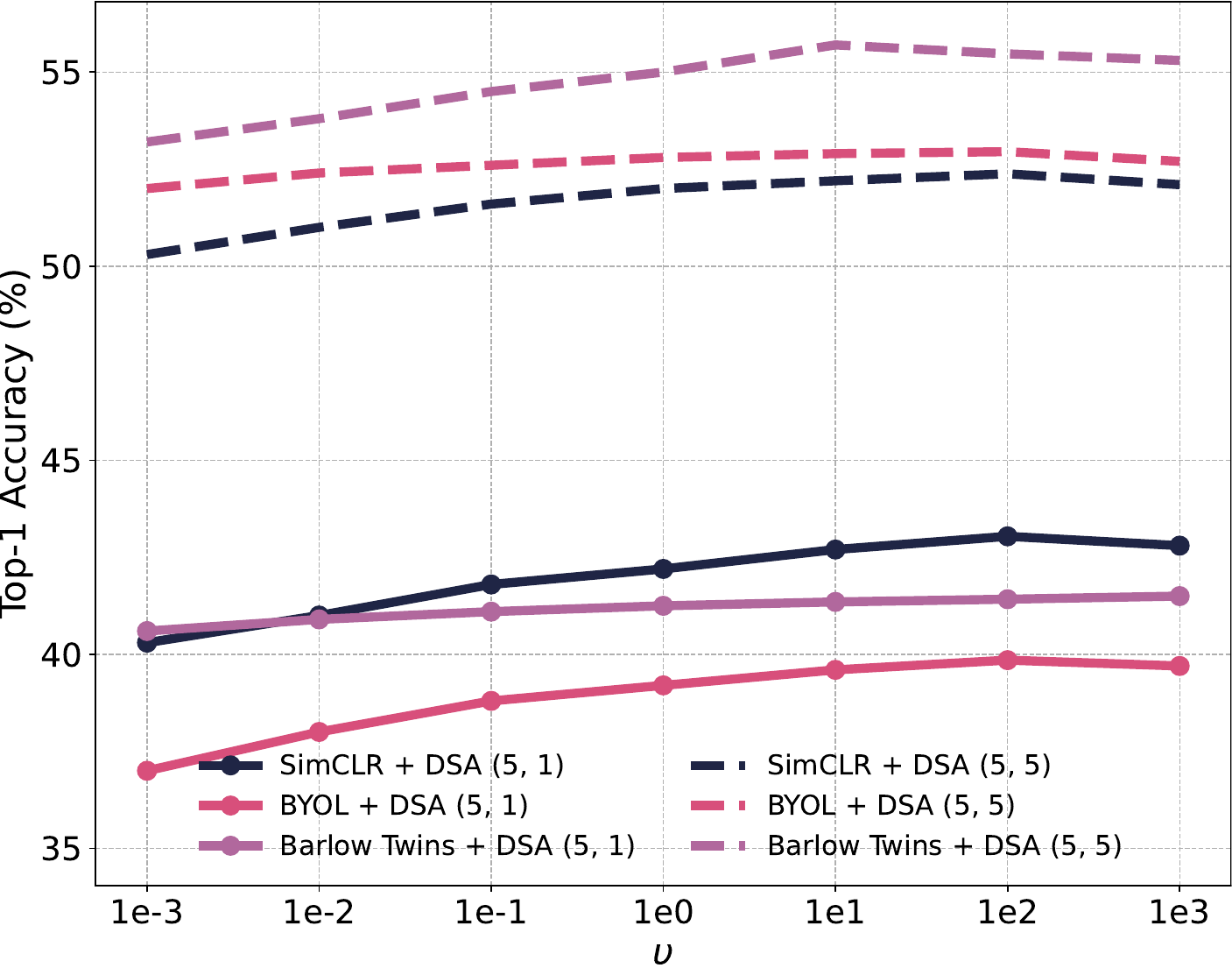}\label{fig:upsilon_fc100}}
    \subfigure[$\alpha$]{\includegraphics[width=0.3\textwidth]{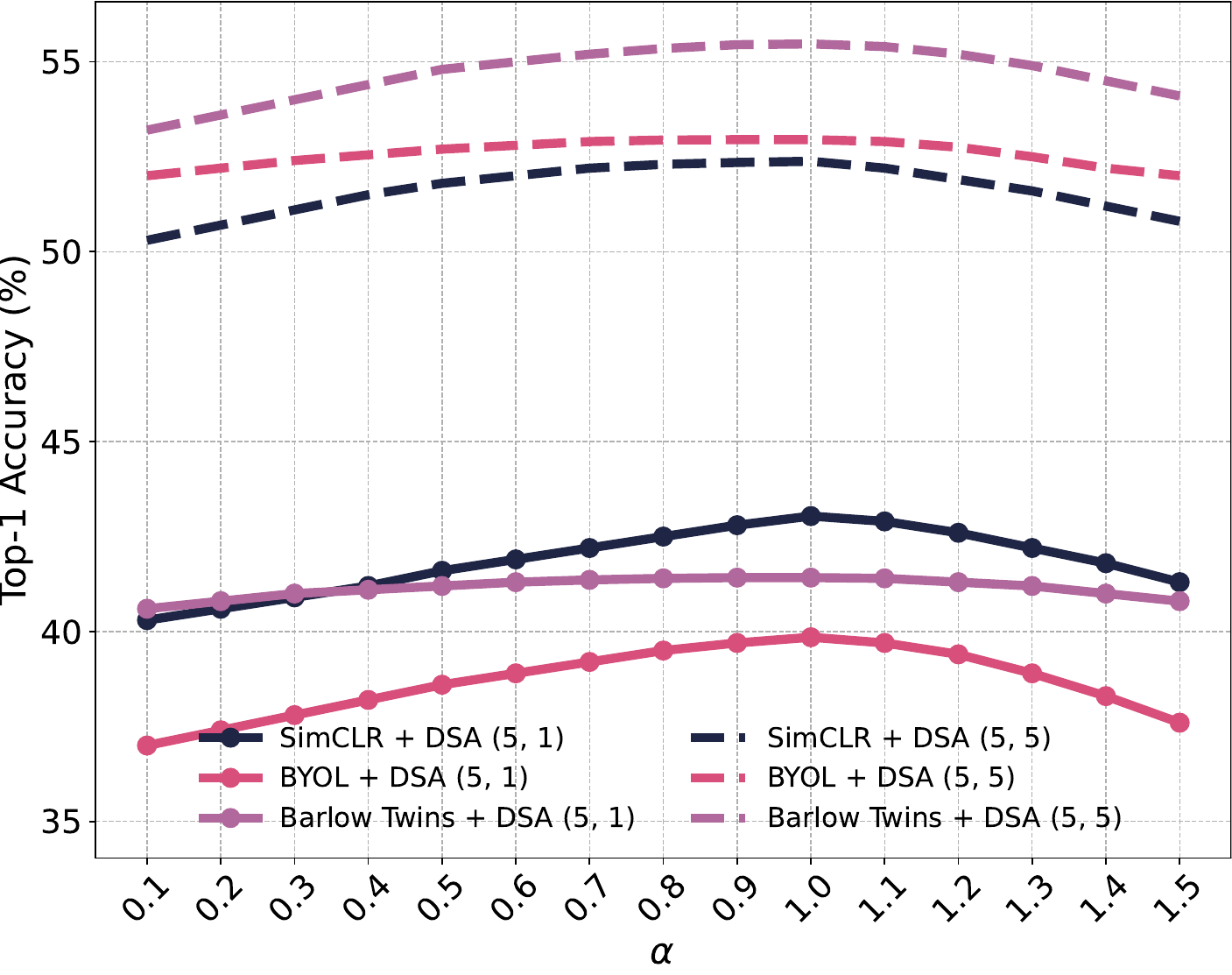}\label{fig:alpha_fc100}}
    \subfigure[$\eta$]{\includegraphics[width=0.3\textwidth]{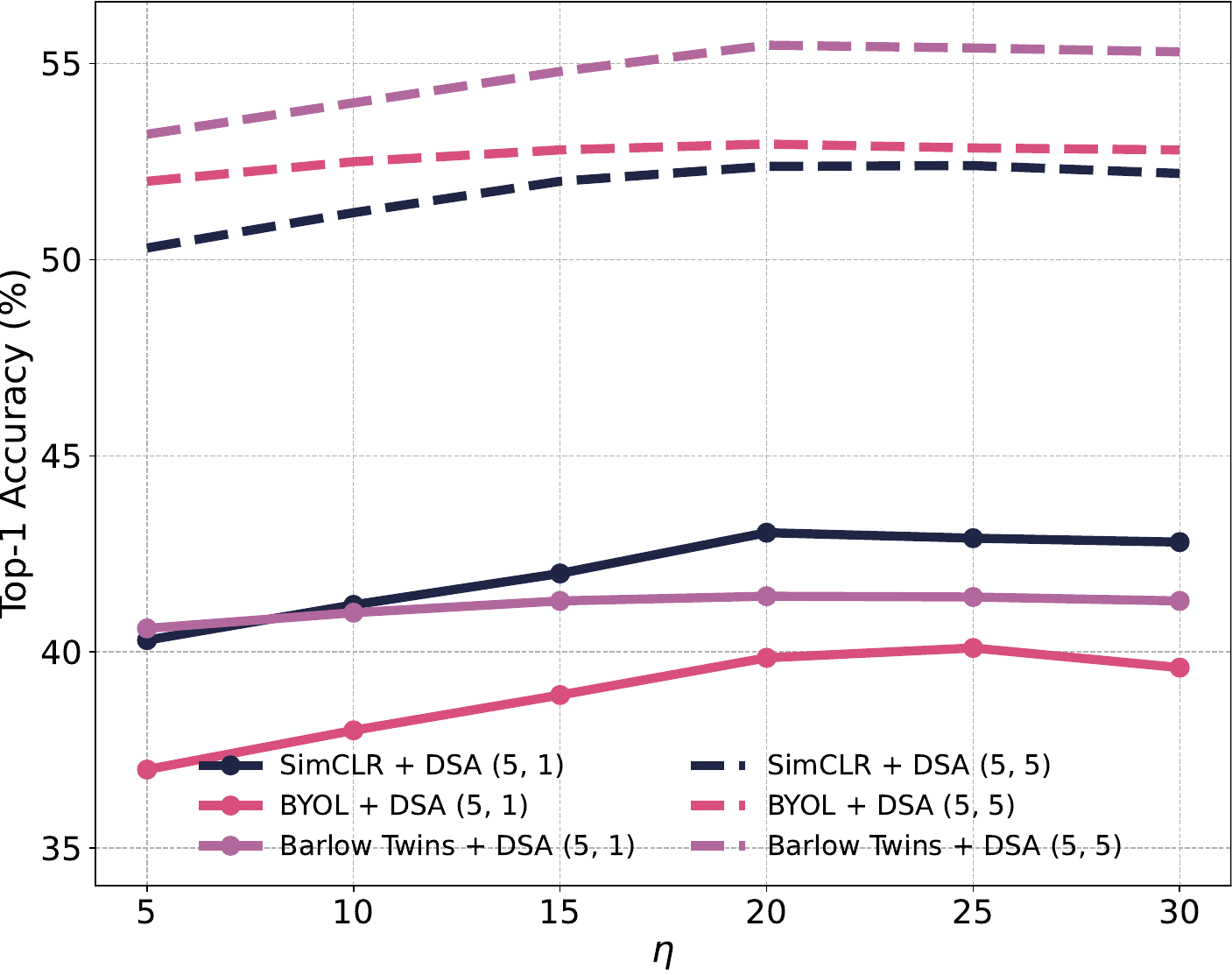}\label{fig:eta_fc100}}
    \subfigure[$\tau$]{\includegraphics[width=0.3\textwidth]{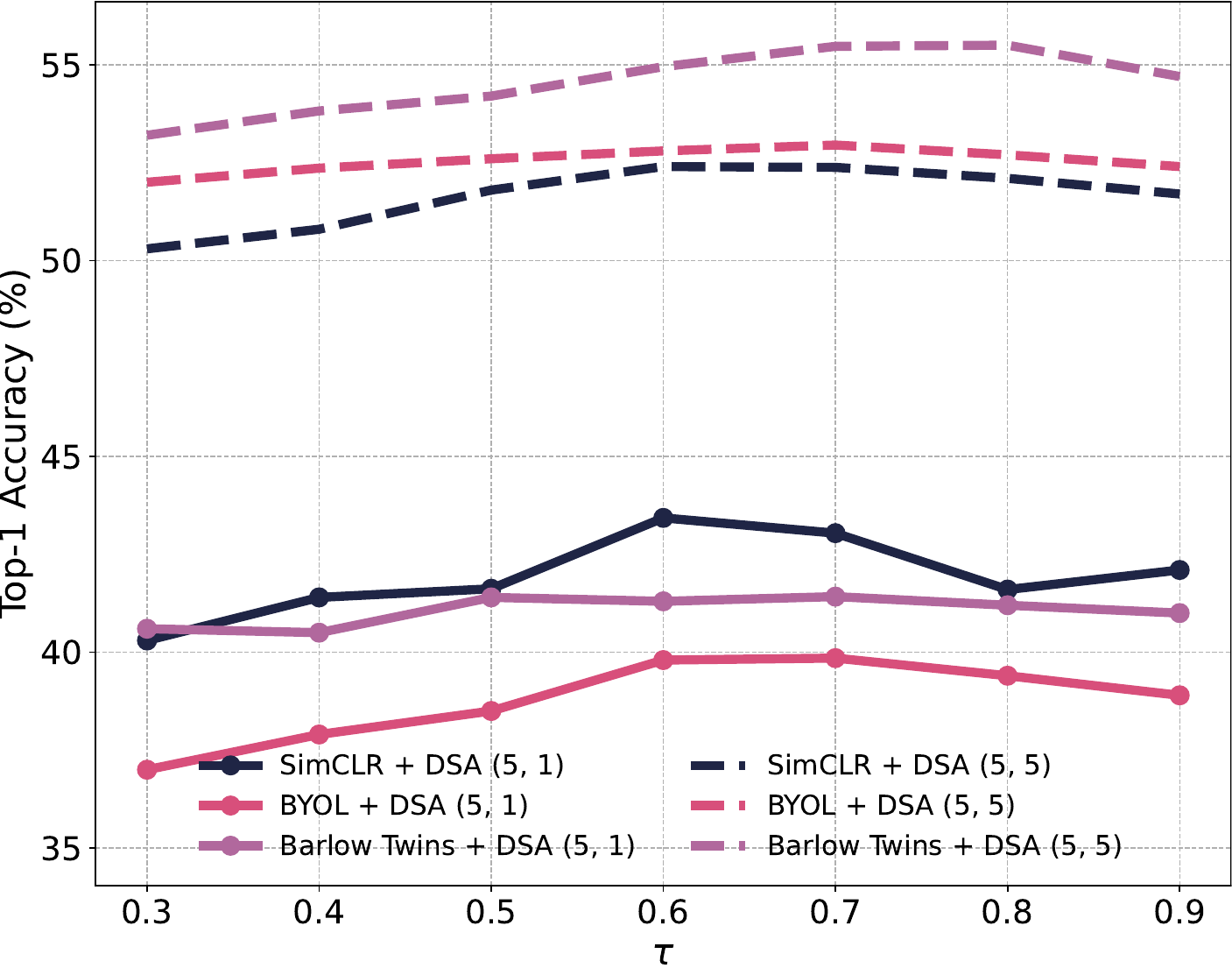}\label{fig:tau_fc100}}

    \caption{{The 5-way 1-shot and 5-way 5-shot accuracy correspond to different values of hyper-parameters $\nu$, $\upsilon$, $\alpha$, $\eta$, and $\tau$. The solid lines in figures (a) - (e) represent the 5-way 1-shot accuracies, while the dashed lines represent the 5-way 5-shot accuracies.}}
    \label{fig:hyperparam_4}
\end{figure*}

\begin{figure*}[htb]
    \centering
    \subfigure[$\nu$]{\includegraphics[width=0.3\textwidth]{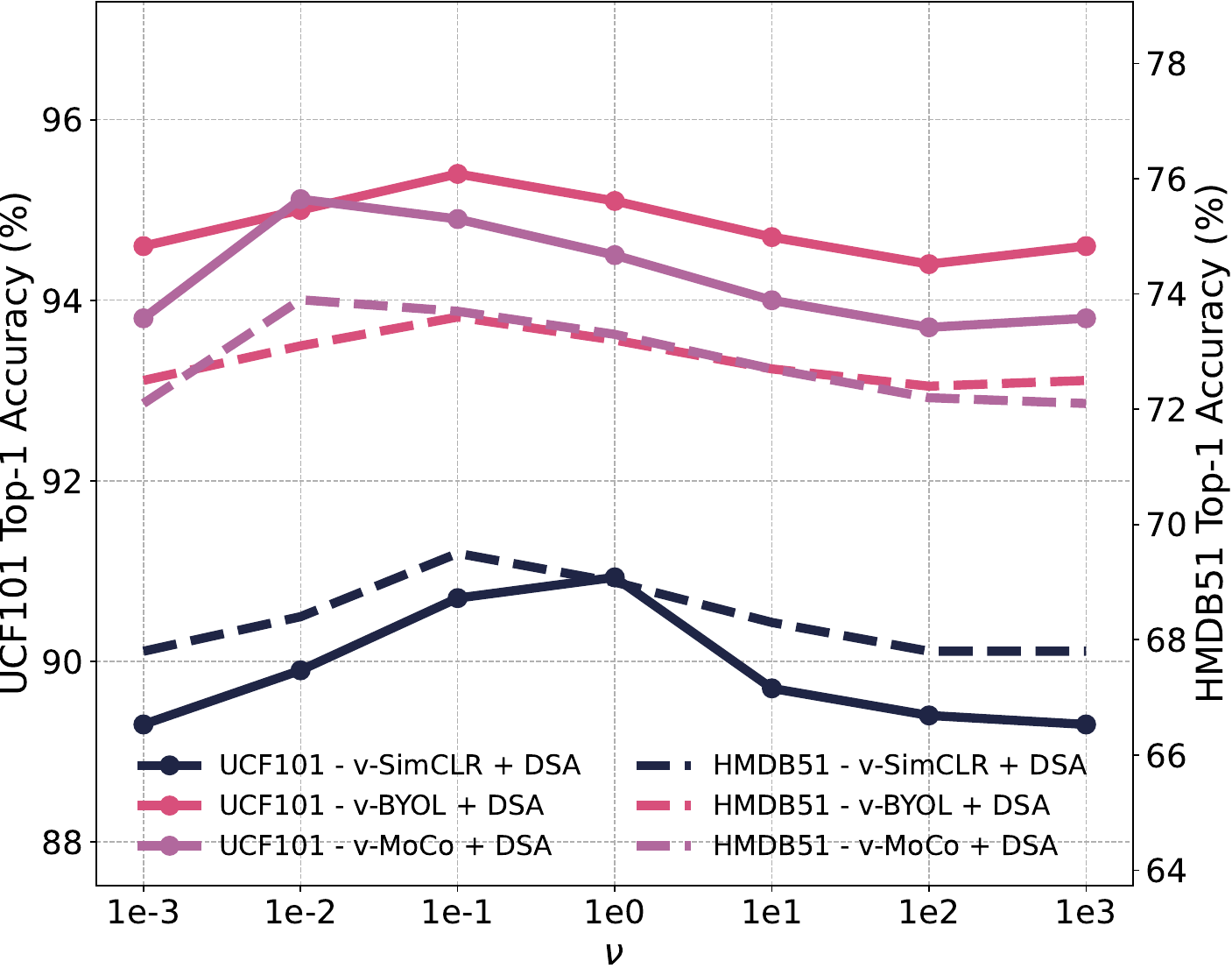}\label{fig:nu_ucfhmdb}}
    \subfigure[$\upsilon$]{\includegraphics[width=0.3\textwidth]{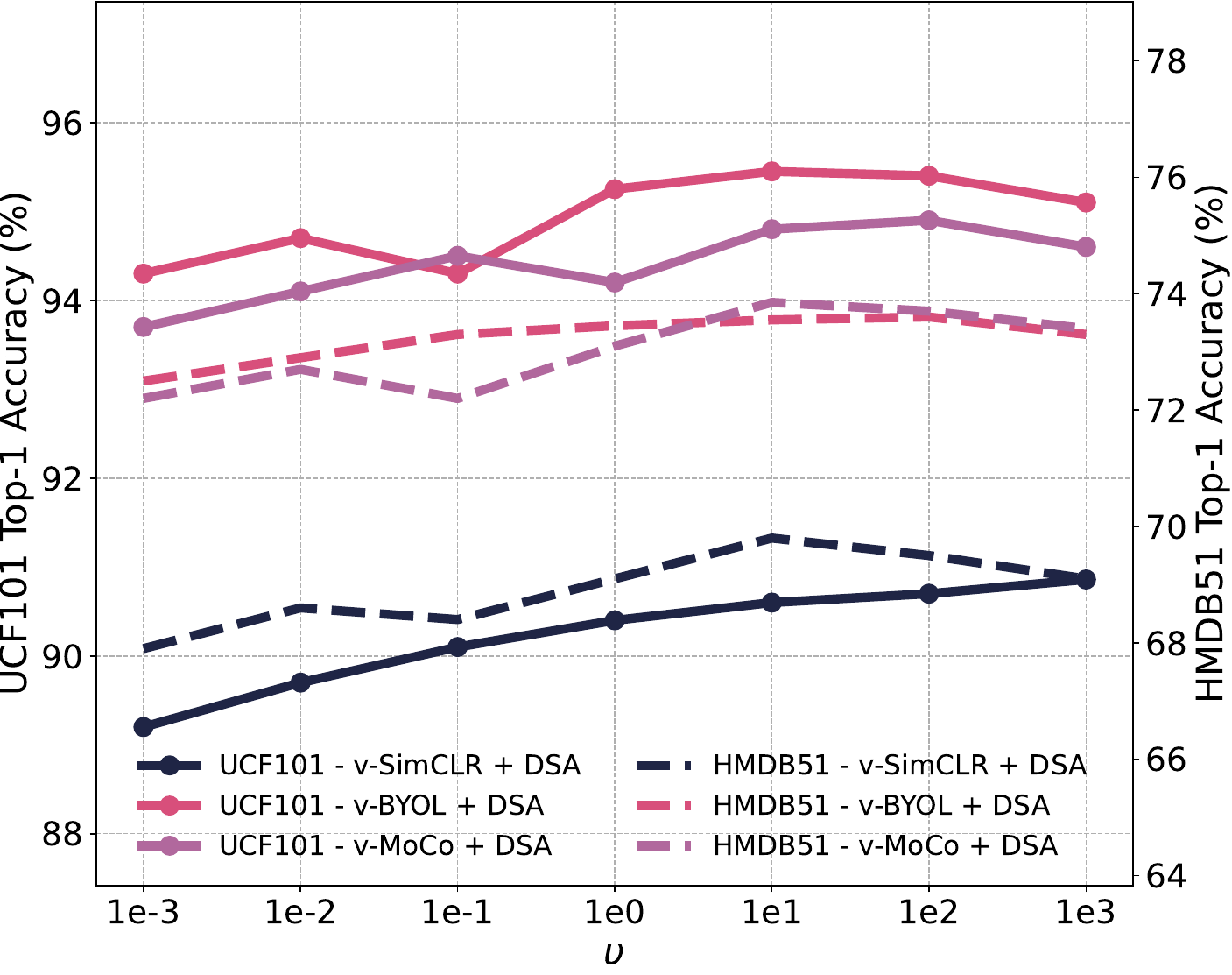}\label{fig:upsilon_ucfhmdb}}
    \subfigure[$\alpha$]{\includegraphics[width=0.3\textwidth]{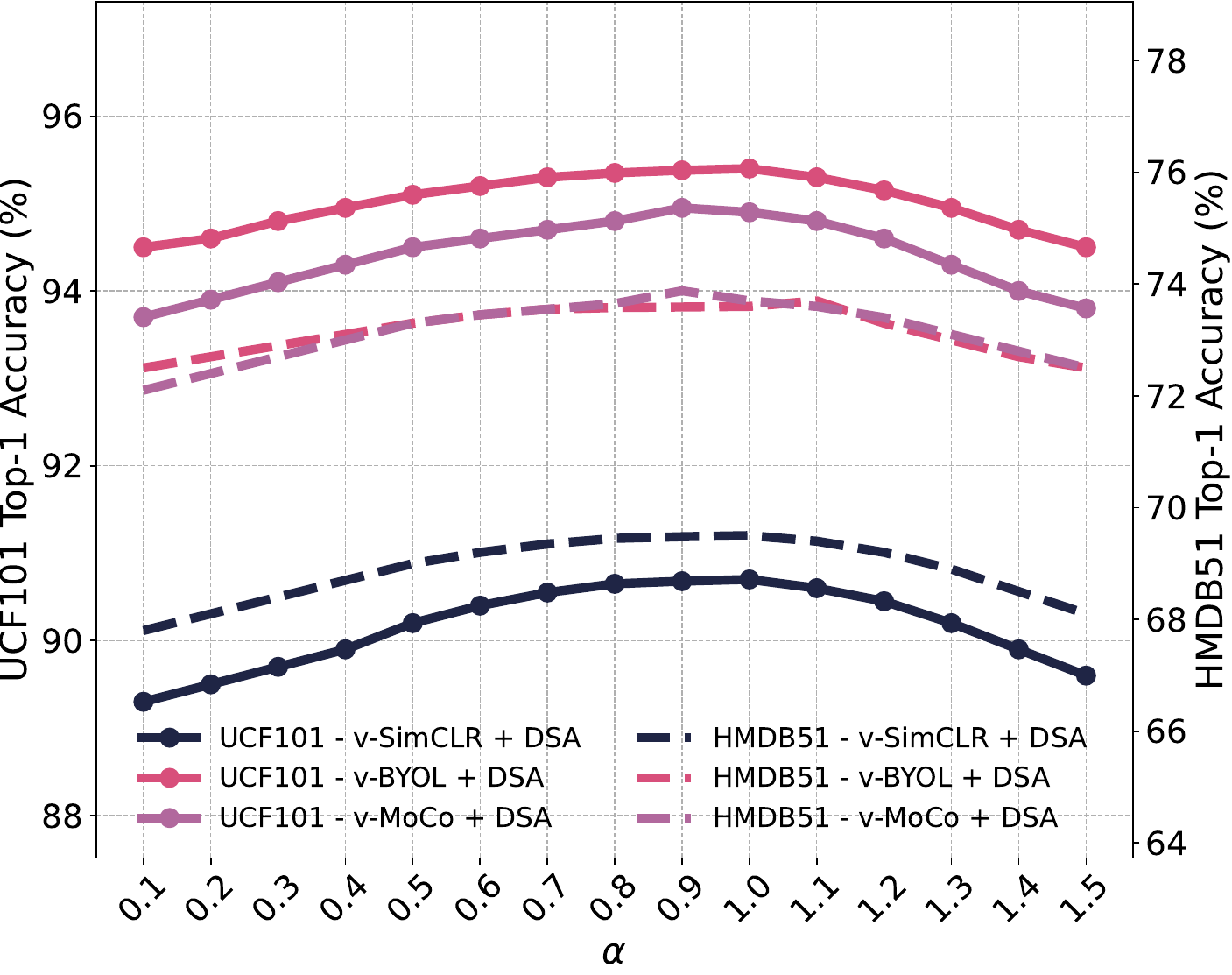}\label{fig:alpha_ucfhmdb}}
    \subfigure[$\eta$]{\includegraphics[width=0.3\textwidth]{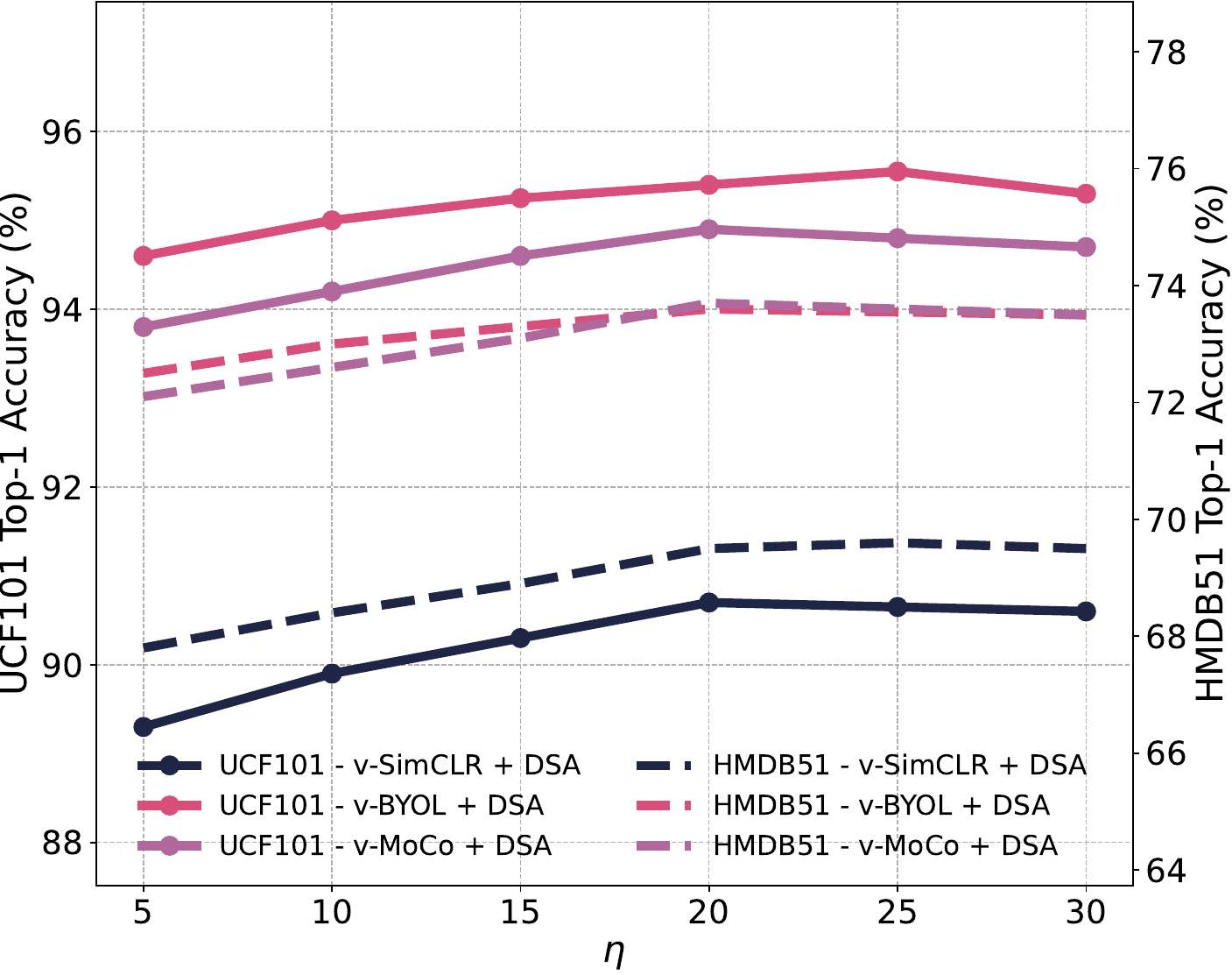}\label{fig:eta_ucfhmdb}}
    \subfigure[$\tau$]{\includegraphics[width=0.3\textwidth]{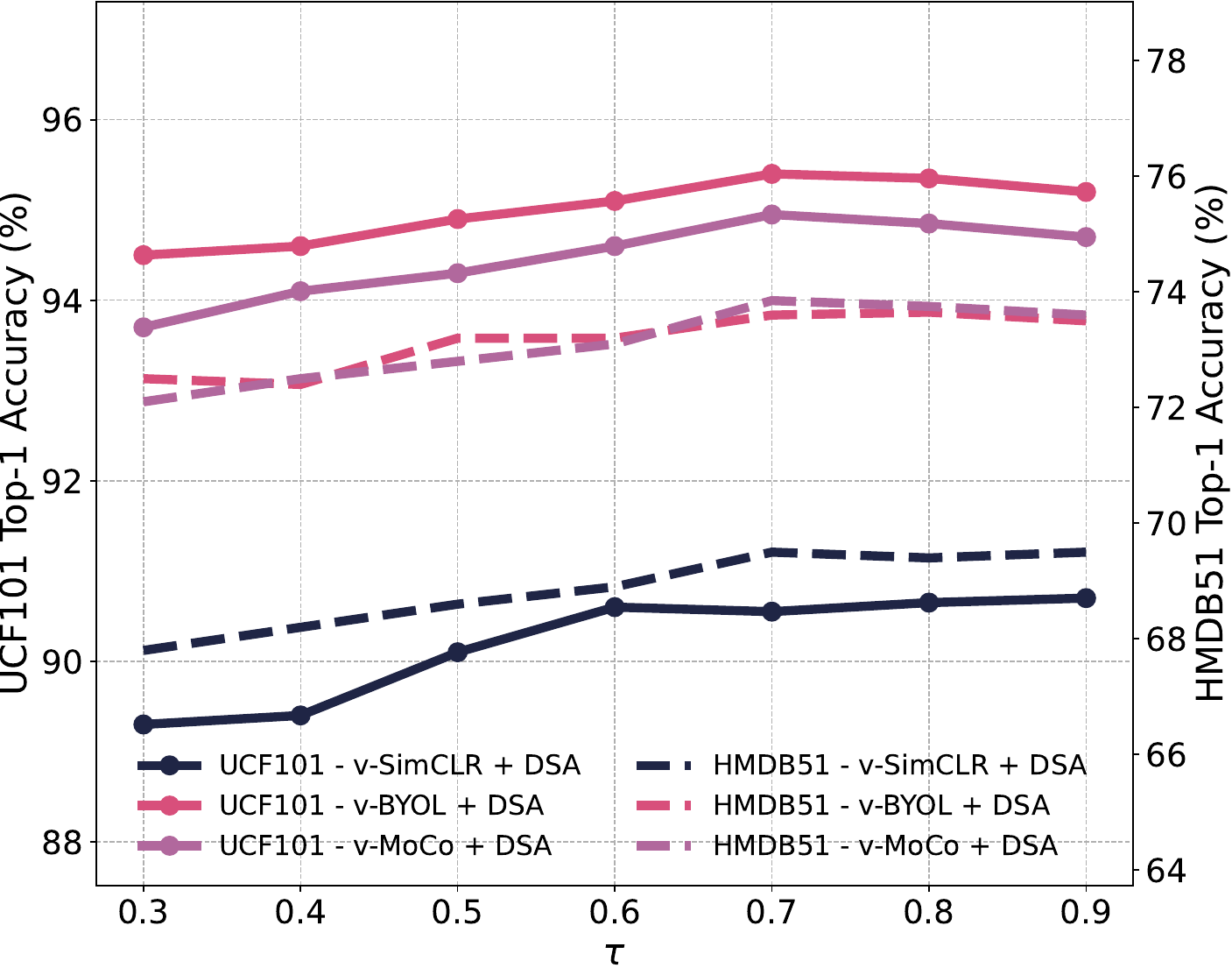}\label{fig:tau_ucfhmdb}}

    \caption{{The accuracies of UCF-101 and HMDB-51 datasets correspond to different values of hyper-parameters $\nu$, $\upsilon$, $\alpha$, $\eta$, and $\tau$. The solid lines in figures (a) - (e) represent the accuracies of UCF-101, while the dashed lines represent the accuracies of HMDB-51.}}
    \label{fig:hyperparam_5}
\end{figure*}






\end{document}